\pdfoutput=1
\RequirePackage[T1]{fontenc}
\documentclass[12pt]{article}
\usepackage[skip=1em plus 0.2em minus 0.1em]{parskip}
\usepackage[height=8.85in,width=6.45in]{geometry}

\usepackage[utf8]{inputenc}
\usepackage{amsmath}
\usepackage{amssymb}
\usepackage{mathtools}
\numberwithin{equation}{section}
\usepackage{slashed}
\usepackage{braket}
\usepackage{enumitem}
\usepackage[svgnames]{xcolor}
\usepackage[colorlinks,citecolor=DarkGreen,linkcolor=FireBrick,urlcolor=FireBrick,linktocpage,unicode]{hyperref}
\usepackage{booktabs}
\usepackage{multirow}
\usepackage{adjustbox}
\usepackage{tocloft}

\usepackage[hang,flushmargin,bottom]{footmisc}
\allowdisplaybreaks
\usepackage{graphicx}
\usepackage{tikz}
\usepackage{tikz-cd}
\usepackage{times}
 
\usepackage{bm}
\usepackage{physics}
\usepackage{xcolor}
\usepackage{natbib}
\usepackage{mdframed}
\usepackage{nicefrac}
\usepackage{booktabs}
\usepackage{lipsum}
\usepackage{titlesec}
\usepackage{wrapfig,lipsum,booktabs}
\usepackage{authblk}
\usepackage{blindtext}
\usepackage{subcaption}

\usepackage{algorithm}
\usepackage{algpseudocode}
\newcommand{\commentsymbol}{//}%
\algrenewcommand\algorithmiccomment[1]{\hfill {\footnotesize \commentsymbol{} #1}}

\usepackage{titletoc}
\usepackage{todonotes}
\usepackage{authblk}
\usepackage{setspace}
\usepackage{dsfont} %

\usepackage[nameinlink,capitalize,noabbrev]{cleveref}

\usepackage{CJK}

\usepackage{array}
\usepackage{bbm}
\usepackage{makecell}

\usepackage{dashbox}
\usepackage{xcolor}
\usepackage{colortbl}
\definecolor{lightyellow}{rgb}{1.0, 0.95, 0.7}
\definecolor{Blue}{rgb}{0, 0, 0.8}
\definecolor{blue}{rgb}{0,0,1}
\definecolor{darkgreen}{rgb}{0,0.40,0}
\definecolor{firebrick}{rgb}{0.698,0.133,0.133}

\definecolor{colorA}{rgb}{1,0,0}
\definecolor{colorB}{rgb}{0,0.3,1}
\definecolor{colorC}{rgb}{0.9,0.8,0.2}
\definecolor{colorD}{rgb}{0,0.65,0}
\definecolor{lesslightgray}{rgb}{0.5,0.5,0.5}
\definecolor{light-gray}{gray}{0.95}

\let\hat\widehat

\newcommand{\argmin}{\mathop{\mathrm{argmin}}}  
\newcommand{\argmax}{\mathop{\mathrm{argmax}}}

\def\P{\mathbb{P}}
\def\R{\mathbb{R}}

\DeclareMathOperator*{\E}{{\mathbb{E}}} %

\let\cite\citep 

\usepackage{amsthm}
\usepackage[many]{tcolorbox}
\usepackage{etoolbox}
\BeforeBeginEnvironment{proof}{\par\vspace{-1\baselineskip}}
\AfterEndEnvironment  {proof}{\par\vspace{-1.5\baselineskip}}

\newtheoremstyle{theoremstyle}
  {.5\baselineskip} %
  {.5\baselineskip} %
  {}                  %
  {}                  %
  {\bfseries}        %
  {.}                 %
  {1em}               %
  {}                  %

\theoremstyle{theoremstyle}
\newtheorem{theorem}{Theorem}[section]
\newtheorem{lemma}{Lemma}[section]

\newtheorem{proposition}{Proposition}[section]

\newtheorem{remark}{Remark}[section]

\tcolorboxenvironment{theorem}{
  breakable,
  colback=black!10,
  colframe=white,%
  width=\dimexpr\linewidth+10pt\relax,%
  enlarge left by=-5pt,%
  enlarge right by=-5pt,%
  boxsep=5pt,%
  boxrule=0pt,
  left=0pt,right=0pt,top=0pt,bottom=0pt,
  sharp corners,
  before skip=0.5\baselineskip, %
  after skip=0.5\baselineskip,  %
  fonttitle=\bfseries, %
  coltitle=black %
}
\tcolorboxenvironment{remark}{
  blanker,
  breakable,
  before skip=.8\baselineskip,  %
  after  skip=.8\baselineskip   %
}

\tcolorboxenvironment{proposition}{
  breakable,
  colback=black!10,
  colframe=white,%
  width=\dimexpr\linewidth+10pt\relax,%
  enlarge left by=-5pt,%
  enlarge right by=-5pt,%
  boxsep=5pt,%
  boxrule=0pt,
  left=0pt,right=0pt,top=0pt,bottom=0pt,
  sharp corners,
  before skip=0.5\baselineskip, %
  after skip=0.5\baselineskip,  %
  fonttitle=\bfseries, %
  coltitle=black %
}

\tcolorboxenvironment{lemma}{
  breakable,
  colback=black!10,
  colframe=white,%
  width=\dimexpr\linewidth+10pt\relax,%
  enlarge left by=-5pt,%
  enlarge right by=-5pt,%
  boxsep=5pt,%
  boxrule=0pt,
  left=0pt,right=0pt,top=0pt,bottom=0pt,
  sharp corners,
  before skip=0.5\baselineskip, %
  after skip=0.5\baselineskip,  %
  fonttitle=\bfseries, %
  coltitle=black %
}

\tcolorboxenvironment{corollary}{
  breakable,
  colback=black!10,
  colframe=white,%
  width=\dimexpr\linewidth+10pt\relax,%
  enlarge left by=-5pt,%
  enlarge right by=-5pt,%
  boxsep=5pt,%
  boxrule=0pt,
  left=0pt,right=0pt,top=0pt,bottom=0pt,
  sharp corners,
  before skip=0.5\baselineskip, %
  after skip=0.5\baselineskip,  %
  fonttitle=\bfseries, %
  coltitle=black %
}

\tcolorboxenvironment{definition}{
  breakable,
  colback=black!10,
  colframe=white,%
  width=\dimexpr\linewidth+10pt\relax,%
  enlarge left by=-5pt,%
  enlarge right by=-5pt,%
  boxsep=5pt,%
  boxrule=0pt,
  left=0pt,right=0pt,top=0pt,bottom=0pt,
  sharp corners,
  before skip=0.5\baselineskip, %
  after skip=0.5\baselineskip,  %
  fonttitle=\bfseries, %
  coltitle=black %
}
\tcolorboxenvironment{assumption}{
  breakable,
  colback=black!10,
  colframe=white,%
  width=\dimexpr\linewidth+10pt\relax,%
  enlarge left by=-5pt,%
  enlarge right by=-5pt,%
  boxsep=5pt,%
  boxrule=0pt,
  left=0pt,right=0pt,top=0pt,bottom=0pt,
  sharp corners,
  before skip=0.5\baselineskip, %
  after skip=0.5\baselineskip,  %
  fonttitle=\bfseries, %
  coltitle=black %
}

\crefname{theorem}{Theorem}{Theorems}
\crefname{proposition}{Proposition}{Propositions}
\crefname{lemma}{Lemma}{Lemmas}
\crefname{corollary}{Corollary}{Corollaries}
\crefname{definition}{Definition}{Definitions}
\crefname{assumption}{Assumption}{Assumptions}
\crefname{remark}{Remark}{Remarks}
\crefname{problem}{Problem}{Problems}
\crefname{property}{Property}{property}

\tcolorboxenvironment{hypothesis}{
  breakable,
  colback=black!10,
  colframe=white,%
  width=\dimexpr\linewidth+10pt\relax,%
  enlarge left by=-5pt,%
  enlarge right by=-5pt,%
  boxsep=5pt,%
  boxrule=0pt,
  left=0pt,right=0pt,top=0pt,bottom=0pt,
  sharp corners,
  before skip=0.5\baselineskip, %
  after skip=0.5\baselineskip,  %
  fonttitle=\bfseries, %
  coltitle=black %
}
\crefname{hypothesis}{Hypothesis}{Hypothesises}

\tcolorboxenvironment{fact}{
  breakable,
  colback=black!10,
  colframe=white,%
  width=\dimexpr\linewidth+10pt\relax,%
  enlarge left by=-5pt,%
  enlarge right by=-5pt,%
  boxsep=5pt,%
  boxrule=0pt,
  left=0pt,right=0pt,top=0pt,bottom=0pt,
  sharp corners,
  before skip=0.5\baselineskip, %
  after skip=0.5\baselineskip,  %
  fonttitle=\bfseries, %
  coltitle=black %
}
\crefname{fact}{Fact}{Facts}

\tcolorboxenvironment{example}{
  breakable,
  colback=black!10,
  colframe=white,%
  width=\dimexpr\linewidth+10pt\relax,%
  enlarge left by=-5pt,%
  enlarge right by=-5pt,%
  boxsep=5pt,%
  boxrule=0pt,
  left=0pt,right=0pt,top=0pt,bottom=0pt,
  sharp corners,
  before skip=0.5\baselineskip, %
  after skip=0.5\baselineskip,  %
  fonttitle=\bfseries, %
  coltitle=black %
}
\crefname{example}{Example}{Examples}

\tcolorboxenvironment{question}{
  breakable,
  colback=black!10,
  colframe=white,%
  width=\dimexpr\linewidth+10pt\relax,%
  enlarge left by=-5pt,%
  enlarge right by=-5pt,%
  boxsep=5pt,%
  boxrule=0pt,
  left=0pt,right=0pt,top=0pt,bottom=0pt,
  sharp corners,
  before skip=0.5\baselineskip, %
  after skip=0.5\baselineskip,  %
  fonttitle=\bfseries, %
  coltitle=black %
}

\crefname{question}{Question}{Questions}
\numberwithin{equation}{section}
\numberwithin{theorem}{section}
\numberwithin{proposition}{section}
\numberwithin{definition}{section}
\numberwithin{lemma}{section}
\numberwithin{assumption}{section}
\numberwithin{remark}{section}

\usepackage{lipsum}

\newcommand*{\annot}[1]{\tag*{\footnotesize{\textcolor{black!50}{\big(#1\big)}}}}

\makeatletter
\let\save@mathaccent\mathaccent
\newcommand*\if@single[3]{%
    \setbox0\hbox{${\mathaccent"0362{#1}}^H$}%
    \setbox2\hbox{${\mathaccent"0362{\kern0pt#1}}^H$}%
    \ifdim\ht0=\ht2 #3\else #2\fi
}
\newcommand*\rel@kern[1]{\kern#1\dimexpr\macc@kerna}
\newcommand*\widebar[1]{\@ifnextchar^{{\wide@bar{#1}{0}}}{\wide@bar{#1}{1}}}
\newcommand*\wide@bar[2]{\if@single{#1}{\wide@bar@{#1}{#2}{1}}{\wide@bar@{#1}{#2}{2}}}
\newcommand*\wide@bar@[3]{%
    \begingroup
    \def\mathaccent##1##2{%
        \let\mathaccent\save@mathaccent
        \if#32 \let\macc@nucleus\first@char \fi
        \setbox\z@\hbox{$\macc@style{\macc@nucleus}_{}$}%
        \setbox\tw@\hbox{$\macc@style{\macc@nucleus}{}_{}$}%
        \dimen@\wd\tw@
        \advance\dimen@-\wd\z@
        \divide\dimen@ 3
        \@tempdima\wd\tw@
        \advance\@tempdima-\scriptspace
        \divide\@tempdima 10
        \advance\dimen@-\@tempdima
        \ifdim\dimen@>\z@ \dimen@0pt\fi
        \rel@kern{0.6}\kern-\dimen@
        \if#31
        \overline{\rel@kern{-0.6}\kern\dimen@\macc@nucleus\rel@kern{0.4}\kern\dimen@}%
        \advance\dimen@0.4\dimexpr\macc@kerna
        \let\final@kern#2%
        \ifdim\dimen@<\z@ \let\final@kern1\fi
        \if\final@kern1 \kern-\dimen@\fi
        \else
        \overline{\rel@kern{-0.6}\kern\dimen@#1}%
        \fi
    }%
    \macc@depth\@ne
    \let\math@bgroup\@empty \let\math@egroup\macc@set@skewchar
    \mathsurround\z@ \frozen@everymath{\mathgroup\macc@group\relax}%
    \macc@set@skewchar\relax
    \let\mathaccentV\macc@nested@a
    \if#31
    \macc@nested@a\relax111{#1}%
    \else
    \def\gobble@till@marker##1\endmarker{}%
    \futurelet\first@char\gobble@till@marker#1\endmarker
    \ifcat\noexpand\first@char A\else
    \def\first@char{}%
    \fi
    \macc@nested@a\relax111{\first@char}%
    \fi
    \endgroup
    }
\makeatother
\let\bar\widebar

\definecolor{blue}{named}{black}

\setlist[itemize,enumerate]{
  parsep=\parskip,                                   %
  itemsep=\dimexpr .3em - \parskip\relax plus 2pt,   %
  topsep=\dimexpr 6pt - \parskip\relax plus 1pt minus 1pt,
  partopsep=0pt,
  listparindent=\parindent
}

\begin{document}
\begin{titlepage}

\begin{flushright}
Last Update: \today
\end{flushright}

\vskip 3em
\begin{center}

{
\LARGE \bfseries %
\begin{spacing}{1.15} %
Doob-Based Generative Policy Optimization  %
\end{spacing}
}

\vspace{1.2em}
{\normalsize
\text{
Maojiang Su$^{\dagger,\natural}$
\hspace{0.7em}
Qijie Zhu$^{\ddag}$
\hspace{0.7em}
Yu Xiao
$^{\dagger}$
\hspace{0.7em}
Shuyang Yu$^{\dagger}$
}
\\
\vspace{0.3em}
\text{
Minshuo Chen$^{\S}$
\hspace{0.7em}
Zhaoran Wang$^{\S,\dagger}$
\hspace{0.7em}
Han Liu$^{\dagger,\ddag,\natural}$
}
\\

\vspace{1.8em}
\begin{minipage}{0.75\textwidth}
\raggedright
\small
$^{\dagger\;}$Department of Computer Science,
Northwestern University
\\
$^{\ddag\;}$Department of Statistics and Data Science,
Northwestern University
\\
$^{\S\;}$Department of Industrial Engineering and Management Sciences,
\\
\hphantom{$^{\S\;}$}Northwestern University
\\
$^{\natural\;}$Center for Foundation Models and Generative AI,
Northwestern University
\end{minipage}

\def\thefootnote{*}
\footnotetext{
Project page:
\href{https://maojiangsu.github.io/DoobPO/}
{https://maojiangsu.github.io/DoobPO/}.
}}

\vskip 0.5cm

\end{center}

\noindent
Diffusion and flow models provide expressive policy classes for robotic and sequential decision-making systems.
However, their action likelihoods are generally intractable, limiting the direct use of widely adopted policy optimization methods such as PPO and GRPO. We propose Doob Policy Optimization (DoobPO), a framework that leverages  Doob's $h$-transform for likelihood-free reinforcement learning fine-tuning of generative policies. DoobPO separates policy improvement from generative policy learning. It first learns a lightweight ratio model that represents the desired policy update, and then distills the resulting ratio-tilted target policy into the generative policy via conditional drift matching. This formulation applies uniformly to both diffusion and flow policies and supports a broad family of policy improvement rules, including PPO and GRPO. 
We prove that the practical conditional drift matching objective is gradient-equivalent to the ideal Doob drift matching objective. 
We further show that the induced diffusion process solves an endpoint-constrained stochastic optimal control problem. This establishes that it achieves the minimum-energy modification required to realize the desired policy tilt.
Across MuJoCo continuous-control benchmarks, DoobPO achieves strong performance against competitive diffusion- and flow-policy baselines,  showing a principled and practical route to RL fine-tuning for generative policies.

\end{titlepage}

\section{Introduction}
\label{sec:intro}

Generative policies based on diffusion models
\cite{song2019generative,song2020score,ho2020denoising}
and flow matching models
\cite{lipman2022flow,liu2022flow}
have emerged as a powerful paradigm for policy learning.
Diffusion policies generate actions through iterative denoising, while flow policies generate actions by integrating a learned velocity field, often providing a deterministic and more efficient alternative.
Due to their flexibility and accuracy in modeling high-dimensional action distributions, generative policies have achieved strong empirical gains over deterministic or unimodal policies in challenging continuous-control settings
\cite{ke20243d,chi2025diffusion,huang2024diffuseloco,black2024pi,intelligence2025pi}.

A natural way to further improve generative policies is through reinforcement learning fine-tuning.
However, a central challenge is that the exact likelihoods of generative policies are generally intractable, since each action is produced only after simulating an entire denoising or flow trajectory.
This prevents the direct use of many stable and widely adopted policy optimization methods, such as PPO
\cite{schulman2017proximal}
and GRPO
\cite{shao2024deepseekmath}.
Several recent works have attempted to bypass this likelihood challenge
\cite{psenka2023learning,ding2024diffusion,ma2025efficient,ren2024diffusion,zhang2025reinflow,mcallister2025flow}.
However, these methods often suffer from biased estimation, limited data efficiency, substantial computational overhead, or restriction to a particular generative policy class or policy update rule.
Consequently, the full potential of RL for diffusion and flow policies remains underexplored.

To address this problem, we propose \textbf{Doob Policy Optimization (DoobPO)}, a likelihood-free framework that separates policy improvement from generative policy learning.
The first stage represents the desired policy improvement through a target policy ratio.
The second stage distills the resulting ratio-tilted target into the generative policy.
This decomposition avoids evaluating the likelihood of the generative policy itself.
The policy-improvement stage is agnostic to the underlying generative model and accommodates a broad family of updates, including regularized Policy Mirror Descent~\cite{zhan2023policy}, Soft Actor-Critic~\cite{haarnoja2018soft}, PPO~\cite{schulman2017proximal}, and GRPO~\cite{shao2024deepseekmath}.
The generative stage uses Doob's $h$-transform to characterize the ratio-tilted target process and learns this process through Doob's Drift Matching.
The same construction applies to both diffusion and flow policies.

\textbf{Contributions.}
Our contributions are three-fold:
\begin{itemize}
    \item \textbf{Method.}
    We propose \textbf{Doob Policy Optimization (DoobPO)}, a unified framework for RL fine-tuning of generative policies without evaluating exact action likelihoods
    (\cref{sec:method}).
    DoobPO follows a two-stage design: it first obtains the target policy ratio and then transfers the resulting policy improvement into the generative policy through a tractable conditional drift matching objective.
    The framework supports both diffusion and flow policies
    (\cref{sec:diff_dr_matching,sec:flow_dr_matching}).
    We summarize
    the procedures in
    \cref{alg:rwdm_ppo,alg:rwdm_ppo_fm}.
    Instantiating DoobPO with different policy updates yields
    \textbf{DoobPMD}, \textbf{DoobPPO}, and \textbf{DoobGRPO}.
    We further show that several existing methods arise as special cases under particular choices of the policy update and generative policy class
    (\cref{sec:unified_framework}).
    We summarize these instantiations in \Cref{tab:unified_framework}.

    \item \textbf{Theory.}
    We characterize the ratio-tilted target policy through Doob's $h$-transform
    (\cref{lem:algo_correctness}).
    This characterization leads to Doob's Drift Matching as a principled way to realize the desired policy update.
    For tractability, we introduce a conditional drift matching objective and prove that its gradient is proportional to that of the ideal Doob's Drift Matching objective for both diffusion and flow policies
    (\cref{thm:grad_equi_diff,thm:grad_equi_fm}).
    From the Stochastic Optimal Control perspective
    (\cref{sec:soc}),
    we further show that the Doob-transformed diffusion solves the corresponding endpoint-constrained control problem
    (\cref{prop:cdm_solves_soc}).
    Hence, DoobPO learns the minimum-energy modification of the pretrained diffusion process that realizes the terminal tilt
    (\cref{rem:ratio_h_interpretation}).

    \item \textbf{Experiments.}
    We evaluate DoobPO on MuJoCo continuous-control benchmarks with both diffusion and flow policies
    (\cref{sec:exp}).
    For diffusion policies, DoobPO achieves the strongest overall performance against competitive diffusion-policy RL baselines, with leading results on most tasks
    (\cref{sec:diffusion_exp}).
    For flow policies, DoobGRPO achieves the best performance on 9 out of 10 tasks under the same interaction budget and delivers large gains on challenging locomotion environments
    (\cref{sec:flow_exp}).
    These results establish DoobPO as a strong and practical framework for RL fine-tuning across both diffusion and flow policies.
\end{itemize}

\textbf{Notation.}
We summarize notation and time conventions in \cref{sec:notation}.

\textbf{Related Work.}
We defer an extended discussion of related work to \cref{sec:related_works} due to page limits.

\section{Preliminaries}
\label{sec:background}

\textbf{Reinforcement Learning.}
We formulate the task by defining a Markov Decision Process (MDP) \cite{sutton1998reinforcement} with a tuple $(\mathcal{S}, \mathcal{A}, P, R, \gamma)$.
Here, $\mathcal{S}$ and $\mathcal{A}$ denote the state and action spaces.
We use $o \in \mathcal{S}$ to denote a state and $a \in \mathcal{A}$ to denote an action.
The transition kernel is denoted by $P(o' | o,a)$, the reward function by $R(o,a)$,
and $\gamma \in [0,1)$ is the discount factor.
A policy $\pi(a | o)$ induces a distribution over trajectories
$\tau = (o_0,a_0,o_1,a_1,\ldots)$, where $a_t \sim \pi(\cdot | o_t)$
and $o_{t+1} \sim P(\cdot | o_t,a_t)$.
The standard objective of RL is to maximize
\begin{align}
\label{eq:rl_objective_prelim}
    J(\theta)
    =
    \E_{\tau \sim d_{\pi_\theta}(\tau)}
    \Big[
    \sum_{t \ge 0} \gamma^t R(o_t,a_t)
\Big],
\end{align}
where $d_{\pi_\theta}(\tau)$ is the distribution of trajectories generated by the policy $\pi_\theta$ and the transition kernel.
Policy gradient methods (e.g., REINFORCE \cite{williams1992simple}) update $\theta$ by ascending an estimate of $\nabla_\theta J(\theta)$,
\begin{align}
\label{eq:policy_gradient}
    \nabla_\theta J(\theta)
    =
    \E_{\tau \sim d_{\pi_\theta}(\tau)}
    \Big[\sum_{t\ge 0} \gamma^t
        \nabla_\theta \log \pi_\theta(a_t | o_t)
        \sum_{k\ge t}\gamma^{k-t} R(o_k,a_k)
    \Big],
\end{align}
where $\nabla_\theta \log \pi_\theta(a_t | o_t)$ is the score function of the policy.
To reduce variance, we often replace the cumulative future reward $\sum_{k\ge t}\gamma^{k-t} R(o_k,a_k)$ with an advantage estimate $A(o_t,a_t)$.

In practice, directly optimizing \eqref{eq:policy_gradient}
may be unstable due to overly large policy updates.
Trust-region style policy gradient methods \cite{schulman2015trust,schulman2017proximal,shao2024deepseekmath} instead constrain the policy change between successive iterates.
A popular example is Proximal Policy Optimization (PPO) \cite{schulman2017proximal}, which maximizes $J_{\mathrm{PPO}}(\theta)$
\begin{align}
    \E_{\tau \sim d_{\pi_{\text{old}}}(\tau)}
    \Big[
        \sum_{t\ge 0}
        \min\Big(
            r_\theta(o_t,a_t)\hat{A}_{\mathrm{old}}(o_t,a_t),
            \mathrm{clip}\big(r_\theta(o_t,a_t),1-\epsilon,
            1+\epsilon\big)\hat{A}_{\mathrm{old}}(o_t,a_t)
        \Big)
    \Big],
    \label{eq:ppo}
\end{align}
where $\pi_{\mathrm{old}}$ is the behavior policy, $r_\theta(o,a):=\pi_\theta(a|o)/\pi_{\mathrm{old}}(a|o)$ is the probability ratio, $\epsilon>0$ is the clipping parameter, and $\hat{A}_{\mathrm{old}}(o,a)$ is an advantage estimator.
The clipped surrogate upweights actions that are estimated to be better than baseline and limits the policy update away from $\pi_{\mathrm{old}}$.

\begin{table*}[t]
\centering
\caption{\small Doob Policy Optimization}
\label{tab:unified_framework}
\resizebox{0.8\linewidth}{!}{%
\begin{tabular}{l|ccc}
\toprule
Method
& Generative policy
& Policy update method
& Ratio
\\
\midrule
DPMD \cite{ma2025efficient}
& Diffusion
& Policy Mirror Descent (KL-proximal)
& Analytic
\\

SDAC \cite{ma2025efficient}
& Diffusion
& Soft Actor Critic
& Analytic
\\

DoobPMD (ours)
& Flow
& Policy Mirror Descent (KL-proximal)
& Analytic
\\

DoobPPO (ours)
& Diffusion / Flow
& Proximal Policy Optimization
& Learned
\\

DoobGRPO (ours)
& Diffusion / Flow
& Group Relative Policy Optimization
& Learned
\\
\bottomrule
\end{tabular}}
\end{table*}

\textbf{Diffusion Models and Score Matching.}
Let $p_0$ denote the data distribution.
Diffusion models define a forward noising process
$(Y_t)_{t \in [0,T]}$ that gradually transforms data
into noise through the Stochastic Differential Equation
(SDE) \cite{song2020score}
\begin{align}
\label{eq:forward_sde}
    \dd Y_t
    =
    f(Y_t,t)\dd t
    +
    \sigma_t\dd W_t,
    \quad
    Y_0 \sim p_0,
    \quad
    t \in [0,T],
\end{align}
where $f$ and $\sigma_t$ are pre-designed drift and
diffusion coefficients, and $W_t$ is a standard
Brownian motion.
We denote by $p_t$ the marginal distribution of $Y_t$.
The generative process is the reverse-time denoising
process $(X_t)_{t \in [0,T]}$, evolving from $t=T$
to $t=0$.
It transports noise back to data, with
$X_T \sim p_T$ and $X_0 \sim p_0$.
Its time-$t$ marginal is $p_t$, and the process
is governed by the reverse-time SDE
\begin{align}
\label{eq:reverse_sde}
    \dd X_t
    =
    \bigl[
        f(X_t,t)
        -
        \sigma_t^2 \nabla_x \log p_t(X_t)
    \bigr]\dd t
    +
    \sigma_t \dd \overline W_t,
    \quad
    X_T \sim p_T,
    \quad
    t:T \to 0.
\end{align}
Here, $\overline W_t$ denotes a Wiener process in
the reverse-time direction.

Directly using \eqref{eq:reverse_sde} is difficult
in practice because the noise distribution $p_T$
and the true score $\nabla_x \log p_t$ are not
known explicitly.
Therefore, \eqref{eq:reverse_sde} is implemented
through two approximations.
First, the noise distribution $p_T$ is approximated
by a standard Gaussian distribution.
Second, the ground-truth score function
$s(x,t) := \nabla_x \log p_t(x)$
is replaced by a parameterized estimator
$s_\phi(x,t)$.
The score estimator is learned by minimizing
the score matching loss
\begin{align}
\label{eq:score_matching}
    \mathcal{L}_{\mathrm{SM}}(\phi)
    =
    \E_{t, Y_t \sim p_t}
    \bigl[
        \|s(Y_t,t) - s_\phi(Y_t,t)\|_2^2
    \bigr].
\end{align}

\textbf{Flow Matching.}
Following the notation in \cite{lipman2024flow}, let $p_0$ denote the noise distribution and $p_T$ the data distribution (opposite to the diffusion convention).
Flow matching constructs a deterministic transport from $p_0$ to $p_T$ via the Ordinary Differential Equation (ODE)
\begin{align}
\label{eq:fm_ode}
    \dd X_t = u(X_t,t)\dd t,
\end{align}
where $u(x,t)$ is the velocity field and $p_t$ denotes the marginal probability density function of $X_t$.
Given a reference velocity $u(x,t)$ that induces the desired path $\{p_t\}_{t\in[0,T]}$, flow matching learns a parameterized velocity $u_\phi(x,t)$ by minimizing
the flow matching loss
\begin{align}
\label{eq:flow_matching}
    \mathcal{L}_{\mathrm{FM}}(\phi)
    =
    \E_{t, X_t \sim p_t}
    \Big[\big\| u(X_t,t) - u_\phi(X_t,t)\big\|_2^2\Big].
\end{align}
After training, new samples are generated by simulating the ODE \eqref{eq:fm_ode} from $t=0$ to $t=T$, replacing the unknown velocity $u(x,t)$ with its learned estimator $u_\phi(x,t)$.

\textbf{Diffusion Policy and Flow Policy.}
A diffusion policy \cite{chi2025diffusion} is a policy induced by a diffusion model.
Let $s_\phi(x,t)$ be a parameterized score network, and let
$\{p_t^\phi\}_{t\in[0,T]}$ denote the marginal distributions obtained by
simulating the reverse-time SDE \eqref{eq:reverse_sde} with the score
$\nabla_x \log p_t(x)$ replaced by $s_\phi(x,t)$.
The diffusion policy $\pi_\phi$ is defined as the distribution over actions induced at the terminal time of the denoising process,
$\pi_\phi(\cdot|o) = p_0^\phi(\cdot|o)$.
A flow policy \cite{black2024pi,intelligence2025pi} is a policy induced by a flow matching model.
Let $u_\phi(x,t)$ be a parameterized velocity field, and let
$\{p_t^\phi\}_{t\in[0,T]}$ denote the marginals obtained by simulating the ODE \eqref{eq:fm_ode} with $u(x,t)$ replaced by $u_\phi(x,t)$.
The flow policy $\pi_\phi$ is defined as the action distribution at the terminal time of the flow, i.e.,
$\pi_\phi(\cdot|o)=p_T^\phi(\cdot|o)$.

For both diffusion and flow policies, evaluating the likelihood $\pi_\phi(a|o)$ is expensive.
As a result, the probability ratio
$\pi_\phi(a|o)/\pi_{\mathrm{old}}(a|o)$ in trust-region policy gradient methods (e.g., PPO, GRPO) is intractable.
This limits applying these methods to improve diffusion and flow policies.
DoobPO overcomes this obstacle and enables stable policy gradient style optimization for generative policies.

\section{Likelihood-Free Generative Policy Optimization}
\label{sec:method}

We introduce \textbf{Doob Policy Optimization (DoobPO)}, a general framework for RL fine-tuning of diffusion policies (\cref{sec:diff_dr_matching}) and flow policies (\cref{sec:flow_dr_matching}) without evaluating exact action likelihoods.
DoobPO supports a broad family of trust-region policy gradient methods, while \cref{sec:unified_framework} shows how both existing and proposed algorithms arise under the same framework.

\subsection{Diffusion Policy Optimization with DoobPO}
\label{sec:diff_dr_matching}

In this section, we introduce DoobPO for diffusion policies.
We start from a base diffusion model with score network $s_\phi(x,t)$, which induces a diffusion policy $\pi_\phi$.
Our goal is to optimize the policy objective \eqref{eq:rl_objective_prelim} over parameters $\theta$, starting from the pretrained initialization $\theta=\phi$.

\textbf{Ratio Learning.}
We first learn a probability ratio that captures the policy improvement signal induced by trust-region policy-gradient updates, including regularized Policy Mirror Descent \cite{zhan2023policy}, Soft Actor-Critic \cite{haarnoja2018soft}, PPO \cite{schulman2017proximal}, and GRPO \cite{shao2024deepseekmath}.
Specifically, we introduce a lightweight ratio network $r_\beta(o,a)$ to approximate the intractable likelihood ratio
$r_\theta(o,a)=\pi_\theta(a|o)/\pi_{\mathrm{old}}(a|o)$.
We train $r_\beta$ on trajectories collected under the behavior policy $\pi_{\mathrm{old}}$ by optimizing a clipped reinforcement learning objective $J_{\mathrm{RL}}$.
For example, under PPO, we use
$J_{\mathrm{RL}} = J_{\mathrm{PPO}}$
following \eqref{eq:ppo}, with $r_\theta$ replaced by $r_\beta$.
Intuitively, $r_\beta(o,a)>1$ upweights an action relative to $\pi_{\mathrm{old}}$, while $r_\beta(o,a)<1$ downweights it.
The clipping term enforces a conservative trust-region-style update.
In practice, we parameterize the ratio model as
$r_\beta(o,a)=\exp(\Tilde{r}_\beta(o,a))$ to ensure $r_\beta(o,a)>0$.

\textbf{Doob's Drift Matching.}
We now distill the policy improvement information encoded by the learned ratio $r_\beta$ into the score model.
We consider a diffusion policy, where the denoising SDE \eqref{eq:reverse_sde} is conditioned on the environment state $o$, and the score estimator takes the form $s_\phi(x,t,o)$.
This score model induces the behavior policy $\pi_{\mathrm{old}}(\cdot|o)$ as the terminal distribution of the denoising process.

Given the learned ratio $r_\beta(o,a)$, we would like to update the policy toward the ratio-tilted distribution
$\pi(\cdot|o)\propto r_\beta(o,\cdot)\pi_{\mathrm{old}}(\cdot|o)$.
We realize this tilt by modifying the denoising dynamics with an additional drift term.
We characterize this modification via Doob's $h$-transform.

\begin{lemma}[Doob's $h$-Transform]
\label{lem:algo_correctness}
Fix a state $o \in\mathcal{S}$.
Let $(X_t)_{t\in[0,T]}$ follow the reverse-time denoising SDE \eqref{eq:reverse_sde} with $\nabla_x \log p_t(x|o)$ replaced by its estimator $s_\phi(x,t,o)$, inducing conditional marginals
$\{p_t^\phi(\cdot|o)\}_{t\in[0,T]}$.
Let $h_0(\cdot,o):\mathcal{A}\to\R_+$ be a positive integrable function.
Let $h(\cdot,o,\cdot):\mathcal{A}\times[0,T]\to \R_+$ be the associated Doob $h$-function, denoted by $h_t(x,o)$, defined as
\begin{align}
\label{eq:solution_exp}
    h_t(x,o)
    =
    \E \big[h_0(X_0,o)| X_t=x,o\big],
\end{align}
where the conditional expectation is taken under the original (untilted) denoising process induced by $s_\phi$.
The Doob $h$-transformed denoising process starts from
\[
    X_T^h \sim p_T^h(\cdot|o),
    \qquad
    p_T^h(x|o)
    :=
    \frac{
        p_T^\phi(x|o) h_T(x,o)
    }{
        \E_{X_0\sim p_0^{\phi}(\cdot|o)}[h_0(X_0,o)]
    },
\]
and evolves backward from $T$ to $0$ according to
\begin{align}
\label{eq:new_process}
    \dd X_t^{h}
    =
    \Big[
        f(X_t^{h},t)
        -\sigma_t^2 s_\phi(X_t^{h},t,o)
        -\sigma_t^2 \nabla_x \log h_t(X_t^{h},o)
    \Big]\dd t
    +\sigma_t\dd W_t.
\end{align}
Then its conditional marginals satisfy
\begin{align}
    p_t^{h}(x|o)
    =
    \frac{
        p_t^\phi(x|o)h_t(x,o)
    }{
        \E_{X_0\sim p_0^{\phi}(\cdot|o)}[h_0(X_0,o)]
    },
    \quad t\in[0,T].
    \label{eq:p_t^h}
\end{align}
Particularly, taking $h_0(x,o)=r_\beta(o,x)$ yields
$p_0^h(x|o)\propto p_0^{\phi}(x|o)r_\beta(o,x)$.
With $p_0^{\phi}(\cdot|o)=\pi_{\mathrm{old}}(\cdot|o)$, the time-$0$ policy induced by the tilted denoising process satisfies
$\pi(a|o)\propto r_\beta(o,a)\pi_{\mathrm{old}}(a|o)$.
\end{lemma}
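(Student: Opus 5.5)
We prove \eqref{eq:p_t^h} via Kolmogorov equations in reverse time. Write $\tau = T-t$, so the denoising process runs forward in $\tau$. Set $b(x,t) := f(x,t)-\sigma_t^2 s_\phi(x,t,o)$, and let $Z := \E_{X_0\sim p_0^\phi(\cdot|o)}[h_0(X_0,o)]$. The state $o$ is fixed throughout and only enters as a parameter, so it can be suppressed.

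\textbf{Step 1 (equations for $p_t^\phi$ and $h_t$).} The marginals $p_t^\phi$ solve the Fokker--Planck equation of the untilted process. In forward time $t$ this reads
\begin{align*}
\partial_t p_t^\phi = \nabla\cdot(b\,p_t^\phi) - \tfrac{\sigma_t^2}{2}\Delta p_t^\phi ,
\end{align*}
and the sign conventions must be matched to the paper's reverse-time notation. The function $h_t(x)=\E[h_0(X_0)\mid X_t=x]$ is a space-time harmonic function of the same process. By the Markov property it is a martingale along the process, so it satisfies the backward Kolmogorov equation
\begin{align*}
\partial_t h_t = b\cdot\nabla h_t + \tfrac{\sigma_t^2}{2}\Delta h_t ,
\end{align*}
with terminal data $h_0$ at $t=0$. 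Tower property gives the normalization $\int p_t^\phi h_t\,\dd x = Z$ for every $t$.

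\textbf{Step 2 (verify the tilted density solves the $h$-transformed Fokker--Planck).} Let $q_t := p_t^\phi h_t / Z$. We compute $\partial_t q_t$ with the product rule and substitute both equations from Step 1. The goal is to show that $q_t$ satisfies the Fokker--Planck equation of \eqref{eq:new_process}, whose drift is $b-\sigma_t^2\nabla\log h_t$. The key identity is
\begin{align*}
h\,\Delta p - p\,\Delta h = \nabla\cdot\big(h\nabla p - p\nabla h\big) = \nabla\cdot\big(\nabla(ph)\big) - 2\nabla\cdot(p\nabla h).
\end{align*}
Using it, the cross terms recombine into $\nabla\cdot\big(q\,\sigma_t^2\nabla\log h\big)$, while the $b$-terms combine as $\nabla\cdot(bq)$. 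This is the computation I expect to be the main obstacle. It is not deep, but every sign depends on the reverse-time convention, and the factor $\sigma_t^2$ rather than $\sigma_t^2/2$ in front of $\nabla \log h$ must come out exactly right.

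\textbf{Step 3 (uniqueness and conclusion).} At $t=T$ we have $q_T = p_T^h$ by the choice of initial law. Under regularity assumptions the Fokker--Planck equation of \eqref{eq:new_process} has a unique solution for this initial condition; we assume smoothness and positivity of $h_t$, Lipschitz drifts, and enough integrability. Hence $p_t^h = q_t$ for all $t\in[0,T]$, which is \eqref{eq:p_t^h}.

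An alternative route is to bypass uniqueness and show directly that the path measure of $X^h$ is $\frac{h_0(X_0)}{Z}\,\dd\mathbb P$. Itô's formula applied to $\log h_t(X_t)$, combined with Girsanov's theorem, gives this, and marginalizing then yields the same formula.

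Finally, at $t=0$ we have $h_t=h_0$, so setting $h_0=r_\beta(o,\cdot)$ and $p_0^\phi=\pi_{\mathrm{old}}$ gives $\pi(a|o)\propto r_\beta(o,a)\pi_{\mathrm{old}}(a|o)$.
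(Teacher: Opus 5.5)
\textbf{Where the proposal stands.} Your route differs from the paper's. The paper proves nothing directly: it invokes the reverse-time Doob $h$-transform and the bridge marginal identity of Nguyen et al.\ (Propositions~2 and~1 there). You instead verify the marginal formula by a Fokker--Planck computation plus uniqueness, or alternatively via Girsanov. That plan is sound, but Step~1 as written reverses the sign of every $b$-term, and this is exactly what decides the sign of the $h$-drift.

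\textbf{The sign error.} In the paper's convention, \eqref{eq:reverse_sde} is integrated with $\dd t<0$, so in $\tau=T-t$ the drift is $-b$. The paper uses exactly this when it passes to forward time in the proof of \cref{prop:cdm_solves_soc}. The correct equations in the variable $t$ are
\begin{align*}
\partial_t p_t^\phi = -\nabla\cdot\big(b\,p_t^\phi\big) - \tfrac{\sigma_t^2}{2}\Delta p_t^\phi,
\qquad
\partial_t h_t = -\,b\cdot\nabla h_t + \tfrac{\sigma_t^2}{2}\Delta h_t .
\end{align*}
As a check, take $f=0$, $\sigma_t\equiv 1$ and the exact score $b=-\nabla\log p_t$. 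The first equation becomes the heat equation $\partial_t p_t=\tfrac12\Delta p_t$, whereas yours gives $\partial_t p_t=-\tfrac32\Delta p_t$.

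If you carry your equations through Step~2, you get $\partial_t(ph)=\nabla\cdot\big((b+\sigma_t^2\nabla\log h_t)\,ph\big)-\tfrac{\sigma_t^2}{2}\Delta(ph)$. This is precisely your ``cross terms recombine into $\nabla\cdot(q\,\sigma_t^2\nabla\log h)$.'' In the convention of your own Step~1, it is the Fokker--Planck equation for drift $b+\sigma_t^2\nabla\log h_t$, the opposite sign to \eqref{eq:new_process}. So Step~2 as written does not establish the lemma.

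\textbf{The fix.} With the corrected signs the computation closes:
\begin{align*}
\partial_t(ph) = -\nabla\cdot(b\,ph)-\tfrac{\sigma_t^2}{2}\Delta(ph)+\sigma_t^2\nabla\cdot(p\nabla h) = -\nabla\cdot\big((b-\sigma_t^2\nabla\log h_t)\,ph\big)-\tfrac{\sigma_t^2}{2}\Delta(ph).
\end{align*}
This is the Fokker--Planck equation of \eqref{eq:new_process}, and Step~3 then finishes the proof under the regularity you list.

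\textbf{Comparison with the paper.} Your argument is self-contained. It also makes explicit the hypotheses that the paper's citation leaves implicit: smoothness and positivity of $h_t$, and well-posedness of the tilted equation.

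\textbf{On the Girsanov variant.} It is the cleaner way to get the sign right. In forward time $\tau$, $h_{T-\tau}(X_{T-\tau})$ is a true martingale, being a conditional expectation of the integrable $h_0(X_0)$. So the change of measure $\frac{h_0(X_0)}{Z}\,\dd\mathbb P$ is automatically legitimate, and it adds drift $+\sigma^2\nabla\log h$ in $\tau$, i.e.\ $-\sigma_t^2\nabla\log h_t$ in the paper's notation. It does not remove the need for uniqueness altogether, though. It shows the reweighted law is a weak solution of \eqref{eq:new_process}, and identifying it with the law of \eqref{eq:new_process} still requires uniqueness in law for that SDE.
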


\begin{proof}
Please see \cref{sec:proof_corr} for a detailed proof.
\end{proof}

\begin{remark}[Initialization]
In \cref{lem:algo_correctness}, the Doob-transformed denoising process \eqref{eq:new_process} is initialized from
$p_T^h(\cdot|o)$, which does not necessarily coincide with the original terminal law
$p_T^\phi(\cdot|o)$.
Under the idealized fully-noised setting, where $X_T$ is independent of $X_0$ conditional on $o$, we have
\[
    h_T(x,o)
    =
    \mathbb E[h_0(X_0,o)|X_T=x,o]
    =
    \mathbb E_{X_0\sim p_0^\phi(\cdot|o)}[h_0(X_0,o)].
\]
Therefore, $p_T^h(\cdot|o)=p_T^\phi(\cdot|o)$ under this assumption.
In practice, we initialize \eqref{eq:new_process} from the same standard Gaussian prior as the reference process.
\end{remark}

We then learn a parameterized conditional score model $s_\theta(x,t,o)$ to match the ratio-induced drift $\nabla_x \log h_t(x,o)$ via the Doob's Drift Matching objective
\begin{align}
\label{eq:def_DM}
    \mathcal{L}_{\mathrm{DM}}(\theta)
    =
    \E_{t, o,X_t \sim p_t^{h}(\cdot|o)}
    \Big[
        \big\|
            \nabla_x \log h_t(X_t,o)
            - s_\theta(X_t,t,o)
        \big\|_2^2
    \Big],
\end{align}
where $\theta$ denotes the learnable parameters and $t$ samples from uniform distribution $U[0,T]$.

Although the Doob's Drift Matching (DM) objective is clean and direct, it requires access to both $\nabla_x \log h_t(x,o)$
and samples from the tilted distribution $p_t^{h}(\cdot|o)$, which are generally intractable.
To address this issue, we introduce a conditional objective that attains the same optimum as $\mathcal{L}_{\mathrm{DM}}$.
Specifically, we define the Conditional Drift Matching (CDM) objective
\begin{align}
\label{eq:def_cDM}
    \mathcal{L}_{\mathrm{CDM}}(\theta)
    =
    \E_{t,o, X_t \sim p_t^\phi, X_0 \sim p_{0| t}^\phi}
    \Big[
        r_\beta(o,X_0)
        \big\|
            \nabla_x \log p_{0|t}^\phi (X_0| X_t,o)
            - s_\theta(X_t,t,o)
        \big\|_2^2
    \Big],
\end{align}
where $p_{0| t}^\phi(\cdot| X_t,o)$ denotes the diffusion conditional distribution.
More importantly, for a fixed state $o$, we show that the objectives in \eqref{eq:def_DM} and \eqref{eq:def_cDM} share the same gradient direction.

\begin{theorem}[Gradient Equivalence for Diffusion]
\label{thm:grad_equi_diff}
Fix $o \in\mathcal S$, define normalizing constant
$Z_{\beta}(o) := \E_{X_0 \sim p_0^\phi(\cdot|o)}\big[r_\beta(o,X_0)\big]$ and
\begin{align*}
    \ell_{\mathrm{DM}}(\theta,o)
    &:=
    \E_{t,X_t\sim p_t^h(\cdot|o)}
    \left[
        \|
        \nabla_x\log h_t(X_t,o)-s_\theta(X_t,t,o)
        \|_2^2
    \right],
    \\
    \ell_{\mathrm{CDM}}(\theta,o)
    &:=
    \E_{t,X_t\sim p_t^\phi(\cdot|o),X_0\sim p_{0|t}^\phi(\cdot|X_t,o)}
    \left[
        r_\beta(o,X_0)
        \|
        \nabla_x\log p_{0|t}^\phi(X_0|X_t,o)
        -
        s_\theta(X_t,t,o)
        \|_2^2
    \right].
\end{align*}
Then their gradients satisfy
$\nabla_\theta \ell_{\mathrm{CDM}}(\theta,o)
=
Z_{\beta}(o) \cdot \nabla_\theta \ell_{\mathrm{DM}}(\theta,o)$.
\end{theorem}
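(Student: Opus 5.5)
Our plan is to expand both squared losses, drop the $\theta$-independent terms, and show that the $\theta$-dependent parts of $\ell_{\mathrm{CDM}}$ equal $Z_\beta(o)$ times those of $\ell_{\mathrm{DM}}$. For a fixed $t$, write
\begin{align*}
\ell_{\mathrm{DM}}(\theta,o) &= \E_t\Big[\E_{p_t^h}\big[\|s_\theta\|_2^2\big] - 2\,\E_{p_t^h}\big[\langle \nabla_x\log h_t, s_\theta\rangle\big]\Big] + C_1(o),\\
\ell_{\mathrm{CDM}}(\theta,o) &= \E_t\Big[\E\big[r_\beta(o,X_0)\|s_\theta(X_t)\|_2^2\big] - 2\,\E\big[r_\beta(o,X_0)\langle \nabla_x\log p_{0|t}^\phi(X_0|X_t,o), s_\theta(X_t)\rangle\big]\Big] + C_2(o),
\end{align*}
where $C_1,C_2$ do not depend on $\theta$. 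We then match the quadratic terms and the cross terms separately.

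\textbf{Quadratic term.} Condition on $X_t$ and use the definition \eqref{eq:solution_exp} with $h_0=r_\beta$:
\[
\E_{X_0\sim p^\phi_{0|t}}[r_\beta(o,X_0)\,|\,X_t]=h_t(X_t,o).
\]
Hence $\E[r_\beta\|s_\theta\|^2]=\int p_t^\phi h_t\|s_\theta\|^2\,\dd x$. By \eqref{eq:p_t^h} in \cref{lem:algo_correctness}, $p_t^\phi h_t = Z_\beta(o)\,p_t^h$, so this equals $Z_\beta(o)\,\E_{p_t^h}\|s_\theta\|^2$.

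\textbf{Cross term.} This is the key step. We need the identity
\[
\E_{X_0\sim p^\phi_{0|t}(\cdot|x)}\big[r_\beta(o,X_0)\nabla_x\log p_{0|t}^\phi(X_0|x,o)\big]=\nabla_x h_t(x,o).
\]
To obtain it, differentiate $h_t(x,o)=\int r_\beta(o,x_0)\,p_{0|t}^\phi(x_0|x,o)\,\dd x_0$ under the integral sign, using $\nabla_x p_{0|t}=p_{0|t}\nabla_x\log p_{0|t}$. Multiplying by $p_t^\phi(x)$ and integrating against $s_\theta$ gives
\[
\int p_t^\phi\,\langle\nabla_x h_t, s_\theta\rangle\,\dd x = \int p_t^\phi h_t\,\langle\nabla_x\log h_t, s_\theta\rangle\,\dd x = Z_\beta(o)\,\E_{p_t^h}\big[\langle\nabla_x\log h_t, s_\theta\rangle\big],
\]
again by \eqref{eq:p_t^h}.

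Combining the two terms, $\ell_{\mathrm{CDM}}-Z_\beta(o)\,\ell_{\mathrm{DM}}$ is independent of $\theta$. Taking $\nabla_\theta$ then yields the claim, assuming enough regularity to exchange the gradient with the expectations.

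The main obstacle is justifying the interchange of $\nabla_x$ with the integral over $x_0$, and ensuring all the expectations are finite. This requires integrability and positivity of $r_\beta$, smoothness of $p_{0|t}^\phi$ (for $t>0$ this follows from Gaussian transition kernels), and $h_t>0$ so that $\nabla\log h_t$ is well defined. We would state these as mild regularity assumptions and invoke dominated convergence.
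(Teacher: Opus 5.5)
Your proposal is correct and follows the paper's proof essentially step for step. You expand both squares, match the quadratic terms via the tower property $\E[r_\beta(o,X_0)\mid X_t]=h_t(X_t,o)$ together with $p_t^\phi h_t=Z_\beta(o)\,p_t^h$, and match the cross terms by differentiating $h_t(x,o)=\int r_\beta(o,z)\,p^\phi_{0|t}(z|x,o)\,\dd z$ under the integral sign, giving $\ell_{\mathrm{CDM}}=Z_\beta(o)(\ell_{\mathrm{DM}}-C_1)+C_2$.

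One caveat, shared with the paper, bears on the finiteness issue you flag. With $t\sim U[0,T]$ the constant $C_2=\E[r_\beta\|\nabla_x\log p^\phi_{0|t}\|_2^2]$ generically diverges like $\int_0 \dd t/t$, as in denoising score matching. The conclusion is therefore most cleanly stated either by comparing the two gradient expressions directly, which stay finite, or by restricting to $t\in[\epsilon,T]$.
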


\begin{proof}
Please see \cref{sec:proof_grad_equi} for a detailed proof.
\end{proof}

\cref{thm:grad_equi_diff} is stated for each fixed observation $o$.
After averaging over observations, we obtain
\begin{align*}
    \nabla_\theta \mathcal L_{\mathrm{CDM}}(\theta)
    =
    \nabla_\theta \E_o \left[Z_{\beta}(o)\ell_{\mathrm{DM}}(\theta,o)\right].
\end{align*}
Therefore, $\mathcal L_{\mathrm{CDM}}$ has the same gradient field as the reweighted DM objective
$\E_o[Z_{\beta}(o)\ell_{\mathrm{DM}}(\theta,o)]$.
If the model class $\{s_\theta:\theta\in\Theta\}$ is realizable and the minimum of $\mathcal L_{\mathrm{CDM}}$ is attained, this reweighting does not change the global minimizer.
Indeed, there exists $\theta^\star\in\Theta$ such that
$s_{\theta^\star}(x,t,o)=\nabla_x\log h_t(x,o)$
for every $x\in\mathcal A$, $t\in[0,T]$, and $o\in\mathcal S$.
Hence $\ell_{\mathrm{DM}}(\theta^\star,o)=0$ for every $o\in\mathcal S$,
so $\theta^\star$ also minimizes both the original DM objective and the reweighted DM objective, which implies
$\theta^\star$ also minimizes $\mathcal L_{\mathrm{CDM}}$.

For variance-preserving DDPMs, the forward noising
process \eqref{eq:forward_sde} has the closed-form
representation
$X_t = \sqrt{\bar\alpha_t}X_0
    + \sqrt{1-\bar\alpha_t}Z,
    \quad Z\sim\mathcal N(0,I)$,
where
$\bar\alpha_t
= \exp(-\int_0^t \sigma_s^2 \dd s)$
is the cumulative signal retention coefficient.
Moreover, the inference process follows the backward SDE \eqref{eq:reverse_sde} by replacing the drift term by $-\sigma_t^2 x/2 - \sigma_t^2 s_\phi(x,t)$.
Therefore, following the Bayes formula $p_{0|t} = p_{t|0} p_0/p_t$, the CDM objective takes the form
\begin{align}
    \mathcal{L}_{\text{CDM}}(\theta)
    = & ~ \E_{t,o, X_t, X_0}
    [r_\beta(o,X_0) \|\nabla \log p_{t|0}^\phi(X_t|X_0,o) - (s_\phi + s_\theta)(X_t,t,o) \|_2^2] \nonumber \\
    \approx & ~\E_{t,o, X_t, X_0}
    [ r_\beta(o,X_0) \| -\frac{X_t - \sqrt{\bar{\alpha}_t}X_0}{1-\bar{\alpha}_t}- (s_\phi + s_\theta)(X_t,t,o) \|_2^2],
    \label{eq:cdm_loss_ddpm}
\end{align}
where $X_t \sim p_t$, $t \sim U[0,T]$, and $X_0$ is sampled from the conditional distribution
$p_{0|t}(\cdot| X_t)$.
Let the score model $s_\phi$ be the pre-trained base model that fixes the noise schedule and the corresponding regression target.
The approximation in the second line replaces the conditional score
$\nabla \log p_{t|0}^\phi(X_t|X_0,o)$ by the Gaussian regression target induced by the forward noising kernel.
The approximation is reliable when $s_\phi$ is well-trained under the chosen diffusion construction and the trust-region update keeps $\pi_{\mathrm{old}}$ close to the base behavior.

\textbf{Drift Parametrization.}
The conditional drift matching loss \eqref{eq:cdm_loss_ddpm} requires evaluating
the base score network $s_\phi$.
In the advantage-tilted denoising process \eqref{eq:new_process}, both the base
score $s_\phi$ and the advantage-induced drift $\nabla_x \log h_t$ appear with
the same coefficient $\sigma_t^2$.
We exploit this structure and adopt a simplified parametrization.
Specifically, we parametrize the drift model $g_\theta$ to match the sum
$\nabla_x \log h_t(x) + s_\phi(x,t)$.
Then the conditional drift matching loss
\eqref{eq:cdm_loss_ddpm} reduces to
\begin{align}
    \mathcal{L}_{CDM}(\theta)
    = \E_{t,o, X_t, X_0}
    \Big[r_\beta(o,X_0)
    \| -\frac{X_t - \sqrt{\bar{\alpha}_t}X_0}{1-\bar{\alpha}_t} - g_\theta(X_t,t,o) \|_2^2 \Big],
    \label{eq:cdm_loss_prac}
\end{align}
which no longer requires evaluating the base score network $s_\phi$ during
training.
This parametrization also simplifies inference.
Once we obtain the trained score network $g_\theta$, sampling from the target distribution $p_0^h$ only requires simulating the
following SDE:
\begin{align}
\label{eq:new_process_prac}
    \dd X_t^h
    = [f(X_t^h,t)
    - \sigma_t^2 g_\theta(X_t^h,t,o)
    ] \dd t + \sigma_t \dd W_t.
\end{align}
We summarize the training procedure in \cref{alg:rwdm_ppo}.

\subsection{Flow Policy Optimization with DoobPO}
\label{sec:flow_dr_matching}

In this section, we instantiate DoobPO for flow policies.
We start from a pre-trained flow matching model $u_\phi(x,t,o)$, which induces a flow policy $\pi_\phi(\cdot|o)$ with ODE \eqref{eq:fm_ode}.
Our goal is to solve \eqref{eq:rl_objective_prelim}.
We reuse the same ratio learning stage as in \cref{sec:diff_dr_matching}.

\textbf{Doob's Drift Matching.}
We then distill the policy improvement signal encoded by $r_\beta$ into the velocity model.
For flow matching, the policy is given by the terminal distribution at time $T$.
We consider the joint distribution $p_{t,T}^\phi$
induced by the independent-coupling affine interpolation
$X_t=\alpha_t X_0+\beta_t X_T$, where
$X_0\sim\mathcal N(0,I)$,
$X_T\sim\pi_{\mathrm{old}}(\cdot|o)$,
and $X_0\perp X_T$.
We assume the pretrained flow matching model
learns the marginal velocity field of
this interpolation.
We define the tilted flow path by terminally reweighting the base joint law:
$p_{t,T}^\beta(x,a|o)
:=
    {
        r_\beta(o,a)
        p_{t,T}^\phi(x,a|o)
    }/{
        Z_{\beta}(o)
    },
    ~~ \text{where} ~~
    Z_{\beta}(o):=
    \E_{X_T\sim p_T^\phi(\cdot|o)}
    [
        r_\beta(o,X_T)
    ]$.
This gives the tilted terminal policy
$\pi_\beta(a|o)=p_T^\beta(a|o)\propto r_\beta(o,a)\pi_{\mathrm{old}}(a|o)$.
Let $u^\beta(x,t,o)$ denote the velocity inducing the marginals $p_t^\beta(\cdot|o)$ and terminal $\pi_\beta(\cdot|o)$.
We define the drift matching objective for flow policies as
\begin{align}
\label{eq:def_DM_fm}
    \mathcal{L}_{\mathrm{DM}}(\theta)
    =
    \E_{t,o, X_t \sim p_t^\beta(\cdot|o)}
    \Big[
        \big\|
            u^\beta(X_t,t,o)
            - u_\theta(X_t,t,o)
        \big\|_2^2
    \Big].
\end{align}
This objective is not implementable since it requires samples from $p_t^\beta$ and access to $u^\beta$.
Therefore, we rewrite \eqref{eq:def_DM_fm} as a conditional drift matching objective that only relies on the behavior process,
\begin{align}
\label{eq:def_cDM_fm}
    \mathcal{L}_{\mathrm{CDM}}(\theta)
    = & ~
    \E_{t,o, X_t \sim p_t^\phi, X_T \sim p_{T| t}^\phi}
    \Big[
        r_\beta(o,X_T)
        \big\|
            u^\beta(X_t| X_T,t,o)
            - u_\theta(X_t,t,o)
        \big\|_2^2
    \Big] \\
    \approx &~
    \E_{t,o, X_0 \sim p_0^\phi, X_T \sim p_T^\phi, X_t \sim p_{t|T,0}^\phi}
    \Big[
        r_\beta(o,X_T)
        \big\|
            \dot{\alpha}_t X_0 + \dot{\beta}_t X_T
            - u_\theta(X_t,t,o)
        \big\|_2^2
    \Big].
    \nonumber
\end{align}
The approximation is reliable when $u_\phi$ is well-trained under the affine path $X_t=\alpha_t X_0+\beta_t X_T$, independent coupling and the trust-region update keeps $\pi_{\mathrm{old}}$ close to the base behavior.
We summarize the training procedure in \cref{alg:rwdm_ppo_fm}.
Finally, we show $\mathcal{L}_{\mathrm{DM}}$ and $\mathcal{L}_{\mathrm{CDM}}$ are equivalent for learning.

\begin{theorem}[Gradient Equivalence for Flow]
\label{thm:grad_equi_fm}
Let $\mathcal{L}_{\mathrm{DM}}$ and $\mathcal{L}_{\mathrm{CDM}}$ be defined in \eqref{eq:def_DM_fm} and \eqref{eq:def_cDM_fm}, respectively.
Fix $o \in\mathcal S$, define normalizing constant
$Z_{\beta}(o) := \E_{X_T \sim p_T^\phi(\cdot|o)}\big[r_\beta(o,X_T)\big]$ and
\begin{align*}
    \ell_{\mathrm{DM}}(\theta,o)
    &:=
    \E_{t, X_t \sim p_t^\beta(\cdot|o)}
    \Big[
        \big\|
            u^\beta(X_t,t,o)
            - u_\theta(X_t,t,o)
        \big\|_2^2
    \Big].
    \\
    \ell_{\mathrm{CDM}}(\theta,o)
    &:=
    \E_{t, X_t \sim p_t^\phi, X_T \sim p_{T| t}^\phi}
    \Big[
        r_\beta(o,X_T)
        \big\|
            u^\beta(X_t| X_T,t,o)
            - u_\theta(X_t,t,o)
        \big\|_2^2
    \Big].
\end{align*}
Then their gradients satisfy
$\nabla_\theta \ell_{\mathrm{CDM}}(\theta,o)
    =
    Z_{\beta}(o) \cdot \nabla_\theta \ell_{\mathrm{DM}}(\theta,o)$.
\end{theorem}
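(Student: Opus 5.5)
The plan is the standard conditional-flow-matching argument: expand both squared losses, discard the terms that do not depend on $\theta$, and show that the $\theta$-dependent parts agree up to the factor $Z_\beta(o)$. Throughout, $o$ and $t$ are fixed, since the outer expectation over $t\sim U[0,T]$ commutes with everything that follows. The one ingredient that is not pure algebra is the marginalization identity that defines the tilted velocity $u^\beta$.

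\textbf{Step 1: tilted velocity.} Under the tilted joint law $p_{t,T}^\beta(x,a|o)=r_\beta(o,a)p_{t,T}^\phi(x,a|o)/Z_\beta(o)$, the conditional law of $X_t$ given $X_T=a$ is the same as under the base coupling. So the tilted path is a mixture of the same conditional paths $p_{t|T}^\phi(\cdot|a)$, each generated by the conditional velocity $u^\beta(x|a,t,o)$; for the affine interpolation this is $\dot\alpha_t X_0+\dot\beta_t a$ written as a function of $x$ and $a$.

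By the standard marginalization lemma of flow matching, the velocity generating $p_t^\beta$ is
\begin{align*}
u^\beta(x,t,o)=\E_{X_T\sim p^\beta_{T|t}(\cdot|x,o)}\big[u^\beta(x|X_T,t,o)\big],
\end{align*}
with
\begin{align*}
p^\beta_{T|t}(a|x,o)=\frac{r_\beta(o,a)\,p^\phi_{T|t}(a|x,o)\,p_t^\phi(x|o)}{Z_\beta(o)\,p_t^\beta(x|o)}.
\end{align*}
I would justify this by checking the continuity equation: differentiate $p_t^\beta(x)=\int p_{t|T}(x|a)\,p_T^\beta(a)\,\dd a$ in $t$ and use the conditional continuity equation inside the integral. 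This step is the main obstacle, since it needs regularity assumptions to interchange differentiation and integration. I would assume them, as is standard.

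\textbf{Step 2: expand $\ell_{\mathrm{CDM}}$.} Write $\|u^\beta(x|a)-u_\theta(x)\|^2=\|u^\beta(x|a)\|^2-2\langle u^\beta(x|a),u_\theta(x)\rangle+\|u_\theta(x)\|^2$. The first term is $\theta$-free and drops under $\nabla_\theta$. For the remaining two, combine the weights using $p_t^\phi(x)\,p^\phi_{T|t}(a|x)\,r_\beta(o,a)=Z_\beta(o)\,p_t^\beta(x)\,p^\beta_{T|t}(a|x)$:
\begin{itemize}
\item The quadratic term becomes $Z_\beta(o)\,\E_{p_t^\beta}\|u_\theta\|^2$, after integrating out $a$.
\item The cross term becomes $-2Z_\beta(o)\,\E_{p_t^\beta}\big\langle \E_{p^\beta_{T|t}}[u^\beta(x|X_T)],u_\theta\big\rangle=-2Z_\beta(o)\,\E_{p_t^\beta}\langle u^\beta(x),u_\theta\rangle$, by Step 1.
\end{itemize}

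\textbf{Step 3: expand $\ell_{\mathrm{DM}}$ and compare.} Expanding $\ell_{\mathrm{DM}}$ the same way gives $\E_{p_t^\beta}\big[\|u^\beta\|^2-2\langle u^\beta,u_\theta\rangle+\|u_\theta\|^2\big]$. Hence
\begin{align*}
\ell_{\mathrm{CDM}}(\theta,o)=Z_\beta(o)\,\ell_{\mathrm{DM}}(\theta,o)+C(o),
\end{align*}
where $C(o)$ does not depend on $\theta$. Differentiating in $\theta$ gives the claim.

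If differentiation under the integral is needed, it holds under mild integrability of $u_\theta$ and $\nabla_\theta u_\theta$. The only remaining check is that $Z_\beta(o)$ is finite and positive, which follows from $r_\beta=\exp(\tilde r_\beta)>0$ and its integrability.
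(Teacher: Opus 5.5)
Your proposal is correct and follows essentially the paper's argument. Both rest on the joint tilt relation $p_t^\phi\, p^\phi_{T|t}\, r_\beta = Z_\beta(o)\, p_t^\beta\, p^\beta_{T|t}$ and the marginalization identity $u^\beta(x,t,o)=\E_{X_T\sim p^\beta_{T|t}(\cdot|x,o)}[u^\beta(x|X_T,t,o)]$; the paper applies them to the $\theta$-gradient through test-function identities, whereas you apply them to the expanded loss to get $\ell_{\mathrm{CDM}}=Z_\beta(o)\,\ell_{\mathrm{DM}}+C(o)$, which is how the paper handles its diffusion analogue. Your continuity-equation check, which the paper omits, only shows that the conditional-expectation field is \emph{a} generator of $p_t^\beta$; since generating velocities are not unique, the identity effectively defines which $u^\beta$ the theorem means, in your proof just as in the paper's.
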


\begin{proof}
Please see \cref{sec:proof_grad_equi_fm} for a detailed proof.
\end{proof}

\textbf{Ratio Regularization.}
In policy gradient methods, the probability ratio $r_\theta(o,a)=\pi_\theta(a|o)/\pi_{\mathrm{old}}(a|o)$ is induced by two normalized policies and therefore satisfies
$\E_{a\sim \pi_{\mathrm{old}}(\cdot|o)}[r_\theta(o,a)]=1$ for each state $o$.
In our setting, $r_\beta(o,a)$ is learned as an auxiliary model, so this normalization constraint is not automatically satisfied.
We therefore add a ratio regularizer that penalizes violations of $\E_{a\sim \pi_{\mathrm{old}}(\cdot|o)}[r_\beta(o,a)]=1$ in \cref{alg:rwdm_ppo,alg:rwdm_ppo_fm}, which stabilizes ratio learning.

\subsection{Unified Framework and Instantiations}
\label{sec:unified_framework}

DoobPO is a unified framework with two design choices.
The first is the generative policy class.
The second is the policy update method that specifies the target policy improvement.
Different choices along these two dimensions induce a broad family of diffusion/flow matching RL algorithms.

Under this view, several prior methods arise as special cases of DoobPO.
In particular, they correspond to policy updates that admit closed-form solutions.
In these cases, the ratio-learning stage becomes unnecessary, and DoobPO reduces to a single reweighted drift matching step.
For example, DPMD \cite{ma2025efficient} corresponds to a policy update with an analytic target ratio.
Under a KL-proximal policy mirror descent update \cite{zhan2023policy}, the updated policy satisfies
$\pi(a|o)\propto \pi_{\mathrm{old}}(a|o)\exp(\hat{A}_{\mathrm{old}}(o,a)/\lambda)$.
Therefore, the corresponding probability ratio satisfies
$r(o,a)\propto \exp(\hat{A}_{\mathrm{old}}(o,a)/\lambda)$.

For more general trust-region policy-gradient methods such as PPO \cite{schulman2017proximal} and GRPO \cite{shao2024deepseekmath}, the target ratio does not admit a closed form.
DoobPO then learns a lightweight ratio model and distills the resulting policy improvement into the generative policy through drift matching.
In this work, we derive DoobPPO and DoobGRPO for both diffusion policies and flow policies by instantiating the RL update with PPO and GRPO.
Hence, the same framework covers both prior closed-form methods and our new learned-ratio instantiations.
We summarize representative examples in \cref{tab:unified_framework}.

\subsection{Optimality from the Perspective of Stochastic Optimal Control}
\label{sec:soc}

\cref{sec:diff_dr_matching} shows that Doob's Drift Matching recovers the ratio-tilted action distribution through Doob's $h$-transform.
In this section, we establish a stronger optimality property from the perspective of endpoint-constrained stochastic optimal control (SOC).
Specifically, among all controlled reverse diffusions
with prescribed endpoint marginals $p_T^h$ and $p_0^h$,
the minimizer of conditional drift matching
induces the minimum-energy drift modification
of the pretrained reverse diffusion process.
Here, minimality is measured by the quadratic control energy.
We now formulate this result.

\begin{proposition}[Conditional Drift Matching Solves the SOC Problem]
\label{prop:cdm_solves_soc}
Let $h_t$ be defined in \eqref{eq:solution_exp}.
Fix a state $o \in\mathcal S$ and let $\mathbb P^\phi(\cdot|o)$ be the reference path measure induced by the reverse-time SDE
\begin{align*}
    \dd X_t
    =
    \bigl[
        f(X_t,t)-\sigma_t^2 s_\phi(X_t,t,o)
    \bigr]\dd t
    +
    \sigma_t\dd W_t,
\end{align*}
where $s_\phi(X_t,t,o)$ is the base model.
Consider the conditional drift matching objective
\begin{align*}
\mathcal{L}_{\mathrm{CDM}}(\theta)
=
\mathbb E_{t, X_t\sim p_t^\phi, X_0\sim p_{0|t}^\phi}
\left[
h_0(X_0,o)
\left\|
\nabla_x\log p_{0|t}^\phi(X_0|X_t,o)
-
s_\theta(X_t,t,o)
\right\|_2^2
\right].
\end{align*}
If the model class $\{s_\theta : \theta \in \Theta\}$ is realizable and the minimum of $L_{\mathrm{CDM}}$ is attained, then any minimizer $s_\theta^\star$ induces the optimal control
$u_t^\star(x,o) := -s_\theta^\star(x,t,o)$
that solves the SOC problem
\begin{align}
    & ~ \quad \quad \quad
    \quad \quad \quad \quad \quad \inf_u
    \frac12
    \mathbb E_{\mathbb Q^u(\cdot|o)}
    \int_0^T
    \|\sigma_t
    u_t(Y_t,o)\|_2^2\dd t,
    \label{eq:oj_soc}
    \\
    \text{s.t.} & ~
    \dd Y_t
    =
    \bigl[
        f(Y_t,t)-\sigma_t^2 s_\phi(Y_t,t,o) + \sigma_t^2 u_t(Y_t,o)
    \bigr]\dd t
    +
    \sigma_t\dd W_t ,
    ~ Y_T \sim p_T^h, ~Y_0 \sim p_0^h,
    \label{eq:sde_soc}
\end{align}
where $\mathbb Q^u(\cdot|o)$ is the path measure induced by the controlled reverse-time SDE in \eqref{eq:sde_soc}, initialized at $t=T$ with $Y_T\sim p_T^h$, and $p_0^h,p_T^h$ are the endpoint distributions given by \eqref{eq:p_t^h}.
\end{proposition}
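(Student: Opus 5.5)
The plan is in three steps: identify the minimizer of $\mathcal L_{\mathrm{CDM}}$ with the Doob drift, check that the induced control is admissible, and prove optimality through a relative-entropy (Girsanov) argument over path measures.

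\textbf{Step 1: identify the minimizer.} Apply \cref{thm:grad_equi_diff}, taking $h_0$ in place of $r_\beta$; its proof only uses positivity and integrability of the weight. For fixed $o$, this gives $\nabla_\theta \mathcal L_{\mathrm{CDM}} = Z(o)\,\nabla_\theta \ell_{\mathrm{DM}}$. The same Bayes computation yields the more precise statement that $\mathcal L_{\mathrm{CDM}}(\theta) = Z(o)\,\ell_{\mathrm{DM}}(\theta,o) + C$, where $C$ does not depend on $\theta$. The key identity is
\[
\E[h_0(X_0)\nabla_x\log p^\phi_{0|t}(X_0|X_t)\mid X_t=x] = \nabla_x h_t(x),
\]
so that conditioning on $X_t$ turns the regression target into $\nabla_x\log h_t$, weighted by $h_t(x)\,p_t^\phi(x)/Z = p_t^h(x)$. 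By realizability and attainment, any minimizer therefore satisfies $s_\theta^\star = \nabla_x\log h_t$ $p_t^h$-a.e.

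It follows that $u^\star_t = -\nabla_x\log h_t$. Substituting into \eqref{eq:sde_soc} gives exactly the Doob-transformed SDE \eqref{eq:new_process}. By \cref{lem:algo_correctness}, started from $p_T^h$, its marginals are $p_t^h$, and in particular $Y_0\sim p_0^h$. So $u^\star$ is admissible.

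\textbf{Step 2: express the cost of any admissible control as a path-space KL.} Fix an admissible $u$ with $Y_T\sim p_T^h$ and $Y_0\sim p_0^h$. Let $\mathbb P^\phi$ be the reference reverse process started from $p_T^\phi$. Girsanov's theorem (the drifts differ by $\sigma_t^2 u_t$ and the noise coefficient is $\sigma_t$), combined with the chain rule for the initial law at time $T$, gives
\[
\mathrm{KL}(\mathbb Q^u\|\mathbb P^\phi) = \mathrm{KL}(p_T^h\|p_T^\phi) + \tfrac12\,\E_{\mathbb Q^u}\!\int_0^T\|\sigma_t u_t(Y_t,o)\|_2^2\,\dd t .
\]
The first term is the same for every admissible $u$. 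Minimizing the control energy is therefore equivalent to minimizing $\mathrm{KL}(\mathbb Q^u\|\mathbb P^\phi)$.

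\textbf{Step 3: lower bound and attainment.} Disintegrate both path measures with respect to $Y_0$. The chain rule gives
\[
\mathrm{KL}(\mathbb Q^u\|\mathbb P^\phi) \ge \mathrm{KL}(p_0^h\|p_0^\phi),
\]
with equality if and only if the conditional path law given $Y_0$ agrees with the reference bridge.

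The Doob process $\mathbb P^{h}$ satisfies
\[
\frac{\dd\mathbb P^{h}}{\dd\mathbb P^\phi} = \frac{h_0(X_0,o)}{Z(o)} .
\]
This follows from the martingale property of $h_t(X_t)$ along the reverse process, which holds by \eqref{eq:solution_exp} and the tower property, together with Girsanov identifying the drift $-\sigma_t^2\nabla\log h_t$. Consequently $\mathbb P^h$ reweights only the endpoint and keeps the reference bridges, so it attains the bound with equality. Hence for every admissible $u$,
\[
\tfrac12\,\E_{\mathbb Q^u}\!\int_0^T\|\sigma_t u_t\|_2^2\,\dd t \;\ge\; \mathrm{KL}(p_0^h\|p_0^\phi)-\mathrm{KL}(p_T^h\|p_T^\phi),
\]
and $u^\star$ achieves this value, which proves optimality.

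\textbf{Main obstacle.} I expect the hardest step to be the rigorous identification $\dd\mathbb P^h/\dd\mathbb P^\phi = h_0(X_0)/Z$. This requires Novikov-type integrability of $\nabla\log h_t$ and careful handling of the reverse-time filtration, since time runs from $T$ to $0$ and the initial law is at $T$. My first approach would be to apply Itô's formula to $\log h_t(X_t)$ along the reverse SDE. Using the backward Kolmogorov equation satisfied by $h_t$, this should give
\[
\log\frac{h_0(X_0)}{h_T(X_T)} = \int_0^T \sigma_t\nabla\log h_t\cdot \dd W_t - \tfrac12\int_0^T\|\sigma_t\nabla\log h_t\|^2\,\dd t .
\]
This is exactly the Girsanov density, after multiplying by the initial density ratio $h_T/Z$. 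I would assume enough regularity of $h_0$ and $s_\phi$ (e.g.\ bounded, smooth $\log h_0$) to justify these steps.
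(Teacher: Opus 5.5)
Your proposal is correct. Step 1 matches the paper's proof, but your optimality argument takes a different route.

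\textbf{The paper's route.} It does not argue optimality directly. It reverses time and invokes the Schr\"odinger-system characterization of the endpoint-constrained SOC optimum from the Adjoint Schr\"odinger Bridge Sampler, which gives the optimal control as $\bar\sigma_\tau\nabla\log\Phi_\tau$. It then verifies that $\Phi_\tau=h_{T-\tau}$ and $\widehat\Phi_\tau=p^\phi_{T-\tau}/A$ satisfy both propagation equations and both boundary couplings, $\Phi_0\widehat\Phi_0=p_T^h$ and $\Phi_T\widehat\Phi_T=p_0^h$.

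\textbf{Your route.} You prove the variational statement from scratch:
\begin{itemize}
\item Girsanov turns the control energy into $\mathrm{KL}(\mathbb Q^u\|\mathbb P^\phi)$ minus the constant $\mathrm{KL}(p_T^h\|p_T^\phi)$, which is shared by all admissible controls.
\item The chain rule, disintegrating at $Y_0$, bounds this below by $\mathrm{KL}(p_0^h\|p_0^\phi)$.
\item The Doob measure $\dd\mathbb P^h/\dd\mathbb P^\phi=h_0(X_0)/Z$ attains the bound, because it only reweights the time-$0$ marginal and keeps the reference bridges.
\end{itemize}

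\textbf{What each buys.} The paper's argument is shorter and exhibits the Doob transform as a Schr\"odinger bridge whose second potential is just the rescaled reference marginal. However, it inherits, without stating them, whatever hypotheses the cited characterization needs: that a solution of the Schr\"odinger system yields the optimizer, regularity, and consistency of the time-reversal convention.

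Your argument is self-contained and explains why the Doob process is optimal. It also gives the optimal cost explicitly as $\mathrm{KL}(p_0^h\|p_0^\phi)-\mathrm{KL}(p_T^h\|p_T^\phi)$. From the equality case of the chain rule, it yields uniqueness of the optimal path law, and hence of $u^\star$ up to null sets, which the paper does not obtain.

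The price is the path-level identity $\dd\mathbb P^h/\dd\mathbb P^\phi=h_0(X_0)/Z$. You correctly flag this as the delicate step, and you do need your It\^o computation on $\log h_t$ (which checks out via the backward Kolmogorov equation) together with a Novikov-type condition. The paper's $h$-transform lemma only supplies the marginals $p_t^h$, not the path measure.

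Your statement that the minimizer equals $\nabla_x\log h_t$ only $p_t^h\,\dd t$-a.e. is also slightly more careful than the paper's pointwise claim, and it is harmless for the SOC conclusion.
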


\begin{proof}
Please see \cref{sec:proof_cdm_learn} for a detailed proof.
\end{proof}

\begin{remark}[Minimum-Energy Modification and Practical Implication]
\label{rem:ratio_h_interpretation}
Given a target terminal tilt, the SOC problem selects, among all controlled reverse diffusions that realize the tilted endpoint marginals, the one with minimum control energy.
The RL methods under our framework are special cases of this principle: we choose $h_0(a,o)\propto r_\beta(o,a)$, so that $p_0^h(a|o)\propto r_\beta(o,a)p_0^\phi(a|o)$.
\cref{prop:cdm_solves_soc} shows that Conditional Drift Matching does not only recover the reward-guided terminal policy; it learns the minimum-energy drift modification that realizes this policy.
In practice, under the drift parametrization $g_\theta$ and the practical loss \eqref{eq:cdm_loss_prac}, the population target satisfies $g_\theta^\star=s_\phi-u_t^\star$, where $u_t^\star(x,o)$ is the optimal SOC control in \cref{prop:cdm_solves_soc}.
Thus Conditional Drift Matching learns the closest feasible modification of the pretrained score model in the SOC sense.
This provides a theoretical explanation for the stability and efficiency of DoobPO.
\end{remark}

\section{Experiments}
\label{sec:exp}
\begin{figure}[!t]
    \centering
    \includegraphics[width=1\linewidth]{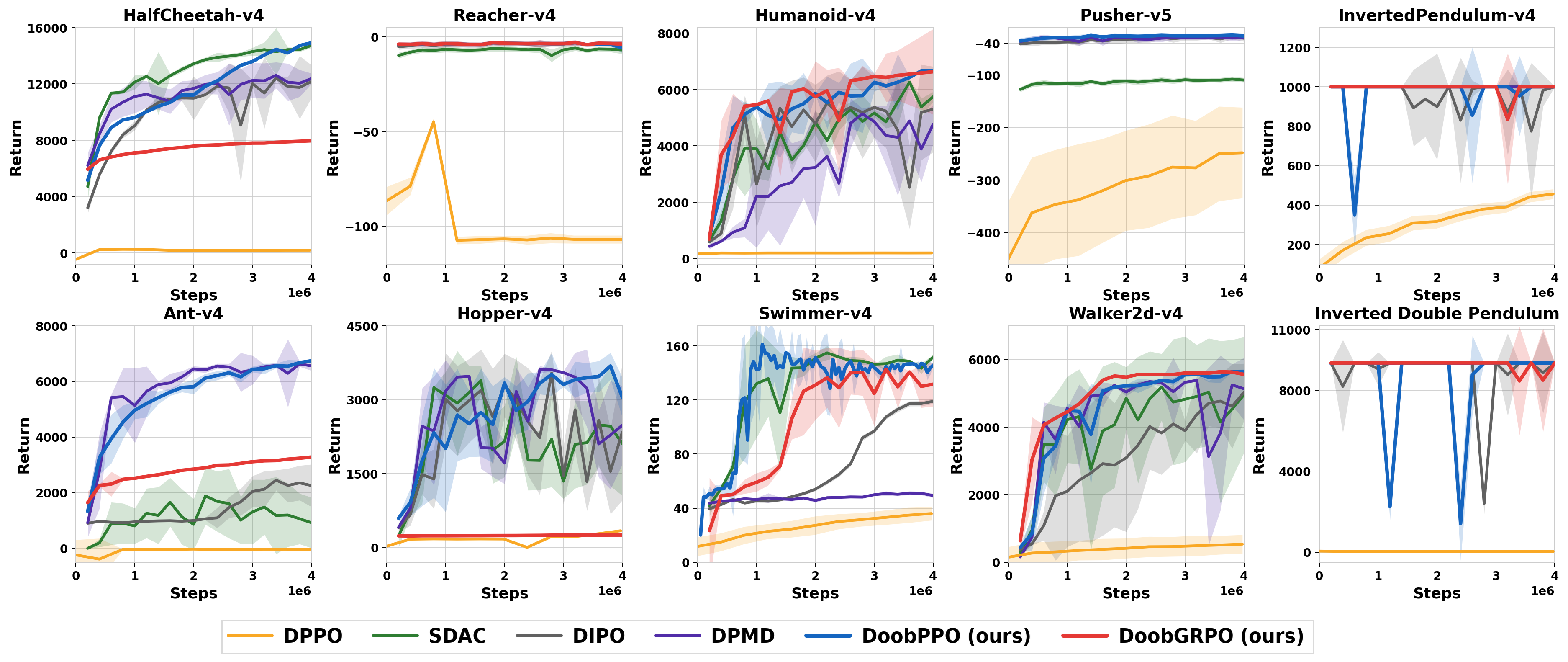}
    \caption{\small Learning curves on 10 MuJoCo continuous-control tasks, evaluated every 200k environment steps. Curves show mean undiscounted return over 3 training seeds, with shaded regions indicating one standard deviation. We compare DoobPO against strong diffusion-policy baselines DPPO, SDAC, DIPO, and DPMD.}
    \label{fig:mujoco10_dp}
\end{figure}

This section presents the experimental results.
We evaluate DoobPO on OpenAI Gym MuJoCo continuous-control benchmarks and on Robomimic manipulation tasks.
We describe the experimental setup in \cref{sec:exp_setup}.
We then report the results of RL fine-tuning diffusion and flow policies with DoobPO in \cref{sec:diffusion_exp} and \cref{sec:flow_exp}, and the Robomimic results in \cref{sec:robomimic_exp}.

\subsection{Experimental Setup}
\label{sec:exp_setup}

\textbf{Environments: OpenAI Gym MuJoCo.}
We evaluate our method on 10 OpenAI Gym MuJoCo continuous-control tasks covering locomotion, manipulation, and stabilization.
All diffusion-policy methods are trained from scratch for $4$M environment steps, while all flow-policy methods are trained from scratch for $5$M environment steps, under the same interaction budget within each setting.

\paragraph{Evaluation Metric.}
We use episode return as the evaluation metric, where higher return indicates better control performance.
During training, the policy is evaluated periodically, and at each evaluation point we report the mean return and its standard deviation over 3 random seeds.

\subsection{Diffusion Policy Comparison}
\label{sec:diffusion_exp}

\paragraph{Baselines.}
We compare DoobPO against recent diffusion-policy RL baselines, including DIPO \cite{yang2023policyrepresentationdiffusionprobability}, DPPO \cite{ren2024diffusion}, DPMD \cite{ma2025efficient}, DACER \cite{wang2024diffusionactorcriticentropyregulator}, and SDAC \cite{ma2025efficient}, as well as PPO \cite{schulman2017proximal}, TRPO \cite{schulman2015trust}, and SAC \cite{haarnoja2018soft}.
Please see \cref{sec:related_works} for details.

\begin{table*}[t]
\centering
\small
\setlength{\tabcolsep}{3.5pt}
\renewcommand{\arraystretch}{0.95}
\caption{\small Diffusion Policy Performance on MuJoCo environments
(mean $\pm$ std over 3 random seeds).
The best results are in bold and the second-best results are underlined.}
\label{tab:mujoco_grouped}
\begin{adjustbox}{max width=\textwidth}
\resizebox{0.9\linewidth}{!}{%
\begin{tabular}{llccccc}
\toprule
& Method
& HalfCheetah
& Reacher
& Humanoid
& Pusher
& InvertedPendulum \\
\midrule

\multirow{3}{*}{\textbf{Classic Model-Free RL}}
& PPO
& 1696 $\pm$ 223
& -3.63 $\pm$ 1.54
& 492 $\pm$ 113
& -25.28 $\pm$ 2.73
& \textbf{1000 $\pm$ 0} \\

& TRPO
& 2941 $\pm$ 417
& -3.83 $\pm$ 1.89
& 440 $\pm$ 82
& -31.84 $\pm$ 6.32
& \textbf{1000 $\pm$ 0} \\

& SAC
& 14422 $\pm$ 20
& -3.03 $\pm$ 1.21
& 5866 $\pm$ 16
& -23.96 $\pm$ 3.47
& \textbf{1000 $\pm$ 0} \\

\midrule

\multirow{7}{*}{\textbf{Diffusion Policy RL}}
& DIPO
& 12169 $\pm$ 1172
& -3.09 $\pm$ 1.36
& 5523 $\pm$ 980
& -27.85 $\pm$ 2.08
& \textbf{1000 $\pm$ 0} \\

& DPPO
& 248 $\pm$ 28
& -11.82 $\pm$ 1.28
& 195 $\pm$ 17
& -248.11 $\pm$ 86.32
& 456 $\pm$ 25 \\

& DPMD
& 12612 $\pm$ 932
& -2.97 $\pm$ 1.34
& 5138 $\pm$ 191
& -28.22 $\pm$ 6.06
& \textbf{1000 $\pm$ 0} \\

& DACER
& 14037 $\pm$ 370
& -2.973 $\pm$ 0.004
& 6460 $\pm$ 494
& -24.59 $\pm$ 0.37
& \textbf{1000 $\pm$ 0} \\

& SDAC
& \underline{14731 $\pm$ 166}
& -5.83 $\pm$ 1.36
& 6173 $\pm$ 1592
& -108 $\pm$ 5.23
& \textbf{1000 $\pm$ 0} \\

& \textbf{DoobPPO (ours)}
& \textbf{14921 $\pm$ 39}
& \underline{-2.96 $\pm$ 0.02}
& \textbf{6679 $\pm$ 63}
& \underline{-23.67 $\pm$ 3.16}
& \textbf{1000 $\pm$ 0} \\

& \textbf{DoobGRPO (ours)}
& 7800 $\pm$ 271
& \textbf{-2.95 $\pm$ 1.35}
& \underline{6634 $\pm$ 1517}
& \textbf{-23.62 $\pm$ 4.35}
& \textbf{1000 $\pm$ 0} \\

\midrule

& Method
& Ant
& Hopper
& Swimmer
& Walker2d
& Inverted2Pendulum \\
\midrule

\multirow{3}{*}{\textbf{Classic Model-Free RL}}
& PPO
& 1973 $\pm$ 67
& 1280 $\pm$ 197
& 121 $\pm$ 1.7
& 1242 $\pm$ 693
& \textbf{9360 $\pm$ 0} \\

& TRPO
& 3294 $\pm$ 76
& 3626 $\pm$ 9
& 42 $\pm$ 1.7
& 2765 $\pm$ 689
& 9188 $\pm$ 420 \\

& SAC
& 6534 $\pm$ 123
& 3533 $\pm$ 19
& 139 $\pm$ 1.1
& \textbf{6882 $\pm$ 60}
& \textbf{9360 $\pm$ 0} \\

\midrule

\multirow{7}{*}{\textbf{Diffusion Policy RL}}
& DIPO
& 2448 $\pm$ 361
& 3535 $\pm$ 456
& 119 $\pm$ 1.8
& 5028 $\pm$ 849
& 9351 $\pm$ 0.5 \\

& DPPO
& -29.9 $\pm$ 1.7
& 338 $\pm$ 5.1
& 36 $\pm$ 5
& 535 $\pm$ 278
& 32.7 $\pm$ 3.8 \\

& DPMD
& 6358 $\pm$ 276
& 3608 $\pm$ 73
& 51.1 $\pm$ 1.4
& 5381 $\pm$ 26
& 9355 $\pm$ 3.7 \\

& DACER
& \textbf{7137 $\pm$ 137}
& \textbf{3774 $\pm$ 106}
& 145.1 $\pm$ 5.4
& 5384 $\pm$ 1174
& 4096 $\pm$ 4778 \\

& SDAC
& 1880 $\pm$ 1025
& 3384 $\pm$ 303
& \underline{154 $\pm$ 5.1}
& 5181 $\pm$ 986
& 9353 $\pm$ 0.3 \\

& \textbf{DoobPPO (ours)}
& \underline{6738 $\pm$ 53}
& \underline{3681 $\pm$ 269}
& \textbf{161 $\pm$ 0.7}
& \underline{5642 $\pm$ 13}
& \underline{9356 $\pm$ 3} \\

& \textbf{DoobGRPO (ours)}
& 3278 $\pm$ 40
& 251 $\pm$ 1.4
& 142 $\pm$ 2.4
& 5636 $\pm$ 10
& \textbf{9360 $\pm$ 0} \\

\bottomrule
\end{tabular}}
\end{adjustbox}
\end{table*}

\paragraph{Diffusion Policy Results.}
Table~\ref{tab:mujoco_grouped} and Figure~1 report the
diffusion-policy results.
Overall, DoobPO achieves the highest mean returns among
diffusion-policy RL methods on 8 out of 10 MuJoCo tasks,
including ties.
DoobPPO obtains the best results on HalfCheetah, Humanoid,
and Swimmer, while DoobGRPO performs best on Reacher
and Pusher.
Both variants achieve competitive performance on the
remaining tasks, while DACER obtains the highest returns
on Ant and Hopper.

Compared with DIPO, DoobPPO improves performance by
22.6\% on HalfCheetah, 20.9\% on Humanoid,
175.2\% on Ant, and 12.2\% on Walker2d.
It also improves over DPMD by 18.3\%, 30.0\%,
6.0\%, and 4.9\% on the same tasks, respectively.
Against SDAC, DoobPPO achieves gains of 8.2\%
on Humanoid, 8.8\% on Hopper, and 8.9\%
on Walker2d.
The strong performance of DoobGRPO on Reacher and
Pusher further demonstrates that the framework is not
limited to a single PPO-style instantiation.
Together, these results establish DoobPO as a competitive
and flexible framework for diffusion-policy RL.

\subsection{Flow Policy Comparison}
\label{sec:flow_exp}

\begin{figure}[!t]
    \centering
    \includegraphics[width=1\linewidth]{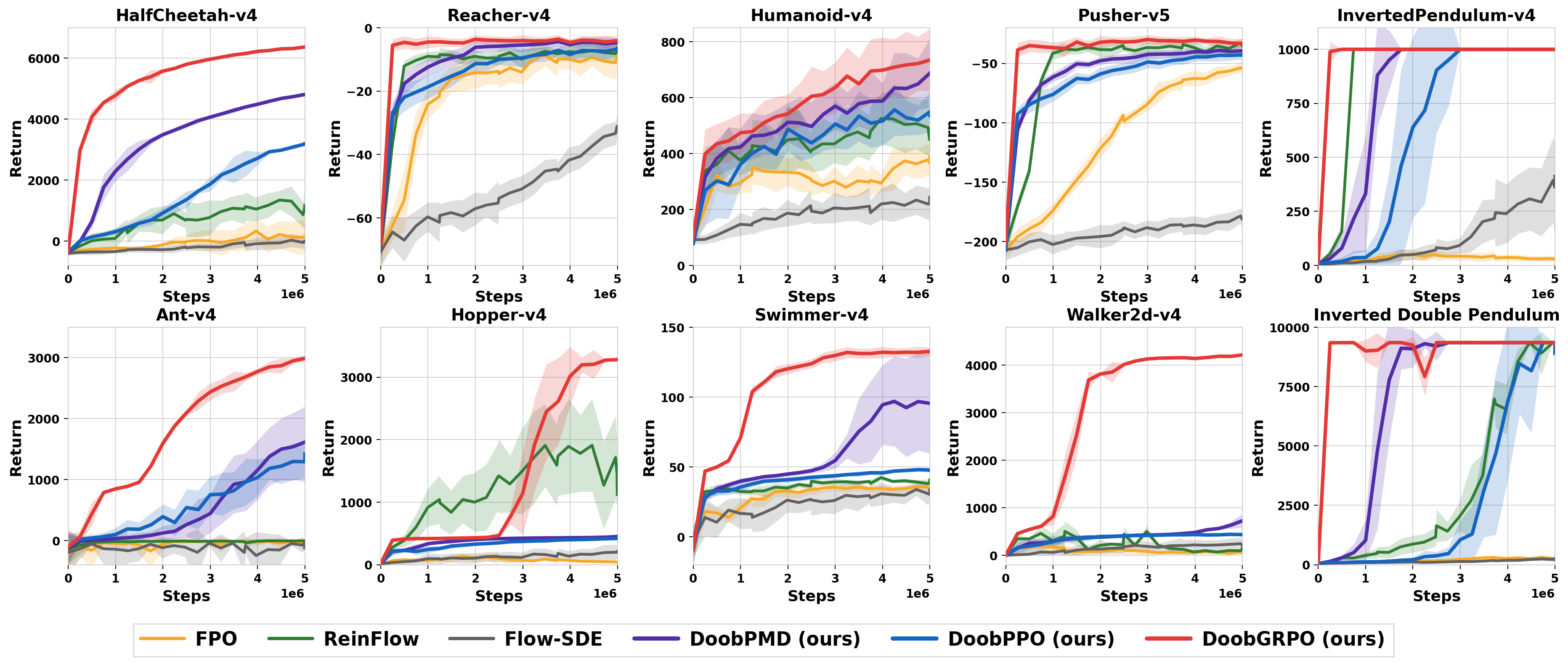}
    \caption{\small Learning curves on 10 MuJoCo continuous-control tasks, evaluated every 250k environment steps. Curves show mean undiscounted return over 3 training seeds, with shaded regions indicating one standard deviation. We compare DoobPO variants against flow-policy baselines FPO, ReinFlow, and Flow-SDE.}
    \label{fig:mujoco10_flow}
\end{figure}

\paragraph{Baselines.}
We compare DoobPO against three representative baselines for RL fine-tuning of flow policies: FPO \cite{mcallister2025flow}, ReinFlow \cite{zhang2025reinflow}, and Flow-SDE \cite{chen2025pirl}.
Please see \cref{sec:related_works} for more details.

\begin{table*}[t]
\centering
\small
\setlength{\tabcolsep}{3.5pt}
\renewcommand{\arraystretch}{0.95}
\caption{\small Flow Policy Performance on MuJoCo environments (mean $\pm$ std over 3 random seeds).}
\label{tab:mujoco_flow_grouped}
\begin{adjustbox}{max width=\textwidth}
\begin{tabular}{llccccc}
\toprule
& Method
& HalfCheetah
& Reacher
& Humanoid
& Pusher
& InvertedPendulum \\
\midrule

\multirow{6}{*}{\textbf{Flow Policy RL}}
& FPO
& 327 $\pm$ 399
& -8.79 $\pm$ 3.20
& 390 $\pm$ 110
& -53.77 $\pm$ 3.69
& 52 $\pm$ 24 \\

& ReinFlow
& 1344 $\pm$ 310
& -6.30 $\pm$ 2.83
& 525 $\pm$ 92
& \underline{-32.85 $\pm$ 3.81}
& \textbf{1000 $\pm$ 0} \\

& Flow-SDE
& 27 $\pm$ 219
& -31.08 $\pm$ 3.53
& 244 $\pm$ 60
& -178.42 $\pm$ 6.41
& 413 $\pm$ 238 \\

& \textbf{DoobPMD (ours)}
& \underline{4810 $\pm$ 53}
& \underline{-4.35 $\pm$ 1.40}
& \underline{688 $\pm$ 123}
& -39.10 $\pm$ 4.08
& \textbf{1000 $\pm$ 0} \\

& \textbf{DoobPPO (ours)}
& 3190 $\pm$ 60
& -6.73 $\pm$ 2.16
& 556 $\pm$ 94
& -42.81 $\pm$ 3.75
& \textbf{1000 $\pm$ 0} \\

& \textbf{DoobGRPO (ours)}
& \textbf{6368 $\pm$ 51}
& \textbf{-3.60 $\pm$ 1.37}
& \textbf{735 $\pm$ 111}
& \textbf{-29.90 $\pm$ 3.74}
& \textbf{1000 $\pm$ 0} \\

\midrule

& Method
& Ant
& Hopper
& Swimmer
& Walker2d
& Inverted2Pendulum \\
\midrule

\multirow{6}{*}{\textbf{Flow Policy RL}}
& FPO
& -13 $\pm$ 13
& 137 $\pm$ 55
& 35.8 $\pm$ 2.51
& 186 $\pm$ 140
& 314 $\pm$ 84 \\

& ReinFlow
& -5 $\pm$ 12
& \underline{1911 $\pm$ 740}
& 42.3 $\pm$ 1.77
& 503 $\pm$ 213
& \textbf{9358 $\pm$ 1} \\

& Flow-SDE
& -45 $\pm$ 41
& 210 $\pm$ 100
& 33.9 $\pm$ 4.44
& 243 $\pm$ 128
& 268 $\pm$ 68 \\

& \textbf{DoobPMD (ours)}
& \underline{1614 $\pm$ 583}
& 452 $\pm$ 4
& \underline{96.9 $\pm$ 32.13}
& \underline{726 $\pm$ 144}
& 9348 $\pm$ 0 \\

& \textbf{DoobPPO (ours)}
& 1417 $\pm$ 203
& 423 $\pm$ 10
& 48.0 $\pm$ 1.30
& 440 $\pm$ 16
& \underline{9350 $\pm$ 1} \\

& \textbf{DoobGRPO (ours)}
& \textbf{2986 $\pm$ 56}
& \textbf{3282 $\pm$ 7}
& \textbf{132.7 $\pm$ 2.71}
& \textbf{4215 $\pm$ 13}
& \underline{9350 $\pm$ 1} \\

\bottomrule
\end{tabular}
\end{adjustbox}
\end{table*}

\paragraph{Flow Policy Results.}
Table~\ref{tab:mujoco_flow_grouped} and Figure~\ref{fig:mujoco10_flow} report the flow-policy results.
Overall, DoobGRPO achieves the strongest performance across the MuJoCo benchmark under the same 5M-step training budget, obtaining the best results on 9 out of 10 tasks (including ties).
Compared with prior flow-policy RL baselines, DoobGRPO gives large gains on key locomotion tasks.
For example, it improves over the strongest baseline ReinFlow by \textbf{373.8\%} on \texttt{HalfCheetah}, \textbf{71.7\%} on \texttt{Hopper}, \textbf{213.7\%} on \texttt{Swimmer}, and \textbf{738.0\%} on \texttt{Walker2d}.
On the manipulation tasks, DoobGRPO also achieves the best performance on \texttt{Reacher} and \texttt{Pusher}, improving over ReinFlow by \textbf{42.9\%} and \textbf{9.0\%}.
DoobPMD serves as a strong second-best on several tasks, including \texttt{HalfCheetah}, \texttt{Reacher}, and \texttt{Humanoid}.
This further demonstrates the generality of the DoobPO framework across different policy-gradient methods.
These results show that DoobPO provides a useful RL framework for flow policies.

\subsection{Robomimic Manipulation}
\label{sec:robomimic_exp}

\paragraph{Setup.}
To evaluate DoobPO beyond MuJoCo continuous control, we test it on the \texttt{Can} and \texttt{Square} manipulation tasks from Robomimic \cite{mandlekar2021matters}.
Both tasks use low-dimensional state observations, sparse binary success rewards, and an action-chunk horizon of 4.
The maximum episode lengths are 300 primitive steps for \texttt{Can} and 400 for \texttt{Square}.
We follow the official Robomimic evaluation protocol with 50 parallel environments.
An episode is successful if the binary task-completion reward is achieved at least once, and the success rate is computed over all completed evaluation episodes.
We compare DoobPPO with DPPO \cite{ren2024diffusion}.
Both methods are initialized from the same pretrained diffusion behavior-cloning policy and trained for 151 iterations on \texttt{Can} and 200 iterations on \texttt{Square}, and results are averaged over 3 random seeds.
The DoobPPO hyperparameters are listed in \cref{tab:hyperparameters_robomimic}.

\paragraph{Results.}
DoobPPO matches or slightly improves upon DPPO on both tasks, reaching success rates of 99.46\% versus 99.23\% on \texttt{Can} and 99.78\% versus 99.59\% on \texttt{Square}.
Since both tasks are close to saturation, the main takeaway is not a large performance gap but that DoobPO extends to sparse-reward manipulation settings.

\subsection{Implementation Details and Ablation Studies}

Additional implementation details are provided in \cref{sec:implementation}, including computational resources, training hyperparameters, and other reproducibility details.
We also include ablation studies in \cref{sec:ablation} to examine the effect of the ratio network size and the number of sampling steps.

\section{Discussion and Conclusion}
\label{sec:conclusion}In this work, we introduce \textbf{Doob Policy Optimization (DoobPO)}, a unified framework for RL fine-tuning of diffusion and flow policies.
The framework supports a broad family of policy updates and avoids exact action-likelihood evaluation by separating policy improvement from generative policy learning.
Theoretically, we justify DoobPO from two directions.
First, we characterize the ratio-tilted diffusion process through Doob's $h$-transform
(\cref{lem:algo_correctness}).
We then show that, for each fixed state, the gradient of the conditional drift matching objective is proportional to that of the ideal objective for both diffusion and flow policies
(\cref{thm:grad_equi_diff,thm:grad_equi_fm}).
Second, under the stated assumptions, the population minimizer of conditional drift matching induces a diffusion process that solves the corresponding endpoint-constrained stochastic optimal control problem
(\cref{prop:cdm_solves_soc}).
The resulting process is the minimum-energy modification of the reference diffusion that realizes the prescribed tilted endpoint marginals.
This provides a control-theoretic interpretation of how the desired policy update is transferred into generative dynamics.
Experimentally, we evaluate DoobPO on MuJoCo benchmarks with both diffusion and flow policies and observe superior overall performance against strong baselines
(\cref{sec:diffusion_exp,sec:flow_exp}).

More broadly, DoobPO provides a common interface between policy optimization and generative model training.
Policy updates with analytic ratios and updates represented by learned ratio models are accommodated within the same conditional drift matching framework
(\cref{sec:unified_framework}).
This separates the choice of the policy update rule from the choice of the generative policy class.
Nevertheless, our study mainly focuses on continuous generation, and the empirical evaluation is limited to MuJoCo continuous-control benchmarks and Robomimic manipulation tasks.
The theoretical results characterize population objectives under the stated assumptions.
An important direction for future work is to quantify how ratio estimation error and approximate drift matching affect the realized policy update.
We also plan to extend the framework to discrete generation and evaluate its applicability to more complex policy learning settings.

\appendix
\label{sec:append}
\part*{Appendix}
{
\setlength{\parskip}{-0em}
\startcontents[sections]
\printcontents[sections]{ }{1}{}
}

{
\setlength{\parskip}{-0em}
\startcontents[sections]
\printcontents[sections]{ }{1}{}
}
\clearpage

\section*{Impact Statement}
This work advances the methodological and theoretical understanding of diffusion and flow policy optimization and presents no foreseeable negative social impacts.
\section*{LLM Usage Disclosure}
We used large language models (LLMs) to aid and polish writing, such as improving clarity, grammar, and conciseness. We also used LLMs for retrieval and discovery, for example exhausting literature to identify potential missing related work. All technical content, proofs, experiments, and results are original contributions by the authors.

\section{Notation}
\label{sec:notation}
We use $\E[\cdot]$ and $\P(\cdot)$ to denote expectation and probability.
Unless otherwise specified, any expectation involving a random time $t$ is taken with respect to the uniform distribution on $[0,T]$, i.e., $t\sim \mathrm{Unif}([0,T])$.
For a time horizon $T>0$, let $\Omega=C([0,T];\mathbb R^d)$ be the trajectory space.
For any path measure $\mathbb Q$ on $\Omega$, we denote its time-$t$ marginal by $q_t$ and its conditional distribution of $X_t$ given $X_r=x_r$ by $q_{t|r}(\cdot|x_r)$.
For diffusion policies, $t=0$ denotes the action endpoint and $t=T$ denotes the noise endpoint, and the reverse denoising process is simulated from $T$ to $0$.
For flow policies, the convention is reversed: $t=0$ denotes the noise endpoint and $t=T$ denotes the action endpoint.
We write $p_t^\phi(\cdot|o)$ for the marginal distributions induced by the pretrained score model $s_\phi$ or velocity model $u_\phi$, and $p_t^h(\cdot|o)$ for the marginal distributions induced by the Doob-tilted process introduced in \cref{sec:method}.
The learned ratio model is denoted by $r_\beta(o,a)$, and in the RL setting the terminal tilt is chosen proportional to this learned ratio.
In \cref{sec:soc}, $\mathbb Q^u(\cdot|o)$ denotes the controlled path measure induced by the control $u_t(\cdot,o)$, and $q_t^u(\cdot|o)$ denotes its time-$t$ marginal distribution.
When no ambiguity arises, we use the same lowercase notation for a probability distribution and its density.
Unless otherwise specified, the state distribution used in expectations is assumed to have full support on $\mathcal S$.
We write $\mathbb R_+ := (0,\infty)$.
We say the model class $\{s_\theta:\theta\in\Theta\}$ is realizable for the Doob drift if there exists $\theta^\star\in\Theta$ such that
$s_{\theta^\star}(x,t,o)=\nabla_x\log h_t(x,o)$
for every $x\in\mathcal A$, $t\in[0,T]$, and $o \in\mathcal S$.

\section{Related Works}
\label{sec:related_works}

\paragraph{Diffusion Model and Diffusion Policy.}
Diffusion models \cite{sohldickstein2015deep,ho2020denoising} are probabilistic generative models that progressively transform noise into samples from a target distribution. They are closely connected to score-based generative modeling \cite{song2019generative,song2021scorebased}, which offers a complementary view based on learning the gradient field of the data distribution and unifies diffusion-based generation under the stochastic differential equation framework. Owing to their strong expressiveness for complex, high-dimensional, and multimodal distributions, diffusion models have become increasingly attractive for control and decision-making. In particular, diffusion policies parameterize action distributions with diffusion models and generate actions through iterative denoising, yielding richer policy classes than standard unimodal Gaussian actors. Prior work has shown strong empirical performance of diffusion policies in visuomotor control and robotic manipulation \cite{chi2025diffusion}, as well as in goal-conditioned imitation learning and language- or play-conditioned skill learning \cite{reuss2023goalconditionedimitationlearningusing,chen2023playfusionskillacquisitiondiffusion}.

More recently, diffusion policies have been extended to online reinforcement learning. Methods such as DPPO and related diffusion-policy RL approaches show that diffusion-based policy classes can be optimized effectively beyond pure imitation learning \cite{ren2024diffusion,ma2025efficient}. Our work builds on this line of research, but focuses on RL fine-tuning of diffusion policies in the likelihood-intractable setting.

We compare against five recent diffusion-policy RL baselines: DIPO \cite{yang2023policyrepresentationdiffusionprobability}, DPPO \cite{ren2024diffusion}, DPMD \cite{ma2025efficient}, DACER \cite{wang2024diffusionactorcriticentropyregulator}, and SDAC \cite{ma2025efficient}. These baselines cover both off-policy and on-policy diffusion RL methods: DIPO, DPMD, DACER, and SDAC are off-policy, whereas DPPO is on-policy. DIPO adopts a two-stage strategy that first updates state-action particles using value gradients and then distills them into a diffusion policy. 
DPPO formulates policy optimization over a diffusion-augmented MDP and applies Proximal Policy Optimization to the resulting process. 
DPMD studies efficient online RL for diffusion policies through a policy-mirror-descent-style update with an analytic target ratio. DACER directly backpropagates value gradients through the reverse diffusion process and introduces entropy regularization to improve exploration. SDAC extends diffusion-policy optimization to the soft-actor-critic regime with an analytic-ratio-based update. 

Our work generalizes the analytic-ratio ideas underlying DPMD and SDAC into a unified likelihood-free framework, extending them from diffusion policies to flow matching policies and to a broader class of policy optimization methods, including PPO \cite{schulman2017proximal} and GRPO \cite{shao2024deepseekmath}.

\paragraph{Flow Matching Model and Flow Policy.}
Flow matching \cite{lipman2022flow,liu2022flow} provides an alternative to diffusion models by learning a deterministic ODE transport between a noise prior and the data distribution. Compared to diffusion models, flow matching avoids simulating stochastic dynamics during training and offers faster sampling through fewer integration steps. These properties have made flow matching models increasingly popular in decision-making applications. $\pi_0$ \cite{black2024pi} and its successor $\pi_{0.5}^*$ \cite{intelligence2025pi} adopt flow matching as the generative backbone of vision-language-action models for general robot manipulation, demonstrating that flow policies can scale to diverse real-world tasks. However, the likelihood of a flow policy is as intractable as that of a diffusion policy, since it requires solving an ODE and computing the associated log-determinant of the Jacobian. This intractability poses the same challenge for online RL fine-tuning: standard policy gradient methods that rely on likelihood ratios cannot be directly applied. Flow Q-Learning \cite{park2025flow} is an early attempt that sidesteps this issue by using Q-function guidance to steer the flow sampling process, analogous to Q-guided diffusion methods. For policy gradient approaches, several workarounds have been proposed. FPO \cite{mcallister2025flow} is a PPO-style method that replaces explicit policy ratios with an implicit surrogate derived from the flow matching training loss, avoiding likelihood computation entirely. ReinFlow \cite{zhang2025reinflow} converts a deterministic flow policy into a stochastic short-horizon Markov process via learned noise injection, enabling tractable stepwise likelihood computation for policy gradient updates. Flow-SDE \cite{chen2025pirl} instead replaces the deterministic flow ODE with an analytically constructed stochastic differential equation, yielding explicit transition probabilities suitable for policy gradient estimation. All these methods require some form of likelihood evaluation or surrogate at the denoising steps; in contrast, our approach bypasses this requirement entirely by learning a lightweight ratio network that directly estimates the probability ratio needed for policy optimization.

\section{\texorpdfstring{Proof of \cref{lem:algo_correctness}}{Proof of Theorem \ref{lem:algo_correctness}}}
\label{sec:proof_corr}
\begin{lemma}[\cref{lem:algo_correctness} Restated: Doob's $h$-Transform]
Fix a state $o \in\mathcal{S}$.
Let $(X_t)_{t\in[0,T]}$ follow the reverse-time denoising SDE \eqref{eq:reverse_sde} with $\nabla_x \log p_t(x|o)$ replaced by its estimator $s_\phi(x,t,o)$, inducing conditional marginals 
$\{p_t^\phi(\cdot|o)\}_{t\in[0,T]}$.
Let $h_0(\cdot,o):\mathcal{A}\to\R_+$ be a positive integrable function.
Let $h(\cdot,o,\cdot):\mathcal{A}\times[0,T]\to \R_+$ be the associated Doob $h$-function, denoted by $h_t(x,o)$, defined as
\begin{align*}
    h_t(x,o)
    =
    \E \big[h_0(X_0,o)| X_t=x,o\big],
\end{align*}
where the conditional expectation is taken under the original (untilted) denoising process induced by $s_\phi$.
The Doob $h$-transformed denoising process starts from
\[
    X_T^h \sim p_T^h(\cdot|o),
    \qquad
    p_T^h(x|o)
    :=
    \frac{
        p_T^\phi(x|o) h_T(x,o)
    }{
        \E_{X_0\sim p_0^{\phi}(\cdot|o)}[h_0(X_0,o)]
    },
\]
and evolves backward from $T$ to $0$ according to
\begin{align*}
    \dd X_t^{h}
    =
    \Big[
        f(X_t^{h},t)
        -\sigma_t^2 s_\phi(X_t^{h},t,o)
        -\sigma_t^2 \nabla_x \log h_t(X_t^{h},o)
    \Big]\dd t
    +\sigma_t\dd W_t.
\end{align*}
Then its conditional marginals satisfy
\begin{align*}
    p_t^{h}(x|o)
    =
    \frac{
        p_t^\phi(x|o)h_t(x,o)
    }{
        \E_{X_0\sim p_0^{\phi}(\cdot|o)}[h_0(X_0,o)]
    },
    \quad t\in[0,T].
\end{align*}
In particular, taking $h_0(x,o)=r_\beta(o,x)$ yields
\[
    p_0^h(x|o)
    \propto
    p_0^{\phi}(x|o)r_\beta(o,x).
\]
Since $p_0^{\phi}(\cdot|o)=\pi_{\mathrm{old}}(\cdot|o)$, the time-$0$ policy induced by the tilted denoising process satisfies
\begin{align*}
    \pi(a|o)
    \propto
    r_\beta(o,a)\pi_{\mathrm{old}}(a|o).
\end{align*}
\end{lemma}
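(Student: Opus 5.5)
We prove the lemma by checking that the candidate density $\tilde p_t(x) := p_t^\phi(x|o)h_t(x,o)/Z$, with $Z := \E_{X_0\sim p_0^\phi(\cdot|o)}[h_0(X_0,o)]$, solves the Fokker--Planck equation of the transformed SDE \eqref{eq:new_process}. Since $\tilde p_T = p_T^h$ by construction, uniqueness of that Fokker--Planck equation then gives $p_t^h=\tilde p_t$ for all $t$. We fix $o$ throughout and drop it from the notation. It is convenient to reparametrize by forward time $s=T-t$, so that both the reference reverse SDE and the transformed SDE become standard forward Itô diffusions. The reference SDE has drift $b(x,t)=-(f-\sigma_t^2 s_\phi)$ in $s$-time and diffusion coefficient $\sigma_t$. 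With $a_t:=\sigma_t^2/2$, its generator is $\mathcal L_t\psi = b\cdot\nabla\psi + a_t\Delta\psi$.

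\textbf{Step 1 (backward equation for $h$).} We first show that $h_t(x)=\E[h_0(X_0)\mid X_t=x]$ satisfies the Kolmogorov backward equation of the reference process in $s$-time, $\partial_s h + \mathcal L h = 0$, with terminal condition $h_0$ at $s=T$. This follows from the Markov property: $h_t(X_t)$ is a martingale along the reverse process, because $\E[h_0(X_0)\mid\mathcal F_t]=h_t(X_t)$. Itô's formula then forces the drift term to vanish.

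\textbf{Step 2 (forward equation for $p^\phi$).} In $s$-time the density satisfies $\partial_s p^\phi = -\nabla\cdot(b\,p^\phi) + a\Delta p^\phi$.

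\textbf{Step 3 (product computation).} This is the main calculation. We compute $\partial_s(p^\phi h)$ using Steps 1–2 and compare it with the Fokker--Planck operator of the transformed drift $b+2a\nabla\log h$. The transformed drift equals $b - \sigma_t^2\nabla\log h\cdot(-1)$ after the sign flip for reverse time, which matches \eqref{eq:new_process}. We need to verify the identity
\[
\partial_s(p^\phi h) = -\nabla\cdot\big((b+2a\nabla\log h)p^\phi h\big)+a\Delta(p^\phi h).
\]
Expanding $a\Delta(p^\phi h)=a h\Delta p^\phi + 2a\nabla p^\phi\cdot\nabla h + a p^\phi\Delta h$ and $\nabla\cdot(2a p^\phi\nabla h)=2a\nabla p^\phi\cdot\nabla h+2ap^\phi\Delta h$, the right-hand side reduces to
\[
h\big(-\nabla\cdot(bp^\phi)+a\Delta p^\phi\big) - p^\phi\big(b\cdot\nabla h + a\Delta h\big).
\]
By Steps 2 and 1 this equals $h\,\partial_s p^\phi + p^\phi\,\partial_s h$, which is the left-hand side. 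The identity is therefore purely algebraic, and it is unaffected by dividing through by the constant $Z$.

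\textbf{Step 4 (conclusion).} We check that $\int \tilde p_t\,dx = \E[h_t(X_t)]/Z = \E[h_0(X_0)]/Z = 1$ by the tower property, so $\tilde p_t$ is a probability density. Under standard regularity and growth assumptions on the coefficients, $h_0>0$ integrable, and $h_t>0$ so that $\log h_t$ is defined, the Fokker--Planck equation has a unique solution. Hence $p_t^h=\tilde p_t$. Setting $t=0$ gives $p_0^h\propto p_0^\phi h_0$, and the choice $h_0=r_\beta(o,\cdot)$ together with $p_0^\phi=\pi_{\mathrm{old}}$ yields the policy statement.

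\textbf{Main obstacle.} We expect the difficulty to lie not in the algebra but in Step 1 together with the justification of uniqueness. Step 1 requires $h$ to be $C^{2,1}$, which needs smoothness of the reverse transition kernel. The uniqueness argument in Step 4 requires $\nabla\log h$ to be regular enough for the transformed SDE to be well posed. We would state these as regularity assumptions, as is standard. Alternatively, we could use the Girsanov route: the path measure with density $h_0(X_0)/Z$ relative to $\mathbb P^\phi$ has martingale density process $h_t(X_t)/Z$, and its marginals are immediately $p_t^\phi h_t/Z$. Itô's formula identifies the added drift as $\sigma_t^2\nabla\log h$ with the sign of \eqref{eq:new_process}.
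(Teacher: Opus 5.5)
Your proposal is correct, but it takes a different route from the paper. The paper verifies nothing directly. It quotes the reverse-time Doob $h$-transform of \cite{nguyen2025h} (Proposition~2) for the form of the drift in \eqref{eq:new_process}. It quotes the corresponding marginal identity (Proposition~1 there; see also \cite{zhu2026training}) for $p_t^h=p_t^\phi h_t/Z$. It then sets $t=0$. You instead give a self-contained Fokker--Planck verification in forward time $s=T-t$. You show that $h$ solves the backward Kolmogorov equation of the reference process and that $p^\phi$ solves its forward equation. The product computation in your Step~3 then shows that $p^\phi h/Z$ solves the Fokker--Planck equation of the transformed SDE, with the correct initial law at $t=T$ and unit mass by the tower property. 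That computation checks out, including the sign bookkeeping giving transformed drift $b+\sigma_t^2\nabla\log h$. The citation buys brevity. Your argument buys transparency: it names exactly the hypotheses being used, namely $C^{2,1}$ smoothness of $h$ and well-posedness with the possibly unbounded drift $\nabla\log h_t$. It also shows the identity needs only that the $p_t^\phi$ are the marginals of the reference SDE, not that $s_\phi$ is the true score of $p_t^\phi$, which is exactly the setting of the lemma. Your Girsanov alternative is the cleaner of your two routes. Reweighting $\mathbb P^\phi$ by $h_0(X_0)/Z$ yields the marginals $p_t^\phi h_t/Z$ immediately, and it requires only uniqueness in law of the transformed SDE rather than uniqueness for the PDE. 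It also gives the $t=0$ marginal directly from the change of measure. By contrast, for a merely integrable $h_0$ the PDE argument has to be run on $t\in(0,T]$ and closed by a limit, since $\nabla\log h_0$ need not exist.
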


\begin{proof}
Conditioning on the fixed state $o$, the original denoising process is a reverse-time Markov process
with marginals $\{p_t^\phi(\cdot|o)\}_{t\in[0,T]}$ and time-$0$ tilting function $h_0(\cdot,o)$.
By the continuous-time Doob $h$-transform for reverse-time diffusion processes
\cite[Proposition~2]{nguyen2025h}, the transformed process evolves backward from $T$ to $0$ according to
\begin{align*}
    \dd X_t^{h}
    =
    \Big[
        f(X_t^{h},t)
        -\sigma_t^2 s_\phi(X_t^{h},t,o)
        -\sigma_t^2 \nabla_x \log h_t(X_t^{h},o)
    \Big]\dd t
    +\sigma_t\dd W_t.
\end{align*}
which is exactly \eqref{eq:new_process}.
It remains to identify its marginals. By the reverse-time Doob bridge marginal identity
\cite[Proposition~1]{nguyen2025h}, also see \cite[Section~3.2]{zhu2026training}, if the transformed process starts from
\[
    p_T^h(x|o)
    =
    \frac{
        p_T^\phi(x|o)h_T(x,o)
    }{
        \E_{X_0\sim p_0^\phi(\cdot|o)}[h_0(X_0,o)]
    },
\]
then, for every $t\in[0,T]$,
\[
    p_t^h(x|o)
    =
    \frac{
        p_t^\phi(x|o)h_t(x,o)
    }{
        \E_{X_0\sim p_0^\phi(\cdot|o)}[h_0(X_0,o)]
    }.
\]
This proves \eqref{eq:p_t^h}.
Taking $t=0$ gives
\[
    p_0^h(a|o)
    =
    \frac{
        p_0^\phi(a|o)h_0(a,o)
    }{
        \E_{X_0\sim p_0^\phi(\cdot|o)}[h_0(X_0,o)]
    }.
\]
In particular, with $h_0(a,o)=r_\beta(o,a)$,
\[
    p_0^h(a|o)
    \propto
    p_0^\phi(a|o)r_\beta(o,a).
\]
Since $p_0^\phi(\cdot|o)=\pi_{\mathrm{old}}(\cdot|o)$, the time-$0$ policy induced by the tilted
denoising process satisfies
\[
    \pi(a|o)
    \propto
    r_\beta(o,a)\pi_{\mathrm{old}}(a|o).
\]
\end{proof}

\section{\texorpdfstring{Proof of \cref{thm:grad_equi_diff}}{Proof of Theorem \ref{thm:grad_equi_diff}}}
\label{sec:proof_grad_equi}

\begin{theorem}[\cref{thm:grad_equi_diff} Restated: Gradient Equivalence for Diffusion]
\label{thm:grad_equi_re}
    Let $\mathcal{L}_{\mathrm{DM}}$ and $\mathcal{L}_{\mathrm{CDM}}$ be defined in \eqref{eq:def_DM} and \eqref{eq:def_cDM}, respectively.
Fix state $o \in\mathcal S$, define normalizing constant
$Z_{\beta}(o) := \E_{X_0 \sim p_0^\phi(\cdot|o)}\big[r_\beta(o,X_0)\big]$and 
\begin{align*}
    \ell_{\mathrm{DM}}(\theta,o)
    &:=
    \E_{t,X_t\sim p_t^h(\cdot|o)}
    \left[
        \|
        \nabla_x\log h_t(X_t,o)-s_\theta(X_t,t,o)
        \|_2^2
    \right],
    \\
    \ell_{\mathrm{CDM}}(\theta,o)
    &:=
    \E_{t,X_t\sim p_t^\phi(\cdot|o),X_0\sim p_{0|t}^\phi(\cdot|X_t,o)}
    \left[
        r_\beta(o,X_0)
        \|
        \nabla_x\log p_{0|t}^\phi(X_0|X_t,o)
        -
        s_\theta(X_t,t,o)
        \|_2^2
    \right].
\end{align*}
Then their gradients satisfy
\begin{align*}
    \nabla_\theta \ell_{\mathrm{CDM}}(\theta,o)
    =
    Z_{\beta}(o) \cdot \nabla_\theta \ell_{\mathrm{DM}}(\theta,o).
\end{align*}
\end{theorem}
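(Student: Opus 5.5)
Both objectives are quadratic in $s_\theta$, so the natural route is the standard denoising-score-matching argument. Expand each squared norm, discard the terms that do not depend on $\theta$ (they vanish under $\nabla_\theta$), and show that the remaining cross terms and quadratic terms agree up to the factor $Z_\beta(o)$. Throughout, $o$ is fixed, $h_0(\cdot,o)=r_\beta(o,\cdot)$, and by \cref{lem:algo_correctness} we have $p_t^h(x|o)=p_t^\phi(x|o)h_t(x,o)/Z_\beta(o)$, because $\E_{p_0^\phi}[h_0]=Z_\beta(o)$. The proof reduces to two identities, applied for each fixed $t$ and then averaged over $t\sim U[0,T]$.

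\textbf{Quadratic terms.} In $\ell_{\mathrm{CDM}}$ the term $\|s_\theta(X_t,t,o)\|_2^2$ depends only on $X_t$. First integrate out $X_0\sim p_{0|t}^\phi(\cdot|X_t,o)$, which replaces $r_\beta(o,X_0)$ by $\E[r_\beta(o,X_0)|X_t]=h_t(X_t,o)$. What remains is
\[
\int p_t^\phi(x|o)h_t(x,o)\|s_\theta(x,t,o)\|_2^2\,\dd x = Z_\beta(o)\,\E_{p_t^h}\|s_\theta\|_2^2 ,
\]
and the right-hand side is exactly $Z_\beta(o)$ times the quadratic term of $\ell_{\mathrm{DM}}$.

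\textbf{Cross terms.} In $\ell_{\mathrm{CDM}}$ the cross term is $-2\int p_t^\phi(x)\int p_{0|t}^\phi(x_0|x)\,r_\beta(o,x_0)\,\langle\nabla_x\log p_{0|t}^\phi(x_0|x),s_\theta(x)\rangle\,\dd x_0\,\dd x$. Use $p_{0|t}\nabla_x\log p_{0|t}=\nabla_x p_{0|t}$ and exchange $\nabla_x$ with the $x_0$-integral. The inner integral then becomes $\nabla_x\int p_{0|t}^\phi(x_0|x)r_\beta(o,x_0)\,\dd x_0=\nabla_x h_t(x,o)=h_t(x,o)\nabla_x\log h_t(x,o)$. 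The cross term is therefore $-2\int p_t^\phi h_t\langle\nabla_x\log h_t,s_\theta\rangle=-2Z_\beta(o)\,\E_{p_t^h}\langle\nabla_x\log h_t,s_\theta\rangle$, which is $Z_\beta(o)$ times the cross term of $\ell_{\mathrm{DM}}$.

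Combining the two identities, $\ell_{\mathrm{CDM}}(\theta,o)=Z_\beta(o)\,\ell_{\mathrm{DM}}(\theta,o)+C(o)$, where $C(o)$ does not depend on $\theta$. Differentiating in $\theta$ gives the claim.

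\textbf{Main obstacle.} The delicate step is justifying the interchange of $\nabla_x$ with the $x_0$-integral when writing $\nabla_x h_t$. The gradient here acts on the conditioning variable of $p_{0|t}^\phi$, so we need enough regularity and integrability (e.g.\ a dominating function for $r_\beta(o,x_0)\,|\nabla_x p_{0|t}^\phi(x_0|x,o)|$ locally uniformly in $x$) to apply dominated convergence. The same kind of condition is needed to interchange $\nabla_\theta$ with the expectations. I would state these as mild regularity assumptions, in the spirit of the standard denoising score matching equivalence. I would also note that $Z_\beta(o)<\infty$ by integrability of $h_0$, so every term is finite.
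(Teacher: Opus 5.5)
Your proposal is correct and is essentially the paper's proof: the same expansion of both squared norms, the same use of $p_t^h=p_t^\phi h_t/Z_\beta(o)$ with the tower property for the quadratic term, and the same use of $\nabla_x h_t=\int r_\beta\,\nabla_x p^\phi_{0|t}\,\dd x_0$ for the cross term, giving $\ell_{\mathrm{CDM}}=Z_\beta(o)(\ell_{\mathrm{DM}}-C_1)+C_2$ (you only run the computation from CDM toward DM rather than the reverse). One correction to your closing remark: $Z_\beta(o)<\infty$ does not make the $\theta$-free constants finite, and $C_2=\E[r_\beta(o,X_0)\|\nabla_x\log p^\phi_{0|t}(X_0|X_t,o)\|_2^2]$ is the usual denoising-score-matching constant, which typically blows up as $t\to0$ when $t\sim U[0,T]$, so you need either a truncation $t\ge\epsilon$ or a direct comparison of the gradients (as the paper does in the flow case); the paper's diffusion proof leaves this assumption implicit too.
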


\begin{proof}
For notational simplicity, we first present the proof in the unconditional setting.
That is, the conditioning variable $o$ is fixed and omitted from the notation.
The conditional statement follows by applying the same argument pointwise for each $o \in\mathcal S$, with
$p_t(\cdot)$, $p_t^h(\cdot)$, $p_{0|t}(\cdot|\cdot)$, $h_t(\cdot)$, and $r_\beta(\cdot)$ replaced by
$p_t^\phi(\cdot|o)$, $p_t^h(\cdot|o)$, $p_{0|t}^\phi(\cdot|\cdot,o)$, $h_t(\cdot,o)$, and $r_\beta(o,\cdot)$, respectively.
We now expand the unconditional drift matching loss.
\begin{align}
\label{eq:derivation_all_sm}
    \ell_{\text{DM}}(\theta)
    = & ~ \E_{t, X_t \sim p_t^h}
    [\| \nabla \log h_t(X_t)
    - s_\theta(X_t,t)\|_2^2] \\
    =\frac1T & ~ \int_0^T \int_{\R^d}
    \left[\|\nabla_x \log h_t(x)\|_2^2 + \|s_\theta(x,t)\|_2^2 - 2s_\theta(x,t)^\top \nabla_x \log h_t(x)\right]
    p_t^h(x)
    \dd x \dd t \nonumber\\
    =\frac1T & ~ \int_0^T \int_{\R^d} \left[\|s_\theta
    (x,t)\|_2^2 - 2
    s_\theta(x,t)^\top \nabla_x \log h_t(x)\right]
    p_t^h(x) 
    \dd x \dd t + C_1,
\end{align}
where $C_1 := \frac1T \int_0^T \int_{\R^d}
\|\nabla \log h_t(x)\|_2^2 
p_t^h(x) \dd x \dd t~$ is independent to $\theta$.
For the first term, 
\begin{align}
    & ~\frac1T  \int_0^T \int_{\R^d} \|s_\theta
    (x,t)\|_2^2 
    p_t^h(x)  
    \dd x \dd t \nonumber\\
    = & ~ \frac{1}{Z_{\beta}(o)T} \int_0^T \int_{\R^d} \|s_\theta
    (x,t)\|_2^2 \cdot
    h_t(x) p_t(x) 
    \dd x \dd t \annot{By $p_t^h(x) \propto p_t(x) h_t(x)$} \\
    = & ~ \frac{1}{Z_{\beta}(o)} \E_{t,X_t \sim p_t}[\|s_\theta
    (X_t,t)\|_2^2 h(X_t,t)] \nonumber\\
    = & ~ \frac{1}{Z_{\beta}(o)} \E_{t,X_t \sim p_t}[\|s_\theta
    (X_t,t)\|_2^2 \E_{X_0 \sim p_{0|t}}[r_\beta(X_0)|X_t]] \annot{By $ h(X_t,t)=
    \E[h(X_0,0)|X_t]$ follows from \eqref{eq:solution_exp}}\\
    = & ~ \frac{1}{Z_{\beta}(o)} \E_{t,X_t \sim p_t, X_0 \sim p_{0|t}}[\|s_\theta
    (X_t,t)\|_2^2 \cdot r_\beta(X_0)],
    \label{eq:derivation_all_first}
\end{align}
where $Z_{\beta}(o) = \int_{\R^d} p_t(x) h_t(x) \dd x = \E_{X_t \sim p_t}[h(X_t,t)] = \E_{X_0 \sim p_0}[r_\beta(X_0)]$ is the normalizing constant and the last equation follows from \eqref{eq:solution_exp} and tower property.  
Next we focus on the second term.
We show that
\begin{align*}
    & ~ \frac1T \int_0^T \int_{\R^d}
    s_\theta(x,t)^\top \nabla_x \log h_t(x)
    p_t^h(x)
    \dd x \dd t \\
    = & ~ \frac1T \int_0^T \int_{\R^d}
    s_\theta(x,t)^\top \frac{\nabla_x h_t(x)}{h_t(x)}
    p_t^h(x) 
    \dd x \dd t \annot{By $\nabla_x \log h_t(x) = \nabla_x h_t(x)/ h_t(x)$}\\
    = & ~ \frac{1}{Z_{\beta}(o)T=}\int_0^T \int_{\R^d}
    s_\theta(x,t)^\top \frac{\nabla_x h_t(x)}{h_t(x)}
    p_t(x) h_t(x) 
    \dd x \dd t \annot{By $p_t^h(x) \propto p_t(x) h_t(x)$}\\
    = & ~ \frac{1}{Z_{\beta}(o)T}\int_0^T \int_{\R^d}
    s_\theta(x,t)^\top 
    \nabla_x h_t(x)
    p_t(x)  
    \dd x \dd t.
\end{align*}
Next, we consider conditioning by recalling $h_t(x) = \int_{\R^d} r_\beta(z) p_{0|t}(z|x) \dd z$ follows from \eqref{eq:solution_exp}.
\begin{align}
\label{eq:derivation_second_t}
    & ~ \frac1T\int_0^T \int_{\R^d}
    s_\theta(x,t)^\top \nabla_x  \log h_t(x)
    p_t^h(x)
    \dd x \dd t \nonumber\\
    = & ~ \frac{1}{Z_{\beta}(o)T}\int_0^T \int_{\R^d}
    s_\theta(x,t)^\top 
    \left(\int_{\R^d} r_\beta(z) \nabla_{x} p_{0|t}(z|x) \dd z \right)
    p_t(x)  
    \dd x \dd t \nonumber\\
    = & ~ \frac{1}{Z_{\beta}(o)T}\int_0^T \int_{\R^d}
    s_\theta(x,t)^\top 
    \left(\int_{\R^d} r_\beta(z) 
    \frac{\nabla_{x} p_{0|t}(z|x)}{p_{0|t}(z|x)} \cdot p_{0|t}(z|x) \dd z \right)
    p_t(x)  
    \dd x \dd t \nonumber\\
    = & ~ \frac{1}{Z_{\beta}(o)T}\int_0^T \int_{\R^d}
    s_\theta(x,t)^\top 
    \int_{\R^d} r_\beta(z) 
    \nabla_x \log p_{0|t}(z|x) \cdot 
    p_{0|t}(z|x) p_t(x)  
    \dd z \dd x \dd t \nonumber \\
    = & ~ \frac{1}{Z_{\beta}(o)}
    \E_{t, X_t \sim p_t, X_0 \sim p_{0|t}}[r_\beta(X_0) \cdot s_\theta(X_t,t)^\top \nabla_{X_t} \log p_{0|t}(X_0|X_t)].
\end{align}
Substituting \eqref{eq:derivation_all_first} and \eqref{eq:derivation_second_t} back to \eqref{eq:derivation_all_sm}, we show that 
\begin{align*}
    \ell_{\text{DM}}(\theta)
    = \frac{1}{Z_{\beta}}
    \E_{t, X_t \sim p_t, X_0 \sim p_{0|t}}[ r_\beta(X_0) \cdot \left(\|s_\theta(X_t,t)\|_2^2 -2 s_\theta(X_t,t)^\top \nabla_{X_t} \log p_{0|t}(X_0|X_t) \right)] + C_1.
\end{align*}
By the definition of Conditional Drift Matching loss, 
we have
\begin{align*}
    & ~ \ell_{\text{CDM}}(\theta) \\
    =  & ~ \E_{t, X_t \sim p_t, X_0 \sim p_{0|t}}[r_\beta(X_0) \cdot \|\nabla_{X_t} \log p_{0|t}(X_0|X_t) - s_\theta(X_t,t) \|_2^2] \annot{By definition \eqref{eq:def_cDM}}\\
    = & ~ \E_{t, X_t \sim p_t, X_0 \sim p_{0|t}}[ r_\beta(X_0) \cdot \left(\|s_\theta
    (X_t,t)\|_2^2 -2 s_\theta(X_t,t)^\top \nabla_{X_t} \log p_{0|t}(X_0|X_t) \right)] + C_2,
\end{align*}
where $C_2 := \E_{t, X_t \sim p_t, X_0 \sim p_{0|t}}[ \| \nabla_{X_t} \log p_{0|t}(X_0|X_t)\|_2^2 \cdot r_\beta(X_0)]$ is independent to $\theta$.
Therefore, 
\begin{align*}
    \ell_{\text{CDM}}(\theta)
    = Z_{\beta}(o)(\ell_{\text{DM}}(\theta)-C_1)+C_2.
\end{align*}
Taking gradients with respect to $\theta$ gives
\begin{align*}
    \nabla_\theta  \ell_{\mathrm{CDM}}(\theta)
    =
    Z_{\beta}(o) \nabla_\theta \ell_{\mathrm{DM}}(\theta),
\end{align*}
since $C_1$ and $C_2$ are independent of $\theta$.

\end{proof}

\section{\texorpdfstring{Proof of \cref{thm:grad_equi_fm}}{Proof of Theorem \ref{thm:grad_equi_fm}}}
\label{sec:proof_grad_equi_fm}
\begin{theorem}[\cref{thm:grad_equi_fm} Restated: Gradient Equivalence for Flow]
Let $\mathcal{L}_{\mathrm{DM}}$ and $\mathcal{L}_{\mathrm{CDM}}$ be defined in \eqref{eq:def_DM_fm} and \eqref{eq:def_cDM_fm}, respectively.
Fix $o \in\mathcal S$, define normalizing constant
$Z_{\beta}(o) := \E_{X_T \sim p_T^\phi(\cdot|o)}\big[r_\beta(o,X_T)\big]$and 
\begin{align*}
    \ell_{\mathrm{DM}}(\theta,o)
    &:=
    \E_{t, X_t \sim p_t^\beta(\cdot|o)}
    \Big[
        \big\|
            u^\beta(X_t,t,o)
            - u_\theta(X_t,t,o)
        \big\|_2^2
    \Big].
    \\
    \ell_{\mathrm{CDM}}(\theta,o)
    &:=
    \E_{t, X_t \sim p_t^\phi, X_T \sim p_{T| t}^\phi}
    \Big[
        r_\beta(o,X_T)
        \big\|
            u^\beta(X_t| X_T,t,o)
            - u_\theta(X_t,t,o)
        \big\|_2^2
    \Big].
\end{align*}
Then their gradients satisfy
\begin{align*}
    \nabla_\theta \ell_{\mathrm{CDM}}(\theta,o)
    =
    Z_{\beta}(o) \cdot \nabla_\theta \ell_{\mathrm{DM}}(\theta,o).
\end{align*}
\end{theorem}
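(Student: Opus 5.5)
We follow the proof of \cref{thm:grad_equi_diff}, replacing the Doob tilt identities with the marginalization identities of the tilted affine path. Fix $o$ and drop it from the notation. The key structural facts are two. First, the tilted joint law is $p_{t,T}^\beta(x,a)=r_\beta(a)p_{t,T}^\phi(x,a)/Z_\beta$. Marginalizing over $a$ gives
\[
p_t^\beta(x)=\frac{p_t^\phi(x)\,\E[r_\beta(X_T)\mid X_t=x]}{Z_\beta},
\]
so that $h_t(x):=\E_{p^\phi_{T|t}}[r_\beta(X_T)\mid X_t=x]$ plays the role of the Doob function. Second, the tilted conditional $p^\beta_{T|t}(a|x)$ is proportional to $r_\beta(a)p^\phi_{T|t}(a|x)$. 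Under the affine interpolation, the conditional velocity $u^\beta(x|a,t)$ depends only on the endpoint $a$ (and the noise), not on the tilt, so it coincides with the base conditional velocity. The marginal tilted velocity is therefore the posterior average
\[
u^\beta(x,t)=\E_{p^\beta_{T|t}}[u(x|X_T,t)\mid X_t=x]=\frac{\E_{p^\phi_{T|t}}[r_\beta(X_T)\,u(x|X_T,t)\mid X_t=x]}{h_t(x)}.
\]
This is the standard flow-matching marginalization identity: the continuity equation for $p_t^\beta$ is obtained by integrating the conditional continuity equations against the tilted endpoint law. I would state it as the flow analogue of \eqref{eq:p_t^h}.

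Next, expand $\ell_{\mathrm{DM}}$ as
\[
\ell_{\mathrm{DM}}=\E_{t,p_t^\beta}\big[\|u_\theta\|^2-2u_\theta^\top u^\beta\big]+C_1,
\]
where $C_1$ does not depend on $\theta$. Rewrite each $\theta$-dependent term against the base process.

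For the quadratic term, use $p_t^\beta=p_t^\phi h_t/Z_\beta$ together with the tower property, exactly as in \eqref{eq:derivation_all_first}:
\[
\E_{p_t^\beta}\|u_\theta\|^2=\frac{1}{Z_\beta}\,\E_{p_t^\phi,\,p^\phi_{T|t}}\big[r_\beta(X_T)\|u_\theta(X_t,t)\|^2\big].
\]

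For the cross term, multiply the expression for $u^\beta$ by $p_t^\beta=p_t^\phi h_t/Z_\beta$. The factor $h_t$ cancels, giving
\[
\E_{p_t^\beta}\big[u_\theta^\top u^\beta\big]=\frac{1}{Z_\beta}\,\E_{p_t^\phi,\,p^\phi_{T|t}}\big[r_\beta(X_T)\,u_\theta^\top u^\beta(X_t|X_T,t)\big].
\]
This step is simpler than the diffusion case because no gradient of $h_t$ has to be moved inside an integral.

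Expanding $\ell_{\mathrm{CDM}}$ in the same way gives the same two $\theta$-dependent terms, multiplied by $Z_\beta$, plus a constant
\[
C_2=\E\big[r_\beta(X_T)\,\|u^\beta(X_t|X_T,t)\|^2\big].
\]
Hence $\ell_{\mathrm{CDM}}=Z_\beta(\ell_{\mathrm{DM}}-C_1)+C_2$, and differentiating in $\theta$ gives the claim.

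The main obstacle is justifying the velocity identity above, namely that the tilted marginal velocity is the $r_\beta$-weighted posterior average of the conditional velocities. This requires the conditional velocity to be invariant under terminal reweighting; that holds because reweighting changes only the endpoint law, not the interpolation kernel $p_{t|T}$. I would verify it through the weak form of the continuity equation for $p_t^\beta$, assuming enough integrability to exchange derivatives and integrals and that $C_1,C_2<\infty$.
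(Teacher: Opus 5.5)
Your proposal is correct and is essentially the paper's argument. Both rest on two identities. The first is the marginal reweighting $Z_\beta(o)\,p_t^\beta = p_t^\phi\,\E_{p^\phi_{T|t}}[r_\beta(o,X_T)]$. The second is the velocity reweighting $Z_\beta(o)\,p_t^\beta u^\beta = p_t^\phi\,\E_{p^\phi_{T|t}}[r_\beta(o,X_T)\,u^\beta(x|X_T,t,o)]$, which, as in the paper, holds because $u^\beta$ is taken to be the posterior-averaged conditional velocity.

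One caution about your planned justification. A continuity-equation check only shows that this average generates $p_t^\beta$; it does not show that it is the unique such field. So the posterior average must serve as the definition of $u^\beta$, as the paper implicitly does.

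The only difference is bookkeeping. You expand the squares to get $\ell_{\mathrm{CDM}} = Z_\beta(\ell_{\mathrm{DM}}-C_1)+C_2$, mirroring the diffusion proof. The paper instead differentiates first and applies the two identities to the test functions $(\nabla_\theta u_\theta)^\top u_\theta$ and $\nabla_\theta u_\theta$. That avoids needing $C_1,C_2<\infty$, at the cost of differentiating under the expectation.
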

\begin{proof}
For each \(t\in[0,T]\), define $\bar r_t(x,o)$
\[
\bar r_t(x,o)
:=
\E_{X_T\sim p_{T|t}^\phi(\cdot|x,o)}
\big[
r_\beta(o,X_T)
\big].
\]
By the ratio-tilted construction of the flow path, the marginal
\(p_t^\beta(\cdot|o)\) satisfies
\begin{align}
Z_{\beta}(o)p_t^\beta(x|o)
=
p_t^\phi(x|o)\bar r_t(x,o).
\label{eq:fm-marginal-reweighting}
\end{align}
Consequently, for any fixed \(t\in[0,T]\) and any integrable test
function \(\psi\), we have
\begin{align}
\begin{aligned}
&Z_{\beta}(o)
\E_{X_t\sim p_t^\beta(\cdot|o)}
\big[
\psi(X_t)
\big]
\\
&=
Z_{\beta}(o)\int \psi(x)p_t^\beta(x|o)\dd x
\\
&=
\int \psi(x)p_t^\phi(x|o)\bar r_t(x,o)\dd x
\\
&=
\int \psi(x)p_t^\phi(x|o)
\left[
\int r_\beta(o,a)p_{T|t}^\phi(a|x,o)\dd a
\right]dx
\\
&=
\int\!\!\int
r_\beta(o,a)\psi(x)
p_{T|t}^\phi(a|x,o)p_t^\phi(x|o)
\dd a \dd x
\\
&=
\E_{\substack{
X_t\sim p_t^\phi(\cdot|o),\\
X_T\sim p_{T|t}^\phi(\cdot|X_t,o)
}}
\big[
r_\beta(o,X_T)\psi(X_t)
\big].
\end{aligned}
\label{eq:fm3}    
\end{align}

To derive the velocity reweighting identity, we use the marginalization
identity for the tilted flow:
\[
u^\beta(x,t,o)
=
\E_{X_T\sim p_{T|t}^\beta(\cdot|x,o)}
\big[
u^\beta(x|X_T,t,o)
\big].
\]
Multiplying both sides by \(p_t^\beta(x|o)\), we obtain
\[
\begin{aligned}
p_t^\beta(x|o)u^\beta(x,t,o)
&=
p_t^\beta(x|o)
\int
u^\beta(x|a,t,o)
p_{T|t}^\beta(a|x,o)\dd a
\\
&=
\int
u^\beta(x|a,t,o)
p_t^\beta(x|o)
p_{T|t}^\beta(a|x,o)\dd a .
\end{aligned}
\]
Using the ratio-tilted joint relation
\[
p_t^\beta(x|o)p_{T|t}^\beta(a|x,o)
=
\frac{r_\beta(o,a)}{Z_{\beta}(o)}
p_t^\phi(x|o)p_{T|t}^\phi(a|x,o),
\]
we get
\[
\begin{aligned}
p_t^\beta(x|o)u^\beta(x,t,o)
&=
\frac{1}{Z_{\beta}(o)}
p_t^\phi(x|o)
\int
r_\beta(o,a)
u^\beta(x|a,t,o)
p_{T|t}^\phi(a|x,o)\dd a
\\
&=
\frac{1}{Z_{\beta}(o)}
p_t^\phi(x|o)
\E_{X_T\sim p_{T|t}^\phi(\cdot|x,o)}
\big[
r_\beta(o,X_T)u^\beta(x|X_T,t,o)
\big].
\end{aligned}
\]
Equivalently,
\begin{align}
Z_{\beta}(o)p_t^\beta(x|o)u^\beta(x,t,o)
=
p_t^\phi(x|o)
\E_{X_T\sim p_{T|t}^\phi(\cdot|x,o)}
\big[
r_\beta(o,X_T)u^\beta(x|X_T,t,o)
\big].
\label{eq:fm2}
\end{align}
Now we convert \eqref{eq:fm2} into a test-function
identity. For any integrable test function \(h\), multiply
both sides of \eqref{eq:fm2} by
\(h(x,t,o)^\top\) and integrate over \(x\). This gives

\begin{align}
&Z_{\beta}(o)
\E_{X_t\sim p_t^\beta(\cdot|o)}
\big[
h(X_t,t,o)^\top u^\beta(X_t,t,o)
\big]
\notag \\
&=
Z_{\beta}(o)\int
h(x,t,o)^\top u^\beta(x,t,o)p_t^\beta(x|o)\dd x
\notag \\
&=
\int
h(x,t,o)^\top
p_t^\phi(x|o)
\E_{X_T\sim p_{T|t}^\phi(\cdot|x,o)}
\big[
r_\beta(o,X_T)u^\beta(x|X_T,t,o)
\big]
\dd x
\notag \\
&=
\int\!\!\int
r_\beta(o,a)
h(x,t,o)^\top u^\beta(x|a,t,o)
p_{T|t}^\phi(a|x,o)p_t^\phi(x|o)
\dd a\dd x
\notag \\
&=
\E_{\substack{
X_t\sim p_t^\phi(\cdot|o),\\
X_T\sim p_{T|t}^\phi(\cdot|X_t,o)
}}
\big[
r_\beta(o,X_T)
h(X_t,t,o)^\top u^\beta(X_t|X_T,t,o)
\big].\label{eq:fm1}
\end{align}

We now prove the gradient identity directly. Under the usual regularity
conditions allowing differentiation under the expectation,
\[
\begin{aligned}
\nabla_\theta \ell_{\mathrm{DM}}(\theta,o)
&=
2\E_{t,\,X_t\sim p_t^\beta(\cdot|o)}
\Big[
\big(\nabla_\theta u_\theta(X_t,t,o)\big)^\top
\big(
u_\theta(X_t,t,o)-u^\beta(X_t,t,o)
\big)
\Big].
\end{aligned}
\]
Multiplying both sides by \(Z_{\beta}(o)\), and applying
\eqref{eq:fm3} componentwise to
\[
\psi(x)
=
\big(\nabla_\theta u_\theta(x,t,o)\big)^\top
u_\theta(x,t,o),
\]
together with \eqref{eq:fm1} with
\(h=\nabla_\theta u_\theta\), gives
\begin{align}
&Z_{\beta}(o)\nabla_\theta \ell_{\mathrm{DM}}(\theta,o)\notag
\\
&=
2\E_{\substack{
t\sim U[0,T],\,
X_t\sim p_t^\phi(\cdot|o),\\
X_T\sim p_{T|t}^\phi(\cdot|X_t,o)
}}
\Big[
r_\beta(o,X_T)
\big(\nabla_\theta u_\theta(X_t,t,o)\big)^\top
\big(
u_\theta(X_t,t,o)-u^\beta(X_t|X_T,t,o)
\big)
\Big].
\label{eq:fm-dm-gradient-reweighted}
\end{align}

On the other hand, differentiating \(\ell_{\mathrm{CDM}}(\theta,o)\)
directly gives
\begin{align}
&\nabla_\theta \ell_{\mathrm{CDM}}(\theta,o)\notag \\
&=
2\E_{\substack{
t\sim U[0,T],\,
X_t\sim p_t^\phi(\cdot|o),\\
X_T\sim p_{T|t}^\phi(\cdot|X_t,o)
}}
\Big[
r_\beta(o,X_T)
\big(\nabla_\theta u_\theta(X_t,t,o)\big)^\top
\big(
u_\theta(X_t,t,o)-u^\beta(X_t|X_T,t,o)
\big)
\Big].\label{eq:fm-cdm-gradient}
\end{align}

Comparing \eqref{eq:fm-dm-gradient-reweighted} and
\eqref{eq:fm-cdm-gradient}, we obtain
\[
\nabla_\theta \ell_{\mathrm{CDM}}(\theta,o)
=
Z_{\beta}(o)\nabla_\theta \ell_{\mathrm{DM}}(\theta,o).
\]
\end{proof}

\clearpage
\section{\texorpdfstring{Proof of \cref{prop:cdm_solves_soc}}{Proof of Theorem \ref{prop:cdm_solves_soc}}}
\label{sec:proof_cdm_learn}

\begin{proposition}[\cref{prop:cdm_solves_soc} Restated: Conditional Drift Matching Solves the SOC Problem]
\label{prop:cdm_solves_soc_re}
Let $h_t$ be defined in \eqref{eq:solution_exp}.
Fix a state $o \in\mathcal S$ and let $\mathbb P^\phi(\cdot|o)$ be the reference path measure induced by the reverse-time SDE 
\begin{align*}
    \dd X_t
    =
    \bigl[
        f(X_t,t)-\sigma_t^2 s_\phi(X_t,t,o)
    \bigr]\dd t
    +
    \sigma_t\dd W_t,
\end{align*}
where $s_\phi(X_t,t,o)$ is the base model.
Consider the conditional drift matching objective
\begin{align*}
\mathcal{L}_{\mathrm{CDM}}(\theta)
=
\mathbb E_{t, X_t\sim p_t^\phi, X_0\sim p_{0|t}^\phi}
\left[
h_0(X_0,o)
\left\|
\nabla_x\log p_{0|t}^\phi(X_0|X_t,o)
-
s_\theta(X_t,t,o)
\right\|_2^2
\right].
\end{align*}
If the model class $\{s_\theta : \theta \in \Theta\}$ is realizable and the minimum of $L_{\mathrm{CDM}}$ is attained, then any minimizer $s_\theta^\star$ induces the optimal control 
$u_t^\star(x,o) := -s_\theta^\star(x,t,o)$
that solves the SOC problem
\begin{align}
    & ~ \quad \quad \quad
    \quad \quad \quad \quad \quad \inf_u 
    \frac12
    \mathbb E_{\mathbb Q^u(\cdot|o)}
    \int_0^T
    \|\sigma_t
    u_t(Y_t,o)\|_2^2\dd t, 
    \label{eq:oj_soc_re}
    \\
    \text{s.t.} & ~ 
    \dd Y_t
    =
    \bigl[
        f(Y_t,t)-\sigma_t^2 s_\phi(Y_t,t,o) + \sigma_t^2 u_t(Y_t,o)
    \bigr]\dd t
    +
    \sigma_t\dd W_t ,
    ~ Y_T \sim p_T^h, ~Y_0 \sim p_0^h,
    \label{eq:sde_soc_re}
\end{align}
where $\mathbb Q^u(\cdot|o)$ is the path measure induced by the controlled reverse-time SDE in \eqref{eq:sde_soc_re}, initialized at $t=T$ with $Y_T\sim p_T^h$, and $p_0^h,p_T^h$ are the endpoint distributions given by \eqref{eq:p_t^h}.
\end{proposition}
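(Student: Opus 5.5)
The plan is to prove the statement in three steps. First, identify the minimizer of $\mathcal L_{\mathrm{CDM}}$ explicitly. Second, check that the induced control is feasible for \eqref{eq:sde_soc_re}. Third, show that it is optimal via a Girsanov/KL decomposition of the control energy.

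\emph{Step 1 (identifying $s_\theta^\star$).} The proof of \cref{thm:grad_equi_diff} never used any property of $r_\beta$ beyond positivity, integrability and $h_t(x)=\E[r_\beta(X_0)\mid X_t=x]$. Repeating it with $r_\beta$ replaced by $h_0(\cdot,o)$ gives
\[
\mathcal L_{\mathrm{CDM}}(\theta)=Z(o)\bigl(\ell_{\mathrm{DM}}(\theta,o)-C_1\bigr)+C_2,
\qquad Z(o)=\E_{p_0^\phi}[h_0(X_0,o)],
\]
with $C_1,C_2$ independent of $\theta$. Since $Z(o)>0$, the minimizers of $\mathcal L_{\mathrm{CDM}}$ and $\ell_{\mathrm{DM}}$ coincide. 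Under realizability, $\min\ell_{\mathrm{DM}}=0$, so any minimizer satisfies $s_\theta^\star(x,t,o)=\nabla_x\log h_t(x,o)$ for $p_t^h$-a.e.\ $x$ and a.e.\ $t$. As a sanity check, the pointwise $L^2$ projection gives the same answer:
\[
s^\star=\frac{\E\bigl[h_0(X_0)\,\nabla_x\log p^\phi_{0|t}(X_0|x)\mid X_t=x\bigr]}{h_t(x)}=\frac{\nabla_x h_t(x)}{h_t(x)}.
\]
Because $h_0>0$, the measure $p_t^h$ is equivalent to $p_t^\phi$, so this identification holds $p_t^\phi$-a.e. That is all the SDE needs. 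Hence $u^\star_t=-s^\star_\theta=-\nabla_x\log h_t$.

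\emph{Step 2 (feasibility).} Plugging $u^\star$ into \eqref{eq:sde_soc_re} gives exactly the Doob-transformed SDE \eqref{eq:new_process}, started from $p_T^h$. By \cref{lem:algo_correctness}, its marginals are $p_t^h$ for every $t$, in particular $Y_0\sim p_0^h$. So $u^\star$ is admissible.

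\emph{Step 3 (optimality).} Let $\mathbb P'$ be the reference path measure $\mathbb P^\phi(\cdot|o)$ with its initial law at time $T$ changed to $p_T^h$, so that $\frac{\dd\mathbb P'}{\dd\mathbb P^\phi}=h_T(X_T)/Z$.

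For any admissible $u$, Girsanov's theorem in reverse time (same diffusion coefficient $\sigma_t$, drift difference $\sigma_t^2u_t$, same initial law) gives
\[
\mathrm{KL}(\mathbb Q^u\|\mathbb P')=\tfrac12\,\E_{\mathbb Q^u}\!\int_0^T\|\sigma_t u_t(Y_t,o)\|_2^2\,\dd t .
\]

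The Doob path measure $\mathbb P^h=\mathbb Q^{u^\star}$ satisfies $\frac{\dd\mathbb P^h}{\dd\mathbb P^\phi}=h_0(X_0)/Z$, which is the path-space form of \cref{lem:algo_correctness}. Therefore
\[
\frac{\dd\mathbb P^h}{\dd\mathbb P'}=\frac{h_0(X_0)}{h_T(X_T)} .
\]
The chain rule for KL then yields
\[
\tfrac12\,\E_{\mathbb Q^u}\!\int_0^T\|\sigma_t u_t\|^2\,\dd t
=\mathrm{KL}(\mathbb Q^u\|\mathbb P^h)+\E_{p_0^h}[\log h_0]-\E_{p_T^h}[\log h_T].
\]
The last two terms depend only on the endpoint marginals, which are fixed by the constraints. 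Hence the energy is at least that constant, with equality iff $\mathbb Q^u=\mathbb P^h$, i.e.\ $u=u^\star$ a.e. This proves that $u^\star$ solves \eqref{eq:oj_soc_re}.

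The main obstacle is Step 3's technical justification. It needs Girsanov for reverse-time SDEs, including a Novikov-type or finite-energy condition so that the KL identity holds, and restricting to controls with finite energy handles this. It also needs the path-space Radon–Nikodym formula for the Doob transform. Care is required near $t=0$, where $\nabla\log p_{0|t}$ and $\nabla\log h_t$ may blow up. I would handle this by working on $[\varepsilon,T]$ and letting $\varepsilon\to0$ by monotone convergence.
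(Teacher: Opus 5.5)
Your proposal is correct. Step 1 coincides with the paper's: both reduce $\mathcal L_{\mathrm{CDM}}$ to $\ell_{\mathrm{DM}}$ via the identity behind \cref{thm:grad_equi_diff} and read off $s_\theta^\star=\nabla_x\log h_t$. You are more careful than the paper in transferring that argument from $r_\beta$ to a general $h_0$ and in stating the a.e.\ qualifiers.

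Optimality, however, is proved by a genuinely different route. The paper does not argue variationally. It reverses time and imports from the adjoint Schr\"odinger bridge literature the characterization $\bar u_\tau^\star=\bar\sigma_\tau\nabla\log\Phi_\tau$ in terms of Schr\"odinger potentials. It then verifies that $\Phi_\tau=h_{T-\tau}$ and $\widehat\Phi_\tau=p^\phi_{T-\tau}/A$ satisfy the propagation equations and the boundary conditions $\Phi_0\widehat\Phi_0=p_T^h$, $\Phi_T\widehat\Phi_T=p_0^h$. These boundary checks play the role of your feasibility Step 2.

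You instead prove optimality directly. Girsanov turns the energy into $\mathrm{KL}(\mathbb Q^u\|\mathbb P')$. The product-form density $\mathrm d\mathbb P^h/\mathrm d\mathbb P'=h_0(X_0)/h_T(X_T)$ then splits it into $\mathrm{KL}(\mathbb Q^u\|\mathbb P^h)$ plus a term fixed by the endpoint constraints.

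The paper's version is shorter and makes the Schr\"odinger-bridge structure explicit, but it treats sufficiency of the Schr\"odinger system as a black box. Yours is self-contained. It also gives uniqueness of the optimal path measure and the explicit minimal energy $\E_{p_0^h}[\log h_0]-\E_{p_T^h}[\log h_T]$. Finally, it isolates the real technical inputs, none of which the paper addresses: finite-energy Girsanov; the path-space form of the $h$-transform, which is stronger than the marginal statement of \cref{lem:algo_correctness}; integrability of $\log h_0$ under $p_0^h$, so the endpoint constant is not $\infty-\infty$; and the behavior near $t=0$.

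The two arguments are two views of one fact: your product-form Radon--Nikodym derivative is exactly what makes $h$ the Schr\"odinger potential.
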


\begin{proof}
Fix $o \in\mathcal S$. For notational simplicity, the unconditional version is presented in this proof.
The conditional case follows identically by conditioning on $o$ throughout and replacing $p_t(\cdot),h_t(\cdot)$ with $p_t(\cdot|o),h_t(\cdot|o)$, respectively.

By \cref{thm:grad_equi_diff}, the drift matching objective has the same population minimizer as
\begin{align*}
    \mathcal{L}_{\mathrm{DM}}(\theta)
    =
    \E_{t,X_t\sim p_t^h}
    \left[
        \|
        \nabla_x\log h_t(X_t)-s_\theta(X_t,t)
        \|_2^2
    \right].
\end{align*}
Therefore, by realizability and attainment of the minimum, any minimizer satisfies
\begin{align*}
    s_\theta^\star(x,t)
    =
    \nabla_x\log h_t(x).
\end{align*}

It remains to identify the corresponding SOC optimizer.
We use the SOC theory in \cite{liu2025adjoint}.
Their formulation is written in forward time.
Our denoising process is written in reverse time.
Let
\begin{align*}
    \tau:=T-t,\qquad Z_\tau:=Y_{T-\tau}.
\end{align*}
Then the controlled process \eqref{eq:sde_soc_re} is written in forward time as
\begin{align*}
    \dd Z_\tau
    =
    \bigl[
        -f(Z_\tau,T-\tau)
        +
        \sigma_{T-\tau}^2s_\phi(Z_\tau,T-\tau)
        -
        \sigma_{T-\tau}^2u_{T-\tau}(Z_\tau)
    \bigr]\dd \tau
    +
    \sigma_{T-\tau}\dd W_\tau,
\end{align*}
where $Z_0\sim p_T^h,\; Z_T\sim p_0^h$.
For notation simplicity, we rewrite the process as
\begin{align*}
    \dd Z_\tau
    =
    [
        \bar f_\tau(Z_\tau)
        +
        \bar\sigma_\tau \bar u_\tau(Z_\tau)
    ]\dd \tau
    +
    \bar\sigma_\tau\dd W_\tau,
    \quad
    Z_0\sim p_T^h,\quad Z_T\sim p_0^h.
\end{align*}
where
$\bar\sigma_\tau :=
\sigma_{T-\tau},
\bar f_\tau(z)
:=
-f(z,T-\tau)
+
\sigma_{T-\tau}^2s_\phi(z,T-\tau),
\bar u_\tau(z)
:=
-\sigma_{T-\tau}u_{T-\tau}(z)$.

The associated SOC objective becomes
\begin{align*}
    \inf_{\bar u}
    \frac12
    \mathbb E
    \int_0^T
    \|
    \bar u_\tau(Z_\tau)
    \|_2^2\dd \tau,
\end{align*}
since
\begin{align*}
    \int_0^T
    \frac12
    \|
    \sigma_tu_t(Y_t)
    \|_2^2\dd t
    =
    \int_0^T
    \frac12
    \|
    \bar u_\tau(Z_\tau)
    \|_2^2\dd \tau.
\end{align*}

According to \cite{liu2025adjoint}, the optimal control satisfies
\begin{align*}
    \bar u_\tau^\star(z)
    =
    \bar\sigma_\tau\nabla_z\log \Phi_\tau(z),
\end{align*}
where the Schr\"odinger potentials satisfy
\begin{align}
    \Phi_\tau(z)
    &=
    \int
    \bar p_{T|\tau}^{\phi}(a|z)
    \Phi_T(a)\dd a,
    \qquad
    \Phi_0(z)\widehat\Phi_0(z)
    =
    p_T^h(z),
    \label{eq:soc_first_co}
    \\
    \widehat\Phi_\tau(z)
    &=
    \int
    \bar p_{\tau|0}^{\phi}(z|a)
    \widehat\Phi_0(a)\dd a,
    \qquad
    \Phi_T(z)\widehat\Phi_T(z)
    =
    p_0^h(z).
    \label{eq:soc_secon_co}
\end{align}
Here $\bar p_\tau^\phi$ denotes the marginal density of the forward-time process $Z_\tau=X_{T-\tau}$, and $\bar p_{b|a}^\phi$ denotes the conditional density.
Define
\begin{align*}
    A
    &:=
    \E_{X_0\sim p_0^\phi}[h_0(X_0)],
    \\
    \Phi_\tau(z)
    &:=
    h_{T-\tau}(z),
    \\
    \widehat\Phi_\tau(z)
    &:=
    \frac{p_{T-\tau}^\phi(z)}{A}.
\end{align*}
We now verify that these potentials satisfy the optimality equations.
First, for \eqref{eq:soc_first_co},
\begin{align*}
    \Phi_\tau(z)
    &=
    h_{T-\tau}(z)
    \\
    &=
    \E
    [
        h_0(X_0)
        | X_{T-\tau}=z
    ]
    \annot{By the definiton of $h_t$ following \cref{lem:algo_correctness}}
    \\
    &=
    \int
    \bar p_{T|\tau}^{\phi}(a|z)
    h_0(a)\dd a
    \annot{Rewrite the expectation as integral.}
    \\
    &=
    \int
    \bar p_{T|\tau}^{\phi}(a|z)
    \Phi_T(a)\dd a.
    \annot{By the definition of $\Phi_\tau = h_{T-\tau}$}
\end{align*}
Second, for \eqref{eq:soc_secon_co}, we have
\begin{align*}
    \widehat\Phi_\tau(z)
    &=
    \frac{p_{T-\tau}^\phi(z)}{A}
    \\
    &=
    \int
    \bar p_{\tau|0}^{\phi}(z|z_0)
    \frac{p_T^\phi(z_0)}{A}
    \dd z_0
    \annot{By the law of total probability}
    \\
    &=
    \int
    \bar p_{\tau|0}^{\phi}(z|z_0)
    \widehat\Phi_0(z_0)\dd z_0.
    \annot{By the definition of $\widehat \Phi_\tau$}
\end{align*}
Exploiting \eqref{eq:p_t^h}, we now verify the boundary conditions. 
At $\tau=0$,
\begin{align*}
    \Phi_0(z)\widehat\Phi_0(z)
    =
    h_T(z)
    \frac{p_T^\phi(z)}{A}
    =
    p_T^h(z).
\end{align*}
At $\tau=T$,
\begin{align*}
    \Phi_T(a)\widehat\Phi_T(a)
    =
    h_0(a)
    \frac{p_0^\phi(a)}{A}
    =
    p_0^h(a).
\end{align*}
Thus $\Phi_\tau(z)=h_{T-\tau}(z)$ satisfies the Schr\"odinger optimality equations.

Therefore,
\begin{align*}
    \bar u_\tau^\star(z)
    &=
    \bar\sigma_\tau\nabla_z\log \Phi_\tau(z)
    \\
    &=
    \sigma_{T-\tau}\nabla_z\log h_{T-\tau}(z).
\end{align*}
Since $\bar u_\tau(z)=-\sigma_{T-\tau}u_{T-\tau}(z)$, the optimal control in the original reverse-time convention is
\begin{align*}
    u_t^\star(x)
    =
    -\nabla_x\log h_t(x).
\end{align*}
Combining this with $s_\theta^\star(x,t)=\nabla_x\log h_t(x)$ gives
\begin{align*}
    u_\theta^\star(x,t)
    =
    -\nabla_x\log h_t(x)
    =
    -s_\theta^\star(x,t)
    .
\end{align*}
Hence any minimizer of $L_{\mathrm{CDM}}$ induces the optimal control for the SOC problem.
\end{proof}

\section{Implementation Details}
\label{sec:implementation}

\subsection{Policy Evaluation}

\paragraph{Policy Evaluation for DoobPPO.}
The policy evaluation of DoobPPO learns a state-value function by minimizing the regression loss
\begin{equation}
\mathcal{L}^{V}(\psi)
=
\mathbb{E}_{o_t \sim \mathcal{D}}
\Big[
\big(V_\psi(o_t)-\hat{R}_t\big)^2
\Big],
\end{equation}
where $\hat{R}_t$ is the empirical return or bootstrap target computed from rollout data.
Following the standard PPO-style evaluation, we estimate the advantage with generalized advantage estimation,
\begin{equation}
A_t=\sum_{l=0}^{T-t-1}(\gamma\lambda)^l \delta_{t+l},
\qquad
\delta_t=r_t+\gamma V_\psi(o_{t+1})-V_\psi(o_t).
\end{equation}
In practice, the resulting advantage estimator is used as the policy improvement signal for training the ratio network in the first stage.

\paragraph{Policy Evaluation for DoobGRPO.}
The policy evaluation of DoobGRPO computes group-relative advantages from rollout returns.
For each state, we sample a group of candidate actions or trajectories under the current diffusion policy, evaluate their returns, and normalize them within the group.
The resulting relative advantage is given by
\begin{equation}
A_i^{\mathrm{GRPO}}
=
\frac{R_i-\mathrm{mean}(\{R_j\}_{j=1}^{G})}
{\mathrm{std}(\{R_j\}_{j=1}^{G})+\epsilon},
\end{equation}
where $R_i$ is the return of the $i$-th sample and $G$ is the group size.
In practice, these normalized relative advantages are directly used as the policy improvement signal for the first-stage ratio learning objective.

\subsection{Unbiased Ratio Regularizer}

The ratio regularizer
$\E_{o\sim\mathcal D}\left[\left(\E_{a'\sim \pi_{\mathrm{old}}(\cdot|o)}[r_\beta(o,a')]-1\right)^2\right]$
contains a quadratic moment term, whose naive Monte Carlo estimate is biased.
To obtain an unbiased estimator, we use double sampling.
For each sampled state $o$, two independent action batches are drawn from $\pi_{\mathrm{old}}(\cdot|o)$, yielding two conditionally independent estimates $\hat g_1(o)$ and $\hat g_2(o)$ of
$g(o):=\E_{a'\sim\pi_{\mathrm{old}}(\cdot|o)}[r_\beta(o,a')]-1$.
We then replace $g(o)^2$ with $\hat g_1(o)\hat g_2(o)$, which satisfies
$\E[\hat g_1(o)\hat g_2(o)|o]=g(o)^2$.

\subsection{Ratio Regularizer in the Off-Policy Setting}

In the on-policy variant, the ratio centering regularizer enforces
$\mathbb{E}_{a' \sim \pi_{\mathrm{old}}(\cdot|o)}[r_\beta(o,a')] \approx 1$
by sampling actions from the frozen rollout policy, which corresponds exactly to the normalization constraint of the density ratio.
In the off-policy variant with a replay buffer, there is no single well-defined behavior policy associated with each stored transition, since the buffer aggregates experience from many past policy versions.
We therefore estimate the centering constraint by sampling $a'$ from the current actor at the same state~$o$.
This ensures that the learned ratio remains well-calibrated within the support of the evolving policy, effectively regularizing $r_\beta$ against degenerate solutions regardless of the age of the replay sample.
Empirically, this regularizer is applied with a small coefficient ($\lambda = 0.01$) and we observe that it is sufficient to stabilize ratio learning throughout training.

\subsection{Computation and Memory Cost}

We report the GPU memory allocation and total training time in Table~\ref{tab:computational_cost}.
All measurements are obtained on a desktop workstation with an Intel Xeon Gold 6242R CPU, 376 GB memory, and an NVIDIA A6000 GPU.
Although DoobPPO and DoobGRPO are not the most lightweight methods in terms of raw memory or wall-clock time, their computational overhead remains moderate and is comparable to several strong diffusion-policy baselines.
In particular, DoobPPO requires 1578 MB GPU memory and 25.28 minutes, while DoobGRPO requires 1795 MB and 28.12 minutes, both remaining in the same practical regime as DPMD, DACER, and SDAC.
Considering that our methods achieve state-of-the-art or near-state-of-the-art performance on most benchmark tasks, these results indicate a favorable performance--efficiency trade-off.
Overall, DoobPO delivers stronger empirical performance without incurring prohibitive computational or memory cost.

\begin{table*}[h]
\centering
\caption{\small GPU memory allocation and total compute time of 200K
iterations and 1 million environment interactions.}
\vspace{-2mm}
\label{tab:computational_cost}
\resizebox{0.8\linewidth}{!}{%
\begin{tabular}{r|c|c}
\toprule
Algorithm
& GPU Memory (MB)
& Training Time (min)
\\
\midrule
DIPO
& 5212
& 15.96
\\

DPPO
& 1087
& 12.71
\\

DPMD
& 1175
& 25.29
\\

DACER
& 1312
& 29.12
\\

SDAC
& 1206
& 23.52
\\

\textbf{DoobPPO}
& 1578
& 25.28
\\

\textbf{DoobGRPO}
& 1795
& 28.12
\\

\bottomrule
\end{tabular}}
\end{table*}

\subsection{Hyperparameters}

\begin{table*}[h]
\centering
\caption{\small Hyperparameters}
\vspace{-2mm}
\label{tab:hyperparameters}
\resizebox{0.8\linewidth}{!}{%
\begin{tabular}{l|c}
\toprule
Name
& Value
\\
\midrule
Critic learning rate
& 3e-4
\\

Policy learning rate
& 3e-4, linear annealing to 3e-5
\\

Diffusion steps
& 20
\\

Diffusion noise schedules
& Linear
\\

Policy network hidden layers
& 3
\\

Policy network hidden neurons
& 256
\\

Policy network activation
& Mish
\\

Ratio network hidden layers
& 3
\\

Ratio network activation
& Mish
\\

Replay buffer size
& 1 million
\\

\bottomrule
\end{tabular}}
\end{table*}

\begin{table*}[h]
\centering
\caption{\small DoobPPO hyperparameters on Robomimic (\texttt{Can} and \texttt{Square}).}
\vspace{-2mm}
\label{tab:hyperparameters_robomimic}
\resizebox{0.8\linewidth}{!}{%
\begin{tabular}{l|c}
\toprule
Name
& Value
\\
\midrule
Discount factor $\gamma$
& 0.999
\\

Actor learning rate
& $2\times10^{-5}$
\\

Ratio-network learning rate
& $1\times10^{-4}$
\\

Reward scale
& 2.5
\\

PPO clipping coefficient
& 0.2
\\

Maximum ratio weight
& 2
\\

Ratio regularization coefficient
& 0.03
\\

Action-chunk horizon
& 4
\\

Training iterations
& 151 (\texttt{Can}), 200 (\texttt{Square})
\\

Parallel evaluation environments
& 50
\\

\bottomrule
\end{tabular}}
\end{table*}

\section{Ablation Studies}
\label{sec:ablation}

\begin{figure}[h]
    \centering
    \begin{subfigure}{0.49\linewidth}
        \centering
        \includegraphics[width=\linewidth]{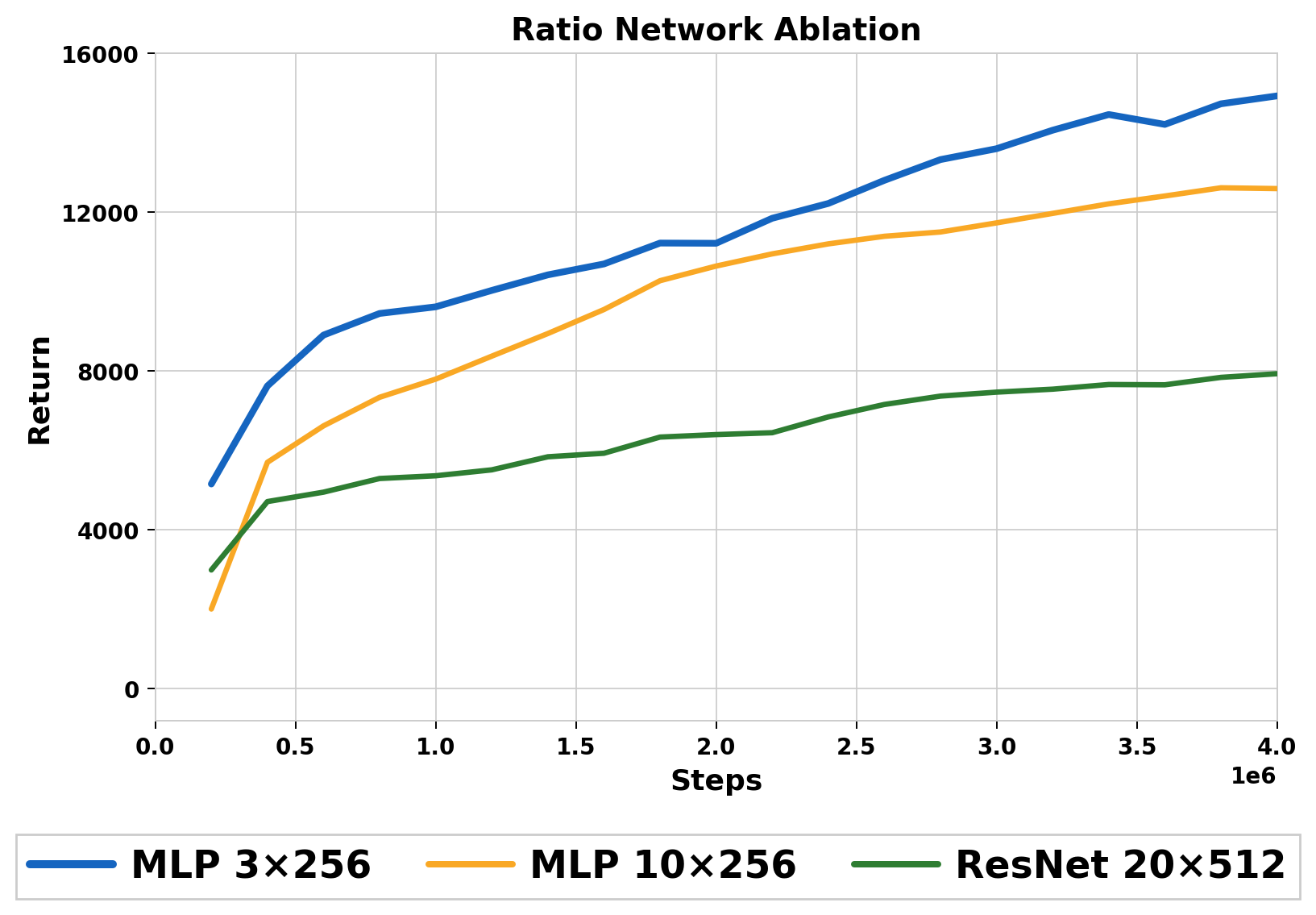}
        \caption{\small Effect of ratio network architecture.}
    \end{subfigure}
    \hfill
    \begin{subfigure}{0.49\linewidth}
        \centering
        \includegraphics[width=\linewidth]{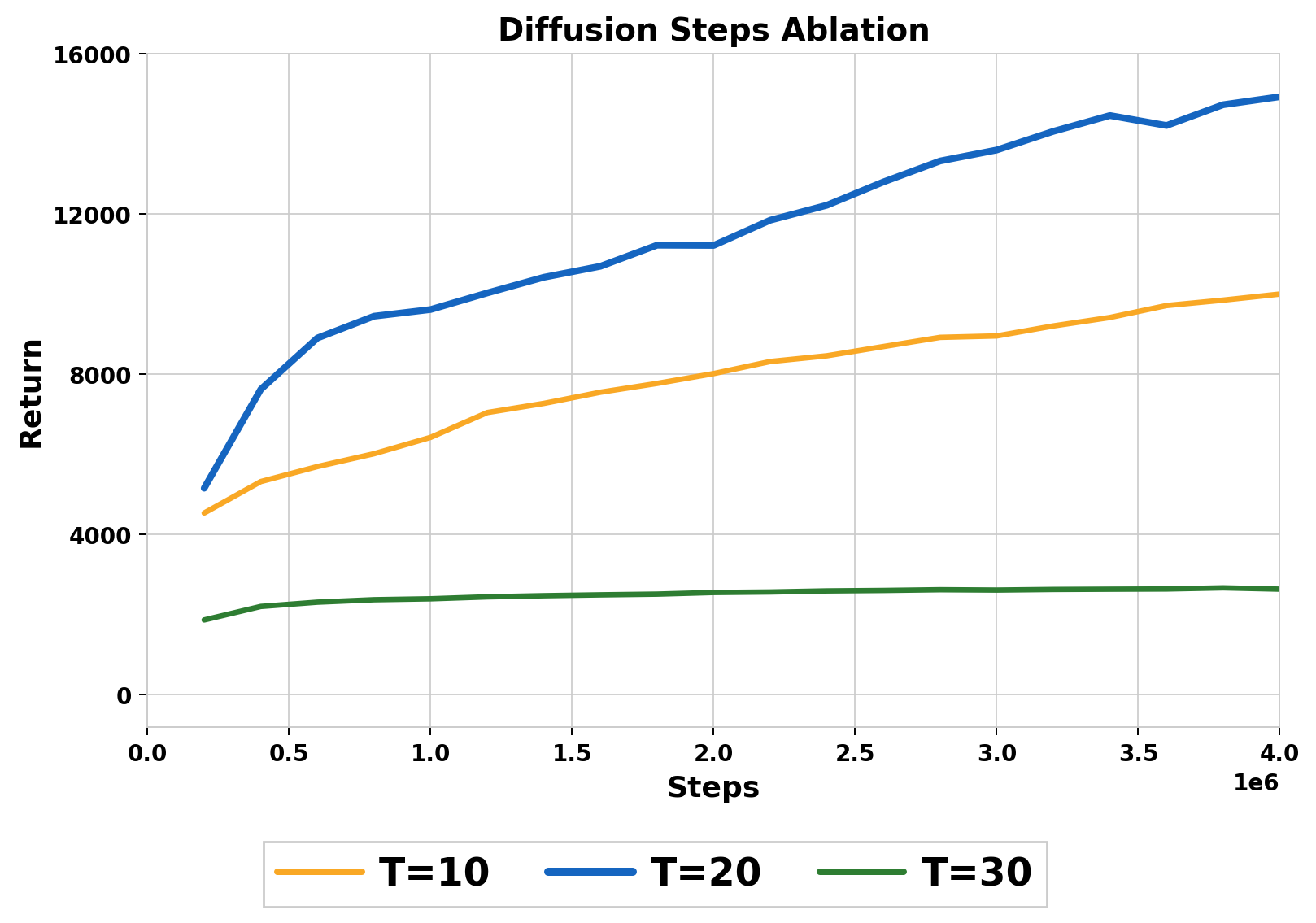}
        \caption{\small Effect of the number of diffusion steps.}
    \end{subfigure}

    \vspace{-2mm}
    \caption{{\small Ablation study of DoobPPO on OpenAI Gym MuJoCo \texttt{HalfCheetah-v4}.
    \textbf{(a)} Effect of the ratio network architecture on policy performance.
    \textbf{(b)} Different diffusion steps lead to significantly different performance, with moderate steps performing best.}}
    \label{fig:ablation_two}
\end{figure}

\subsection{Diffusion Policy.}

\paragraph{Ablation Study on Ratio Network Size.}
We study the sensitivity of DoobPPO to the ratio-network architecture on \texttt{HalfCheetah-v4}.
We compare three variants: an MLP ratio network with 3 layers of width 256, a deeper MLP with 10 layers of width 256, and a larger ResNet-style architecture with 20 layers of width 512.
As shown in Figure~\ref{fig:ablation_two}, the simple 3$\times$256 MLP achieves the best performance, consistently outperforming both the deeper MLP and the larger ResNet-style model.
This suggests that increasing the capacity of the ratio network does not necessarily improve performance, and may even make optimization less effective.
Based on this result, we use the 3$\times$256 MLP ratio network in all diffusion-policy experiments.

\paragraph{Ablation Study on Diffusion Steps.}
We also study the sensitivity of DoobPPO to the number of diffusion steps on \texttt{HalfCheetah-v4}.
We compare three settings, namely $T=10$, $T=20$, and $T=30$.
The results show that $T=20$ achieves the best overall performance, while $T=10$ remains competitive but consistently underperforms $T=20$, and $T=30$ leads to a substantial degradation.
These results suggest that DoobPO benefits from a moderately sized diffusion process rather than a very shallow or overly long denoising chain.
Therefore, we use $20$ diffusion steps in all diffusion-policy experiments.

\subsection{Flow Policy.}

\begin{figure}[h]
    \centering
    \begin{subfigure}{0.49\linewidth}
        \centering
        \includegraphics[width=\linewidth]{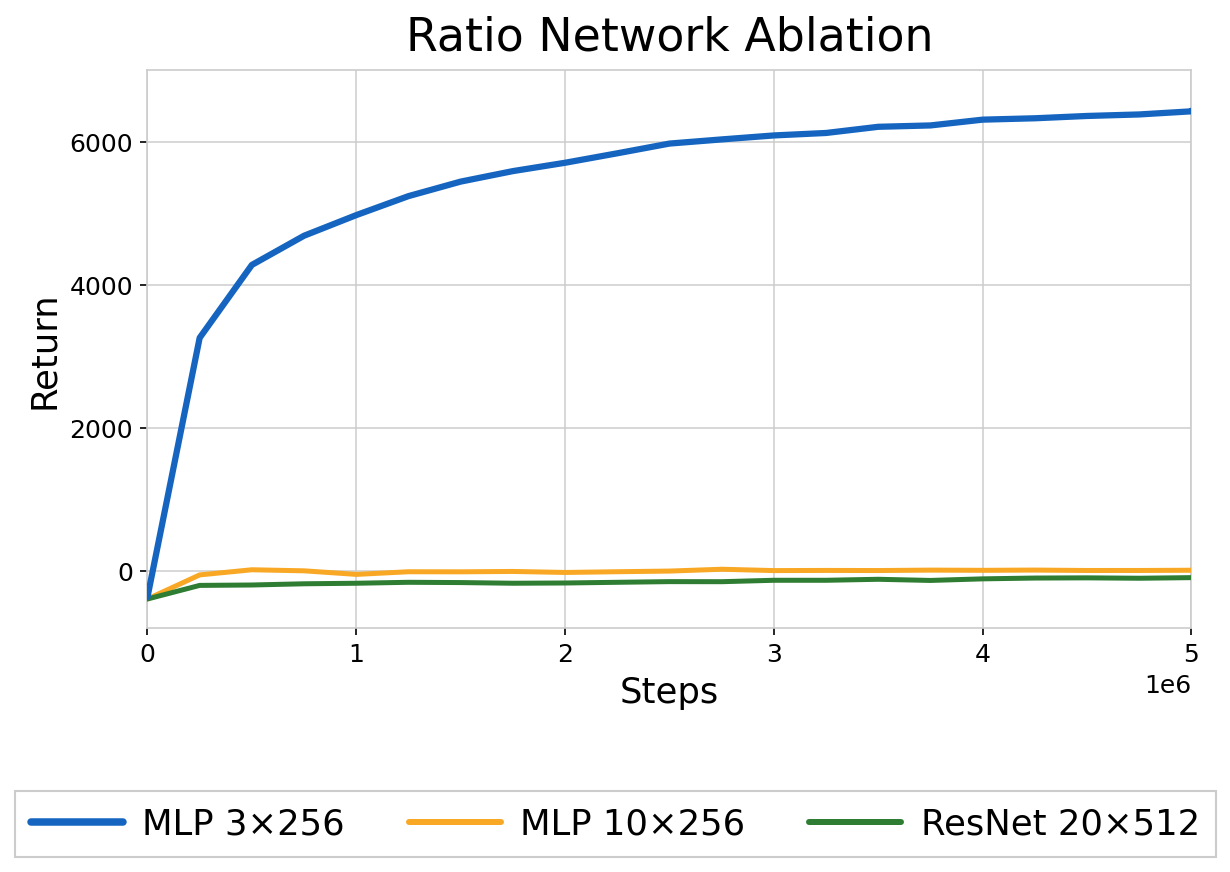}
        \caption{\small Effect of ratio network architecture.}
    \end{subfigure}
    \hfill
    \begin{subfigure}{0.49\linewidth}
        \centering
        \includegraphics[width=\linewidth]{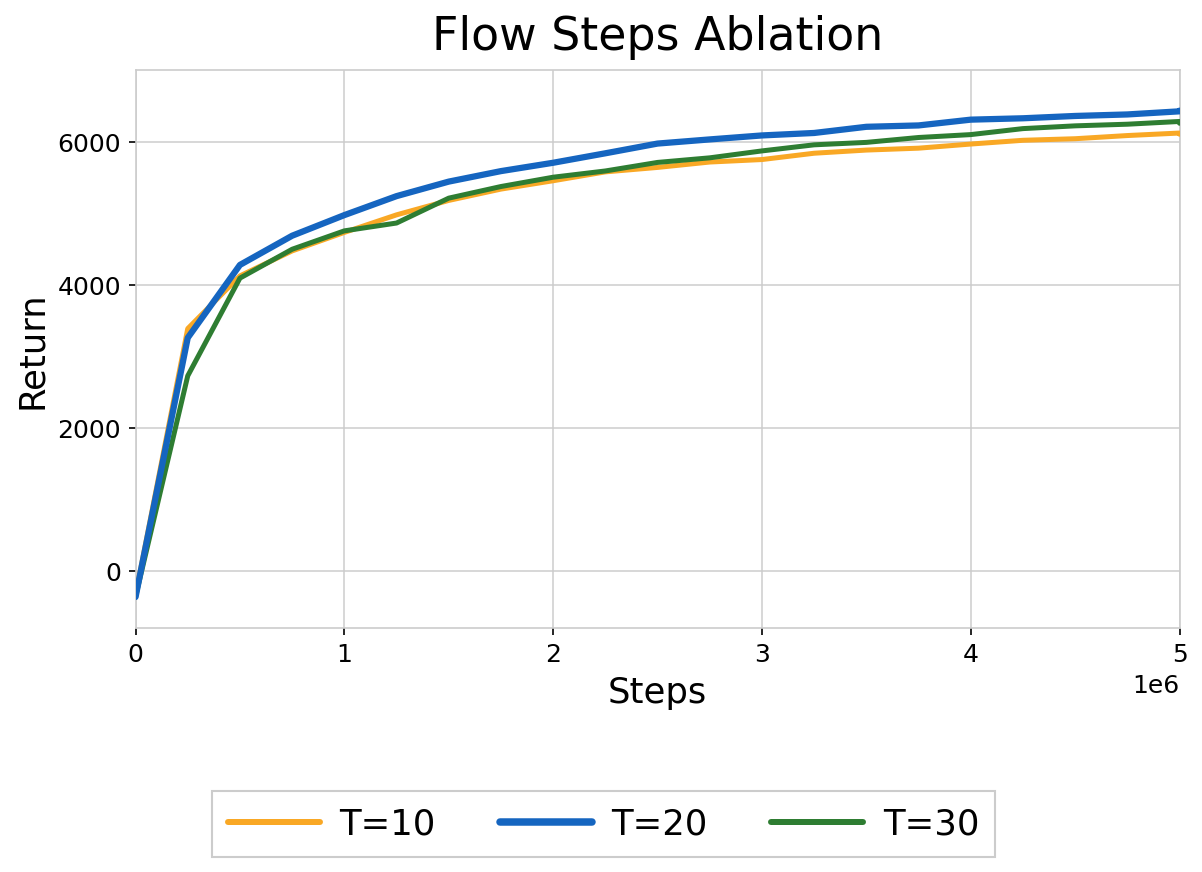}
        \caption{\small Effect of the number of flow steps.}
    \end{subfigure}

    \vspace{-2mm}
    \caption{{\small Ablation study of DoobGRPO flow policy on OpenAI Gym MuJoCo \texttt{HalfCheetah-v4}.
    \textbf{(a)} Only the lightweight MLP 3$\times$256 ratio network learns effectively, while deeper and larger architectures fail to train.
    \textbf{(b)} Performance is robust to the number of flow steps, with $T{=}20$ achieving the best result and $T{=}10$, $T{=}30$ remaining competitive.}}
    \label{fig:ablation_flow}
\end{figure}

\paragraph{Ablation Study on Ratio Network Size.}
We study the sensitivity of DoobGRPO to the ratio-network architecture on \texttt{HalfCheetah-v4} with flow policies.
We compare the same three variants as in the diffusion-policy setting: an MLP ratio network with 3 layers of width 256, a deeper MLP with 10 layers of width 256, and a ResNet-style architecture with 20 blocks of width 512.
As shown in Figure~\ref{fig:ablation_flow}(a), the simple 3$\times$256 MLP achieves the best performance with a peak return of 6433, while both the deeper MLP (peak return 23) and the larger ResNet (peak return $-$94) fail to learn meaningful policies.
This is consistent with the diffusion-policy ablation, confirming that a lightweight ratio network is critical for stable training across both generative-model backbones.

\paragraph{Ablation Study on Flow Steps.}
We also study the sensitivity of DoobGRPO to the number of flow integration steps on \texttt{HalfCheetah-v4}.
We compare $T=10$, $T=20$, and $T=30$.
As shown in Figure~\ref{fig:ablation_flow}(b), the default setting $T=20$ achieves the best peak performance (6433), while $T=10$ (6120) and $T=30$ (6282) remain competitive but slightly underperform $T=20$.
These results suggest that DoobPO is relatively robust to the choice of flow steps, with $T=20$ providing the best trade-off between sampling quality and computational cost.
We use $20$ flow steps in all flow-policy experiments.

\clearpage
\section{Algorithms}

\subsection{Diffusion Policy Optimization with DoobPO}

\begin{algorithm}[h]
\caption{Diffusion Policy Optimization with DoobPO}
\label{alg:rwdm_ppo}
\begin{algorithmic}[1]
\Require Environment MDP $\mathcal{M}=(\mathcal{S},\mathcal{A},P,R,\gamma)$;
pre-trained score model $g_{\phi}$;
reinforcement learning objective $J_{\mathrm{RL}}$;
advantage estimator $\hat{A}$;
ratio network $r_\beta$;
ratio regularization coefficient $\lambda_{\mathrm{reg}}$;
iteration $K$
\State \textbf{Initialize:} drift model $\theta_1=\phi$; ratio net $\beta_1$.
\For{$k=1,2,\ldots,K$}
    \State Add rollout samples to replay buffer $\mathcal{D}$ from $\mathcal{M}$ using current policy $\pi^k$ induced by $g_{\theta_k}$
    \State Estimate advantage of current policy $A^{\pi^k}\leftarrow \hat{A}(\mathcal{D})$

    \State \textbf{Ratio learning update}
    \State Update $\beta$ by optimizing the regularized reinforcement learning objective
    \begin{align*}
        \beta_{k+1}
        \leftarrow
        \argmax_{\beta}
        \Big\{
        J_{\mathrm{RL}}(\beta)
        -
        \lambda_{\mathrm{reg}}
        \E_{o\sim\mathcal D}
        \Big[
        \big(
        \E_{a\sim \pi^k(\cdot|o)}[r_\beta(o,a)]-1
        \big)^2
        \Big]
        \Big\}.
    \end{align*}

    \State \textbf{Doob's drift matching update}
    \State For each $(o,a)\in\mathcal{D}$, sample $t\sim U[0,T]$ and generate noised action $x_t$
    \begin{align*}
        x_t \leftarrow \sqrt{\bar\alpha_t}\,a
    + \sqrt{1-\bar\alpha_t}\,z,
\quad z\sim\mathcal N(0,I)
    \end{align*}
    \State Update $\theta$ by minimizing the conditional drift matching objective
    \begin{align*}
        \theta_{k+1}\leftarrow \argmin_{\theta}
        \E_{(o,a,x_t,t)\sim\mathcal{D}}
        \Big[
        r_{\beta_{k+1}}(o,a)
        \Big\|
        -\frac{x_t-\sqrt{\bar\alpha_t}\,a}
    {1-\bar\alpha_t}
    -g_\theta(x_t,t,o)
        \Big\|_2^2
        \Big]
    \end{align*}
\EndFor
\State \textbf{Output:} Policy $\pi^{K+1}$ with learned drift $g_{\theta_{K+1}}$
\end{algorithmic}
\end{algorithm}

\clearpage
\subsection{Flow Policy Optimization with DoobPO}

\begin{algorithm}[h]
\caption{Flow Policy Optimization with DoobPO}
\label{alg:rwdm_ppo_fm}
\begin{algorithmic}[1]
\Require Environment MDP $\mathcal{M}=(\mathcal{S},\mathcal{A},P,R,\gamma)$;
pre-trained velocity model $u_{\phi}$;
reinforcement learning objective $J_{\mathrm{RL}}$;
advantage estimator $\hat{A}$;
ratio network $r_\beta$;
ratio regularization coefficient $\lambda_{\mathrm{reg}}$;
iteration $K$
\State \textbf{Initialize:} velocity model $\theta_1=\phi$; ratio net $\beta_1$.
\For{$k=1,2,\ldots,K$}
    \State Add rollout samples to replay buffer $\mathcal{D}$ from $\mathcal{M}$ using current policy $\pi^k$ induced by $u_{\theta_k}$
    \State Estimate advantage of current policy $A^{\pi^k}\leftarrow \hat{A}(\mathcal{D})$

    \State \textbf{Ratio learning update}
    \State Update $\beta$ by optimizing the regularized reinforcement learning objective
    \begin{align*}
        \beta_{k+1}
        \leftarrow
        \argmax_{\beta}
        \Big\{
        J_{\mathrm{RL}}(\beta)
        -
        \lambda_{\mathrm{reg}}
        \E_{o\sim\mathcal D}
        \Big[
        \big(
        \E_{a\sim \pi^k(\cdot|o)}[r_\beta(o,a)]-1
        \big)^2
        \Big]
        \Big\}.
    \end{align*}

    \State \textbf{Drift matching update}
    \State For each $(o,a)\in\mathcal{D}$, sample $t \sim U[0,T]$ and generate intermediate action $x_t$ by
    \begin{align*}
        x_t \leftarrow \alpha_t z
        + \beta_t a,
        \quad z \sim N(0,I)
    \end{align*}
    \State Update $\theta$ by minimizing the conditional drift matching objective
    \begin{align*}
        \theta_{k+1} \leftarrow \argmin_{\theta}
        \E_{(o,a,x_t,t)\sim \mathcal{D}}
        \Big[ r_{\beta_{k+1}}(o,a)\,
        \|
            \dot{\alpha}_t z + \dot{\beta}_t a
            - u_\theta(x_t,t,o)
        \|_2^2
        \Big]
    \end{align*}
\EndFor
\State \textbf{Output:} Policy $\pi^{K+1}$ with learned velocity $u_{\theta_{K+1}}$
\end{algorithmic}
\end{algorithm}

\clearpage
\def\arxivfont{\rm}
\bibliographystyle{plainnat}

\bibliography{refs}

@article{lipman2024flow,
  title={Flow matching guide and code},
  author={Lipman, Yaron and Havasi, Marton and Holderrieth, Peter and Shaul, Neta and Le, Matt and Karrer, Brian and Chen, Ricky TQ and Lopez-Paz, David and Ben-Hamu, Heli and Gat, Itai},
  journal={arXiv preprint arXiv:2412.06264},
  year={2024}
}

@article{song2020score,
  title={Score-based generative modeling through stochastic differential equations},
  author={Song, Yang and Sohl-Dickstein, Jascha and Kingma, Diederik P and Kumar, Abhishek and Ermon, Stefano and Poole, Ben},
  journal={arXiv preprint arXiv:2011.13456},
  year={2020}
}

@book{sutton1998reinforcement,
  title={Reinforcement learning: An introduction},
  author={Sutton, Richard S and Barto, Andrew G and others},
  year={1998},
  publisher={MIT press Cambridge}
}

@article{schulman2017proximal,
  title={Proximal policy optimization algorithms},
  author={Schulman, John and Wolski, Filip and Dhariwal, Prafulla and Radford, Alec and Klimov, Oleg},
  journal={arXiv preprint arXiv:1707.06347},
  year={2017}
}

@article{ren2024diffusion,
  title={Diffusion policy policy optimization},
  author={Ren, Allen Z and Lidard, Justin and Ankile, Lars L and Simeonov, Anthony and Agrawal, Pulkit and Majumdar, Anirudha and Burchfiel, Benjamin and Dai, Hongkai and Simchowitz, Max},
  journal={arXiv preprint arXiv:2409.00588},
  year={2024}
}

@article{chi2025diffusion,
  title={Diffusion policy: Visuomotor policy learning via action diffusion},
  author={Chi, Cheng and Xu, Zhenjia and Feng, Siyuan and Cousineau, Eric and Du, Yilun and Burchfiel, Benjamin and Tedrake, Russ and Song, Shuran},
  journal={The International Journal of Robotics Research},
  volume={44},
  number={10-11},
  pages={1684--1704},
  year={2025},
  publisher={Sage Publications Sage UK: London, England}
}

@article{intelligence2025pi,
  title={$\pi_{0.6}^*$: a VLA That Learns From Experience},
  author={Intelligence, Physical and Amin, Ali and Aniceto, Raichelle and Balakrishna, Ashwin and Black, Kevin and Conley, Ken and Connors, Grace and Darpinian, James and Dhabalia, Karan and DiCarlo, Jared and others},
  journal={arXiv preprint arXiv:2511.14759},
  year={2025}
}

@article{park2025flow,
  title={Flow q-learning},
  author={Park, Seohong and Li, Qiyang and Levine, Sergey},
  journal={arXiv preprint arXiv:2502.02538},
  year={2025}
}

@article{zhang2025reinflow,
  title={ReinFlow: Fine-tuning flow matching policy with online reinforcement learning},
  author={Zhang, Tonghe and Yu, Chao and Su, Sichang and Wang, Yu},
  journal={arXiv preprint arXiv:2505.22094},
  year={2025}
}

@article{black2024pi,
  title={$\pi_0$: A Vision-Language-Action Flow Model for General Robot Control},
  author={Black, Kevin and Brown, Noah and Driess, Danny and Esmail, Adnan and Equi, Michael and Finn, Chelsea and Fusai, Niccolo and Groom, Lachy and Hausman, Karol and Ichter, Brian and others},
  journal={CoRR},
  year={2024}
}

@article{lipman2022flow,
  title={Flow matching for generative modeling},
  author={Lipman, Yaron and Chen, Ricky TQ and Ben-Hamu, Heli and Nickel, Maximilian and Le, Matt},
  journal={arXiv preprint arXiv:2210.02747},
  year={2022}
}

@article{williams1992simple,
  title={Simple statistical gradient-following algorithms for connectionist reinforcement learning},
  author={Williams, Ronald J},
  journal={Machine learning},
  volume={8},
  number={3},
  pages={229--256},
  year={1992},
  publisher={Springer}
}

@article{shao2024deepseekmath,
  title={Deepseekmath: Pushing the limits of mathematical reasoning in open language models},
  author={Shao, Zhihong and Wang, Peiyi and Zhu, Qihao and Xu, Runxin and Song, Junxiao and Bi, Xiao and Zhang, Haowei and Zhang, Mingchuan and Li, YK and Wu, Yang and others},
  journal={arXiv preprint arXiv:2402.03300},
  year={2024}
}

@inproceedings{schulman2015trust,
  title={Trust region policy optimization},
  author={Schulman, John and Levine, Sergey and Abbeel, Pieter and Jordan, Michael and Moritz, Philipp},
  booktitle={International conference on machine learning},
  pages={1889--1897},
  year={2015},
  organization={PMLR}
}

@article{ma2025efficient,
  title={Efficient Online Reinforcement Learning for Diffusion Policy},
  author={Ma, Haitong and Chen, Tianyi and Wang, Kai and Li, Na and Dai, Bo},
  journal={arXiv preprint arXiv:2502.00361},
  year={2025}
}

@inproceedings{haarnoja2018soft,
  title={Soft actor-critic: Off-policy maximum entropy deep reinforcement learning with a stochastic actor},
  author={Haarnoja, Tuomas and Zhou, Aurick and Abbeel, Pieter and Levine, Sergey},
  booktitle={International conference on machine learning},
  pages={1861--1870},
  year={2018},
  organization={Pmlr}
}

@article{zhan2023policy,
  title={Policy mirror descent for regularized reinforcement learning: A generalized framework with linear convergence},
  author={Zhan, Wenhao and Cen, Shicong and Huang, Baihe and Chen, Yuxin and Lee, Jason D and Chi, Yuejie},
  journal={SIAM Journal on Optimization},
  volume={33},
  number={2},
  pages={1061--1091},
  year={2023},
  publisher={SIAM}
}

@article{song2019generative,
  title={Generative modeling by estimating gradients of the data distribution},
  author={Song, Yang and Ermon, Stefano},
  journal={Advances in neural information processing systems},
  volume={32},
  year={2019}
}

@article{ho2020denoising,
  title={Denoising diffusion probabilistic models},
  author={Ho, Jonathan and Jain, Ajay and Abbeel, Pieter},
  journal={Advances in neural information processing systems},
  volume={33},
  pages={6840--6851},
  year={2020}
}

@article{liu2022flow,
  title={Flow straight and fast: Learning to generate and transfer data with rectified flow},
  author={Liu, Xingchao and Gong, Chengyue and Liu, Qiang},
  journal={arXiv preprint arXiv:2209.03003},
  year={2022}
}

@article{ke20243d,
  title={3d diffuser actor: Policy diffusion with 3d scene representations},
  author={Ke, Tsung-Wei and Gkanatsios, Nikolaos and Fragkiadaki, Katerina},
  journal={arXiv preprint arXiv:2402.10885},
  year={2024}
}

@article{huang2024diffuseloco,
  title={Diffuseloco: Real-time legged locomotion control with diffusion from offline datasets},
  author={Huang, Xiaoyu and Chi, Yufeng and Wang, Ruofeng and Li, Zhongyu and Peng, Xue Bin and Shao, Sophia and Nikolic, Borivoje and Sreenath, Koushil},
  journal={arXiv preprint arXiv:2404.19264},
  year={2024}
}

@article{psenka2023learning,
  title={Learning a diffusion model policy from rewards via q-score matching},
  author={Psenka, Michael and Escontrela, Alejandro and Abbeel, Pieter and Ma, Yi},
  journal={arXiv preprint arXiv:2312.11752},
  year={2023}
}

@article{ding2024diffusion,
  title={Diffusion-based reinforcement learning via q-weighted variational policy optimization},
  author={Ding, Shutong and Hu, Ke and Zhang, Zhenhao and Ren, Kan and Zhang, Weinan and Yu, Jingyi and Wang, Jingya and Shi, Ye},
  journal={Advances in Neural Information Processing Systems},
  volume={37},
  pages={53945--53968},
  year={2024}
}

@article{mcallister2025flow,
  title={Flow matching policy gradients},
  author={McAllister, David and Ge, Songwei and Yi, Brent and Kim, Chung Min and Weber, Ethan and Choi, Hongsuk and Feng, Haiwen and Kanazawa, Angjoo},
  journal={arXiv preprint arXiv:2507.21053},
  year={2025}
}

@article{wang2024diffusionactorcriticentropyregulator,
  title={Diffusion actor-critic with entropy regulator},
  author={Wang, Yinuo and Wang, Likun and Jiang, Yuxuan and Zou, Wenjun and Liu, Tong and Song, Xujie and Wang, Wenxuan and Xiao, Liming and Wu, Jiang and Duan, Jingliang and others},
  journal={Advances in Neural Information Processing Systems},
  volume={37},
  pages={54183--54204},
  year={2024}
}

@article{yang2023policyrepresentationdiffusionprobability,
  title={Policy representation via diffusion probability model for reinforcement learning},
  author={Yang, Long and Huang, Zhixiong and Lei, Fenghao and Zhong, Yucun and Yang, Yiming and Fang, Cong and Wen, Shiting and Zhou, Binbin and Lin, Zhouchen},
  journal={arXiv preprint arXiv:2305.13122},
  year={2023}
}

@inproceedings{sohldickstein2015deep,
  title={Deep Unsupervised Learning using Nonequilibrium Thermodynamics},
  author={Sohl-Dickstein, Jascha and Weiss, Eric and Maheswaranathan, Niru and Ganguli, Surya},
  booktitle={ICML},
  year={2015}
}

@article{song2021scorebased,
  title={Score-Based Generative Modeling through Stochastic Differential Equations},
  author={Song, Yang and Sohl-Dickstein, Jascha and Kingma, Diederik P. and Kumar, Abhishek and Ermon, Stefano and Poole, Ben},
  journal={ICLR},
  year={2021}
}

@article{chen2025pirl,
  title={$\pi$RL: Online rl fine-tuning for flow-based vision-language-action models},
  author={Chen, Kang and Liu, Zhihao and Zhang, Tonghe and Guo, Zhen and Xu, Si and Lin, Hao and Zang, Hongzhi and Zhang, Quanlu and Yu, Zhaofei and Fan, Guoliang and others},
  journal={arXiv preprint arXiv:2510.25889},
  year={2025}
}

@article{reuss2023goalconditionedimitationlearningusing,
  title={Goal-conditioned imitation learning using score-based diffusion policies},
  author={Reuss, Moritz and Li, Maximilian and Jia, Xiaogang and Lioutikov, Rudolf},
  journal={arXiv preprint arXiv:2304.02532},
  year={2023}
}

@inproceedings{chen2023playfusionskillacquisitiondiffusion,
  title={Playfusion: Skill acquisition via diffusion from language-annotated play},
  author={Chen, Lili and Bahl, Shikhar and Pathak, Deepak},
  booktitle={Conference on Robot Learning},
  pages={2012--2029},
  year={2023},
  organization={PMLR}
}

@inproceedings{nguyen2025h,
  title={h-Edit: Effective and Flexible Diffusion-Based Editing via Doob's h-Transform},
  author={Nguyen, Toan and Do, Kien and Kieu, Duc and Nguyen, Thin},
  booktitle={Proceedings of the Computer Vision and Pattern Recognition Conference},
  pages={28490--28501},
  year={2025}
}

@article{zhu2026training,
  title={Training-Free Adaptation of Diffusion Models via Doob's $ h $-Transform},
  author={Zhu, Qijie and Ye, Zeqi and Liu, Han and Wang, Zhaoran and Chen, Minshuo},
  journal={arXiv preprint arXiv:2602.16198},
  year={2026}
}

@article{liu2025adjoint,
  title={Adjoint schr{\"o}dinger bridge sampler},
  author={Liu, Guan-Horng and Choi, Jaemoo and Chen, Yongxin and Miller, Benjamin K and Chen, Ricky TQ},
  journal={Advances in Neural Information Processing Systems},
  volume={38},
  pages={15673--15708},
  year={2025}
}

@article{mandlekar2021matters,
  title={What Matters in Learning from Offline Human Demonstrations for Robot Manipulation},
  author={Mandlekar, Ajay and Xu, Danfei and Wong, Josiah and Nasiriany, Soroush and Wang, Chen and Kulkarni, Rohun and Fei-Fei, Li and Savarese, Silvio and Zhu, Yuke and Mart{\'\i}n-Mart{\'\i}n, Roberto},
  journal={arXiv preprint arXiv:2108.03298},
  year={2021}
}

\end{document}